\newtheorem{theorem}{Theorem}
\newtheorem*{theorem*}{Theorem}
\newtheorem{lemma}{Lemma}
\newtheorem*{lemma*}{Lemma}
\newtheorem{claim}{Claim}
\newtheorem{definition}{Definition}
\theoremstyle{definition}
\newtheorem{remark}{Remark}
\newenvironment{claimproof}{\noindent\emph{Proof of claim.}}{\hfill$\qed$}
\newcommand{\cmt}[1]{}
\newcommand{\vect}[1]{\ensuremath{\bm{#1}}}
\newcommand{\E}{\ensuremath{\mathbb{E}}}
\begin{document}

\title{Online Non-Monotone DR-Submodular Maximization\thanks{This work is supported by the ANR project OATA n\textsuperscript{o} ANR-15-CE40-0015-01}} 
\author{
  Nguyễn Kim Thắng\thanks{kimthang.nguyen@univ-evry.fr} }  
\author{
  Abhinav Srivastav\thanks{abhinavsriva@gmail.com}}
\affil{IBISC, Univ. Evry, University Paris-Saclay, France } 
\date{}
\maketitle

\begin{abstract}

In this paper, we study fundamental problems of maximizing DR-submodular continuous functions that have real-world applications in the domain of 
machine learning, economics, operations research and communication systems. It captures a subclass of non-convex optimization that provides both 
theoretical and practical guarantees. 
Here, we focus on 
minimizing regret for online arriving non-monotone DR-submodular functions 
over different types of convex sets: hypercube, down-closed and general convex sets.

First, we present an online algorithm that achieves a $1/e$-approximation ratio with 
the regret of $O(T^{2/3})$ for maximizing DR-submodular functions over any down-closed convex set. 
Note that, the approximation ratio of $1/e$ matches the best-known guarantee for the offline version of the problem. 
Moreover, when the convex set is the hypercube, we propose a tight 1/2-approximation algorithm with regret bound of 
$O(\sqrt{T})$.
Next, we give an online algorithm that achieves an approximation guarantee (depending on the search space) for the problem of maximizing non-monotone continuous DR-submodular functions over a \emph{general} convex set (not necessarily down-closed). To best of our knowledge, no prior algorithm with approximation guarantee was known for non-monotone DR-submodular 
maximization in the online setting. 
Finally we run experiments to verify the performance of our algorithms on problems arising in machine learning domain with the real-world datasets.  
\end{abstract}


\section{Introduction}

Continuous DR-submodular optimization is a subclass of non-convex optimization that is an upcoming frontier in machine learning.
Roughly speaking, a differentiable non-negative bounded function $F: [0,1]^{n} \rightarrow [0,1]$ is \emph{DR-submodular} if 
$\nabla F(\vect{x}) \geq \nabla F(\vect{y})$ for all $\vect{x}, \vect{y} \in [0,1]^{n}$ where $x_{i} \leq y_{i}$ 
for every $1 \leq i \leq n$. (Note that, w.l.o.g. after normalization, assume that $F$ has values in $[0,1]$.) 
Intuitively, continuous DR-submodular functions 
represent the diminishing returns property or the economy of scale in continuous domains. 
DR-submodularity has been of great interest
\cite{BianBuhmann18:Optimal-DR-Submodular,BianLevy17:Non-monotone-Continuous,ChenHassani18:Online-continuous, HassaniSoltanolkotabi17:Gradient-methods, NiazadehRoughgarden18:Optimal-Algorithms,StaibJegelka17:Robust-Budget}.
Many problems arising in machine learning and statistics, 
such as Non-definite Quadratic Programming \cite{ItoFujimaki16:Large-scale-price}, 
Determinantal Point Processes \cite{KuleszaTaskar12:Determinantal-point}, 
log-submodular models \cite{DjolongaKrause14:From-MAP-to-marginals:}, to name a few, have been modelled using the notion of continuous DR-submodular functions.


In the past decade, online computational framework has been quite successful for tackling a wide variety of challenging problems and capturing many real-world problems with uncertainty.
In this computational framework, we focus on the model of online learning. In online learning, at any time step, given a history of actions and a set of associated reward functions, online algorithm first chooses an action from a set of feasible actions; then, an adversary subsequently selects a reward function.  The objective is to perform as good as the best fixed action in hindsight. 
This setting have been extensively explored in the literature, especially in context of convex functions~\cite{Hazanothers16:Introduction-to-online}.  

In fact, several algorithms with theoretical approximation guarantees are known for maximizing (offline and online) DR-submodular functions. However, these guarantees hold under the assumptions that are based on the monotonicity of functions coupled with the structure of convex sets such as unconstrained hypercube and down-closed. Though, a majority of real-world problems can be formulated as \emph{non-monotone} DR-submodular functions over convex sets that might not be necessarily down-closed. 
For example, the problems of Determinantal Point Processes, log-submodular models, etc can be viewed as non-monotone DR-submodular maximization problems.
Besides, general convex sets include conic convex sets,  up-closed convex sets, mixture of covering and packing linear constraints, etc. which appear in many applications. Among others, conic convex sets play a crucial role in convex optimization. Conic programming~\cite{BoydVandenberghe04:Convex-optimization} --- an important subfield of convex optimization --- consists of 
optimizing an objective function over a conic convex set. Conic programming reduces to linear programming and semi-definite programming when the objective function is linear and the convex cones are 
the positive orthant $\mathbb{R}^{n}_{+}$  
and positive semidefinite matrices $\mathbb{S}^{n}_{+}$, respectively. Optimizing non-monotone DR-submodular functions over a (bounded) conic convex set (not necessarily downward-closed) is an interesting and important problem both in theory and in practice.  To best of our knowledge, no prior work has been done on maximizing DR-submodular functions over conic sets in online setting. The limit of current theory \cite{BianBuhmann18:Optimal-DR-Submodular,BianLevy17:Non-monotone-Continuous,ChenHassani18:Online-continuous,HassaniSoltanolkotabi17:Gradient-methods,NiazadehRoughgarden18:Optimal-Algorithms, Zhang:2019} motivates us to develop online algorithms for non-monotone functions. 

In this work, we explore the \emph {online} problem of maximizing \emph{non-monotone} DR-submodular functions over a hypercube, down-closed\footnote{$\mathcal{K}$ is \emph{down-closed} if for every $\vect{z} \in \mathcal{K}$ and $\vect{y} \leq \vect{z}$
then $\vect{y} \in \mathcal{K}$} and over a \emph{general} convex sets. 
Formally, we consider the following setting for DR-submodular maximization: given a convex domain $\mathcal{K} \subseteq [0,1]^n$ in advance, at each time step $t = 1,2, \ldots, T$, the online algorithm first selects a vector $\vect{x}^t \in \mathcal{K}$. Subsequently, the adversary reveals a non-monotone continuous DR-submodular function $F^{t}$ 
and the algorithm receives a reward of $F^t(\vect{x}^t)$. The objective is also to maximize the total reward. 
We say that an algorithm achieves a $(r, R(T))$-\emph{regret} if  
\begin{align*}
\sum_{t=1}^T F^t(\vect{x}^t) 
		 	&\geq r \cdot \max_{\vect{x} \in \cal K} \sum_{t=1}^T F^t(\vect{x}) - R(T) 
\end{align*}
In other words, $r$ is the approximation ratio that measures 
the quality of the online algorithm compared to the best fixed solution in hindsight and $R(T)$ represents the regret in the classic terms. 
Equivalently, we say that the algorithm has $r$-\emph{regret} at most $R(T)$. 
Our goal is to design online algorithms with $(r, R(T))$-regret where $0 < r \leq 1$ is as large as possible, 
and $R(T)$ is sub-linear in $T$, i.e., $R(T) = o(T)$.

\subsection{Our contributions and techniques}

In this paper, we provide algorithms with performance guarantees for the problem over the hypercube $[0,1]^{n}$, 
over down-closed and general convex sets.  
Our contributions and techniques are summarized as follows (also see Table~\ref{table:DR-submodular}, the entries in the red correspond to our contribution). 

\newcommand{\STAB}[1]{\begin{tabular}{@{}c@{}}#1\end{tabular}}
\renewcommand{\arraystretch}{1.5}

\begin{table}[th]
  \small
  \centering
  \begin{tabular}{|c|c||c|c|}
    \hline
     \multicolumn{2}{|c||}{} & Monotone & Non-monotone\\
    \hline
    \hline
    \multirow{3}{*}{\STAB{\rotatebox[origin=c]{90}{Offline}}}& hypercube &  & $\frac{1}{2}$-approx \cite{BianBuhmann18:Optimal-DR-Submodular,NiazadehRoughgarden18:Optimal-Algorithms} \\
    		\cline{2-4}
    		 & down-closed 
		 & $\left( 1 - \frac{1}{e} \right)$-approx \cite{BianMirzasoleiman17:Guaranteed-Non-convex}  
		 & $\left(\frac{1}{e} \right)$-approx \cite{BianLevy17:Non-monotone-Continuous}    \\
		 \cline{2-4}
		 & general  & $\left(1 - \frac{1}{e} \right)$-approx \cite{MokhtariHassani18:Conditional-Gradient}
		 &$\left(  \frac{1- \min_{\vect{x} \in \mathcal{K}} \|\vect{x}\|_{\infty} }{3\sqrt{3}} \right)$-approx \cite{DurrThang20:Non-monotone-DR-submodular}   \\
    \hline
    \hline
     \multirow{3}{*}{\STAB{\rotatebox[origin=c]{90}{Online}}}& hypercube & & \color{red}{$\left( 1/2, O\bigl(\sqrt{T}\bigr) \right)$}  \\
    		\cline{2-4}
    		 & down-closed  &  & 
		 	\color{red}{$\left(\frac{1}{e}, O\bigl(T^{2/3}\bigr) \right)$}\\
		 \cline{2-4}
		 & general  &  $\left( 1 - \frac{1}{e}, O\bigl(\sqrt{T}\bigr) \right)$ \cite{ChenHassani18:Online-continuous}  & \color{red}{$\left(  \frac{1- \min_{\vect{x} \in \mathcal{K}} \|\vect{x}\|_{\infty} }{3\sqrt{3}}, O\bigl(\frac{T}{\ln T}\bigr) \right)$}    \\
    \hline
  \end{tabular}
  \vspace{0.2cm}
  \caption{Summary of results on DR-submodular maximization.
  	The entries represent the best-known $(r,R(T))$-regret; our results are shown in red. 
	The guarantees in blank cells can be deduced from a more general one.
	}
  \label{table:DR-submodular}
\end{table}

\subsubsection{Maximizing online non-monotone DR-submodular functions over down-closed convex sets} 

We present an online algorithm that achieves 
$\bigl(1/e, O(T^{2/3})\bigr)$-regret in expectation
where $T$ is number of time steps.  
Our algorithm is inspired by the Meta-Frank-Wolfe algorithm introduced by Chen et al.~\cite{ChenHassani18:Online-continuous} 
for \emph{monotone} DR-submodular maximization. 
Their meta Frank-Wolfe algorithm combines the framework of meta-actions proposed in \cite{StreeterGolovin09:An-online-algorithm} with a variant of the Frank-Wolfe proposed in \cite{BianMirzasoleiman17:Guaranteed-Non-convex} for maximizing monotone DR-submodular functions.
Informally, at every time $t$, our algorithm starts from the origin $\vect{0}$ ($\vect{0} \in \mathcal{K}$ since $\mathcal{K}$ is down-closed) and executes $L$ steps of the Frank-Wolfe algorithm 
where every update vector at iteration $1 \leq \ell \leq L$ is constructed by 
combining the output of an optimization oracle $\mathcal{E}_{\ell}$
and the current vector so that we can exploit the \emph{concavity property in positive directions} of DR-submodular functions. 
The solution $\vect{x}^{t}$ is produced at the end of the $L$-th step.
Subsequently, after observing the function $F^{t}$, the algorithm subtly defines a vector $\vect{d}^{t}_{\ell}$ and feedbacks  
$\langle \cdot, \vect{d}^t_{\ell} \rangle$ as the reward function at time $t$ to the oracle $\mathcal{E}_{\ell}$ for $1 \leq \ell \leq L$.
The most important distinguishing point of our algorithm compared 
to that in \cite{ChenHassani18:Online-continuous} relies on the oracles. 
In their algorithm, they used linear oracles which are appropriate for maximizing monotone DR-submodular functions.
However, it is not clear whether \emph{linear or even convex oracles} are strong enough to achieve a constant approximation for non-monotone functions. 

While aiming for a $1/e$-approximation --- the best known approximation in the offline setting --- we consider 
the following online non-convex problem, referred throughout this paper as \emph{online vee learning} problem.
At each time $t$, the online algorithm knows 
$\vect{c}^t \in \mathcal{K}$ in advance and selects a vector $\vect{x}^t \in \mathcal{K}$. 
Subsequently, the adversary reveals a vector $\vect{a}^t \in \mathbb{R}^{n}$ 
and the algorithm receives a reward $\langle \vect{a}^t, \vect{c}^t \vee \vect{x}^t \rangle$.
Given two vectors $\vect{x}$ and $\vect{y}$, the vector $\vect{x} \vee \vect{y}$ has the $i^{th}$ coordinate   
$(x \vee y)_{i} = \max\{x(i), y(i)\}$.
The goal is to maximize the total reward over a time horizon $T$.


In order to bypass the non-convexity of the online vee learning problem, we propose the following methodology. 
Unlike dimension reduction techniques that aim at reducing the dimension of the search space, 
we \emph{lift} the search space and the functions to a higher dimension so as to benefit from their properties in this new space.  
Concretely, at a high level, given a convex set $\mathcal{K}$, we define a ``sufficiently dense'' lattice $\mathcal{L}$ in $[0,1]^{n}$ such that every point in $\mathcal{K}$ can be approximated by a point in the lattice. The term ``sufficiently dense'' corresponds to the fact that the values of any Lipschitz function can be approximated by the function values at lattice points. As the next step, we lift all lattice points in $\mathcal{K}$ to a high dimension space so that they form a subset of vertices (corners points) $\widetilde{\mathcal{K}}$ of the unit hypercube in the newly defined space. Interestingly, the reward function 
$\langle \vect{a}^t, \vect{c}^t \vee \vect{x}^t \rangle$ can be transformed to a linear function 
in the new space. Hence, our algorithm for the online vee problem consists of  
updating at every time step a solution in the high-dimension space using a linear oracle and projecting the solution back to the original space. 

Once the solution is projected back to original space, we
construct appropriate update vectors for the original DR-submodular maximization and feedback rewards for the online vee oracle.
Exploiting the underlying properties of DR-submodularity, we show that our algorithm achieves the regret bound of $(1/e, O(T^{2/3}))$.


\subsubsection{Maximizing online non-monotone DR-submodular functions over the hypercube}

Next, we consider a particular down-closed convex set, the hypercube $[0,1]^{n}$, that has 
been widely studied in the context of submodular maximization. In offline setting, several algorithms with tight approximation 
guarantee of $1/2$ are known \cite{BianBuhmann18:Optimal-DR-Submodular,NiazadehRoughgarden18:Optimal-Algorithms}; however it is not clear how to extend those algorithms to the online setting. 
Besides, in the context of discrete setting, Roughgarden and Wang~\cite{RoughgardenWang18:An-Optimal-Learning} 
recently gave 1/2-approximation algorithm for the problem of maximizing online (discrete) submodular functions.
In the light of this, an intriguing open question is that if one can achieve $1/2$-approximation for DR-submodular maximization in the continuous setting. Note that several properties used for achieving $1/2$-approximation solutions in the discrete setting or 
in the offline continuous setting do not necessarily hold in the online settings.  

Building upon our salient ideas from previous algorithm,  we present an online algorithm that achieves $1/2$-approximation for DR-submodular maxmization over the hypercube. Essentially, we reduce the DR-submodular maximization over $[0,1]^{n}$ to 
the problem of maximizing  discrete submodular functions over vertices of the hypercube in a higher dimensional space. We do it also by \emph{lifting} points of a lattice in $[0,1]^{n}$ to vertices in $\{0,1\}^{n \times \log T}$ and prove that this operation transforms DR-submodular functions to submodular set-functions.
Applying algorithm of \cite{RoughgardenWang18:An-Optimal-Learning} and performing an independent rounding, 
we provide an algorithm with the tight approximation ratio of $1/2$ while maintaining the $1/2$-regret bound of $O(\sqrt{T})$.            

\subsubsection{Maximizing online non-monotone DR-submodular functions over general convex sets}
For general convex sets, we prove that the Meta-Frank-Wolfe algorithm with adapted step-sizes guarantees 
$\left(\frac{1-\min_{\vect{x} \in \mathcal{K}} \|\vect{x}\|_{\infty} }{3\sqrt{3}}, O\bigl(\frac{T}{\ln T}\bigr) \right)$-regret 
in expectation. Notably, if the domain $\cal K$ 
contains $\vect{0}$ (as in the case where $\cal K$ is the intersection of a cone and the hypercube $[0,1]^{n}$) then  
the algorithm guarantees in expectation a $\bigl( \frac{1}{3\sqrt{3}}, O\bigl( \frac{T}{\log T} \bigr) \bigr)$-regret.
To the best of our knowledge, this is the \emph{first} constant-approximation algorithm for the problem of  
online non-monotone DR-submodular maximization over a non-trivial convex set that goes beyond down-closed sets.
Remark that the quality of the solution, specifically the approximation ratio, depends on the initial solution 
$\vect{x}_{0}$. This confirms an observation in various contexts that initialization plays an important role in non-convex optimization.

\subsection{Related Work}

In this section, we give a summary on best-known results on 
DR-submodular maximization. 
The domain has been investigated more extensively 
in recent years due to its numerous applications in the field of statistics and machine learning, for example 
active learning \cite{GolovinKrause11:Adaptive-submodularity:}, 
viral makerting \cite{KempeKleinberg03:Maximizing-the-spread}, 
network monitoring \cite{Gomez-RodriguezLeskovec10:Inferring-networks}, 
document summarization \cite{LinBilmes11:A-class-of-submodular},
crowd teaching \cite{SinglaBogunovic14:Near-Optimally-Teaching}, 
feature selection \cite{ElenbergKhanna18:Restricted-strong}, 
deep neural networks \cite{ElenbergDimakis17:Streaming-weak}, 
diversity models \cite{DjolongaTschiatschek16:Variational-inference} and
recommender systems \cite{GuilloryBilmes11:Simultaneous-Learning}.

\paragraph{Offline setting.}
Bian et al.~\cite{BianMirzasoleiman17:Guaranteed-Non-convex} considered the problem of maximizing monotone DR-functions subject to a down-closed convex set and showed that the greedy method proposed by \cite{CalinescuChekuri11:Maximizing-a-monotone}, 
a variant of well-known Frank-Wolfe algorithm in convex optimization, guarantees a $(1-1/e)$-approximation.
However, it has been observed by \cite{HassaniSoltanolkotabi17:Gradient-methods} 
that the greedy method is not robust in stochastic settings
(where only unbiased estimates of gradients are available). Subsequently, 
Hassani et al.~\cite{MokhtariHassani18:Conditional-Gradient} proposed a $(1-1/e)$-approximation algorithm 
for maximizing monotone DR-submodular functions over general convex sets in stochastic settings
by a new variance reduction technique. 


The problem of maximizing non-monotone DR-submodular functions is much harder. 
Bian et al.~\cite{BianBuhmann18:Optimal-DR-Submodular} and Niazadeh et al.~\cite{NiazadehRoughgarden18:Optimal-Algorithms}
have independently presented algorithms with the same approximation guarantee of $(1/2)$ for the problem of 
non-monotone DR-submodular maximization over the hypercube ($\mathcal{K} = [0,1]^{n}$).
These algorithms are inspired by the bi-greedy algorithms in \cite{BuchbinderFeldman15:A-tight-linear,BuchbinderFeldman18:Deterministic-algorithms}. 
Bian et al.~\cite{BianLevy17:Non-monotone-Continuous} made a further step by providing a $(1/e)$-approximation algorithm where the convex sets are down-closed. 
Mokhtari et al.~\cite{MokhtariHassani18:Conditional-Gradient} also presented an  algorithm that achieve $(1/e)$ for non-monotone continuous DR-submodular function over a down-closed convex domain that uses only stochastic gradient estimates.
Remark that when aiming for approximation algorithms (in polynomial time), the restriction to down-closed polytopes is unavoidable.
Specifically, Vondrak~\cite{Vondrak13:Symmetry-and-approximability} proved that any algorithm for the problem over a non-down-closed set that guarantees a constant approximation must require in general exponentially many value queries to the function. 
Recently,  Durr et al.~\cite{DurrThang20:Non-monotone-DR-submodular} presented an algorithm that achieves $\bigl((1-\min_{\vect{x} \in \mathcal{K}} \|\vect{x}\|_{\infty})/3\sqrt{3} \bigr)$-approximation for non-monotone DR-submodular function over general convex sets that are not necessarily down-closed.

\paragraph{Online setting.}
The results for online DR-submodular maximization are known \emph{only} for monotone functions. 
Chen et al.~\cite{ChenHassani18:Online-continuous} considered the online problem of maximizing monotone DR-submodular functions over general convex sets and provided an algorithm that achieves a $\bigl(1-1/e, O(\sqrt{T})\bigr)$-regret. 
Subsequently, Zhang et al.~\cite{Zhang:2019} presented an algorithm that reduces the number of per-function gradient evaluations from $T^{3/2}$ in \cite{ChenHassani18:Online-continuous} to $1$ and achieves the same approximation ratio of $(1-1/e)$. Leveraging the idea of one gradient per iteration, they presented a bandit algorithm for maximizing monotone DR-submodular function over a general convex set that achieves in expectation $(1-1/e)$ approximation ratio with regret $O(T^{8/9})$.
Note that in the discrete setting, 
\cite{RoughgardenWang18:An-Optimal-Learning} studied the non-monotone (discrete) submodular maximization over the hypercube and gave an algorithm which guarantees the tight $(1/2)$-approximation and $O(\sqrt{T})$ regret.  
%



\section{Preliminaries and Notations }		\label{sec:pre}

Throughout the paper, we use bold face letters, e.g., $\vect{x}, \vect{y}$ to represent vectors. 
For every $S \subseteq \{1, 2, \ldots, n \}$, vector $\vect{1}_S$ is the $n$-dim vector whose $i^{th}$ coordinate equals to 1 if $i \in S$ and $0$ otherwise. 
Given two $n$-dimensional vectors $\vect{x}, \vect{y}$, we say that $\vect{x} \leq \vect{y}$ iff $x_{i} \leq y_{i}$ for all $1 \leq i \leq n$.
Additionally, we denote by $\vect{x} \vee \vect{y}$, their coordinate-wise maximum vector such that $(x \vee y)_{i} = \max\{x_{i}, y_{i}\}$.
Moreover, the symbol $\circ$ represents the element-wise multiplication that is,  given two vectors $\vect{x}, \vect{y}$, vector $\vect{x} \circ \vect{y}$ is the such that the $i$-th coordinate $(\vect{x} \circ \vect{y})_{i} = x_{i} y_{i}$.
The scalar product $\langle \vect{x}, \vect{y} \rangle = \sum_{i} x_{i}y_{i}$
and the norm $\|\vect{x}\| = \langle \vect{x}, \vect{x} \rangle^{1/2}$. 
In the paper, we assume that  $\mathcal{K} \subseteq [0,1]^{n}$. 
We say that 
$\mathcal{K}$ is the \emph{hypercube} if 
$\mathcal{K} = [0,1]^{n}$;
$\mathcal{K}$ is \emph{down-closed} if for every $\vect{z} \in \mathcal{K}$ and $\vect{y} \leq \vect{z}$
then $\vect{y} \in \mathcal{K}$; and
$\mathcal{K}$ is \emph{general} if $\mathcal{K}$ is a convex subset of $[0,1]^n$ without any special property.

\begin{definition}
A function $F: [0,1]^{n} \rightarrow \mathbb{R}^+ \cup \{0\} $ is \emph{diminishing-return (DR) submodular} if for all vector 
$\vect{x} \geq \vect{y} \in [0,1]^{n}$, any basis vector $\vect{e}_{i} = (0,\ldots,0,1,0,\ldots,0)$ and any constant 
$\alpha > 0$ such that 
$\vect{x} + \alpha \vect{e}_{i} \in [0,1]^{n}$, $\vect{y} + \alpha \vect{e}_{i} \in [0,1]^{n}$, 
it holds that
\begin{align}	\label{def:DR-sub}
	F(\vect{x} + \alpha \vect{e}_{i}) - F(\vect{x}) \leq F(\vect{y} + \alpha \vect{e}_{i}) - F(\vect{y}).
\end{align}
Note that if function $F$ is differentiable then the diminishing-return (DR) property (\ref{def:DR-sub}) is equivalent to 
\begin{align}	\label{def:DR-sub-diff}
\nabla F(\vect{x}) \leq \nabla F(\vect{y}) \qquad \forall \vect{x} \geq \vect{y} \in [0,1]^{n}.
\end{align}
Moreover, if $F$ is twice-differentiable then 
the DR property is equivalent to all of the entries of its Hessian being non-positive, i.e., 
$\frac{\partial^{2} F}{\partial x_{i} \partial x_{j}} \leq 0$ for all $1 \leq i, j \leq n$.
\end{definition}

Besides, a differentiable function $F: \mathcal{K} \subseteq [0,1]^n \rightarrow \mathbb{R}^+ \cup \{ 0 \} $ is said to be \textit{$\beta$-smooth}  if for any $\vect{x}, \vect{y} \in \mathcal{K}$, the following holds, 
\begin{align}	\label{ineq:smooth1}
F(\vect{y}) \leq F(\vect{x}) + \langle \nabla F(\vect{x}), \vect{y} - \vect{x} \rangle + \frac{\beta}{2} \|\vect{y} - \vect{x}\|^{2}
\end{align}
or equivalently, the gradient is $\beta$-Lipschitz, i.e., 
\begin{align}	\label{ineq:smooth2}
	\norm{\nabla F(\vect{x}) - \nabla F(\vect{y})} \leq \beta \norm{\vect{x} - \vect{y}}.
\end{align}


\paragraph{Properties of DR-submodularity}
In the following, we present a property of DR-submodular functions that are are crucial in our analyses.
The most important property is the concavity in positive directions, i.e., for every $\vect{x}, \vect{y} \in \mathcal{K}$
and $\vect{x} \geq \vect{y}$, it holds that 
$$
F(\vect{x}) \leq F(\vect{y}) + \langle \nabla F(\vect{y}), \vect{x} - \vect{y} \rangle.
$$
Addionally, the following lemmas are useful in our analysis. 

\begin{lemma}[\cite{HassaniSoltanolkotabi17:Gradient-methods}] 	\label{lem:prop2}
For every $\vect{x}, \vect{y} \in \mathcal{K}$ and any DR-submodular function $F$, 
it holds that
$$ \langle \nabla F(\vect{x}), \vect{y} - \vect{x}\rangle \geq F(\vect{x} \vee \vect{y}) + F(\vect{x} \wedge \vect{y}) - 2F(\vect{x}) $$
\end{lemma}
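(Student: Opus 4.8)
The plan is to derive the inequality by adding two instances of the concavity-in-positive-directions property, both taken with the common base point $\vect{x}$, and then to collapse the result using the lattice identity $(\vect{x} \vee \vect{y}) + (\vect{x} \wedge \vect{y}) = \vect{x} + \vect{y}$. Concretely, I would bound $F(\vect{x} \vee \vect{y})$ and $F(\vect{x} \wedge \vect{y})$ separately in terms of $F(\vect{x})$ and the single gradient $\nabla F(\vect{x})$, so that when the two bounds are summed the gradient terms combine by linearity of the inner product into exactly $\langle \nabla F(\vect{x}), \vect{y} - \vect{x} \rangle$.

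First I would record the two estimates. Since $\vect{x} \vee \vect{y} \geq \vect{x}$, the concavity property displayed before the lemma, applied with base point $\vect{x}$, gives directly $F(\vect{x} \vee \vect{y}) \leq F(\vect{x}) + \langle \nabla F(\vect{x}), (\vect{x} \vee \vect{y}) - \vect{x} \rangle$. For the meet I would use the analogous estimate $F(\vect{x} \wedge \vect{y}) \leq F(\vect{x}) + \langle \nabla F(\vect{x}), (\vect{x} \wedge \vect{y}) - \vect{x} \rangle$. Adding the two inequalities and using that $(\vect{x} \vee \vect{y}) - \vect{x} + (\vect{x} \wedge \vect{y}) - \vect{x} = (\vect{x} \vee \vect{y}) + (\vect{x} \wedge \vect{y}) - 2\vect{x} = \vect{y} - \vect{x}$ yields $F(\vect{x} \vee \vect{y}) + F(\vect{x} \wedge \vect{y}) \leq 2F(\vect{x}) + \langle \nabla F(\vect{x}), \vect{y} - \vect{x} \rangle$, which is precisely the claim after rearranging.

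The main obstacle is justifying the meet-side estimate, because it is not literally the inequality displayed before the lemma. That displayed form places the gradient at the \emph{lower} endpoint of an increasing segment, whereas here $\vect{x} \wedge \vect{y} \leq \vect{x}$, so the gradient $\nabla F(\vect{x})$ sits at the \emph{upper} endpoint. Both forms are, however, consequences of the single fact that $F$ is concave along positive directions: restricting $F$ to the segment $[\vect{x} \wedge \vect{y}, \vect{x}]$, which is an increasing direction, produces a concave one-variable function $\phi(t) = F\bigl(\vect{x} \wedge \vect{y} + t((\vect{x}) - (\vect{x} \wedge \vect{y}))\bigr)$, and a concave $\phi$ satisfies the first-order (tangent-above) inequality at \emph{either} endpoint. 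Evaluating the tangent inequality at the top endpoint $t = 1$ gives exactly the meet-side estimate; equivalently one spells out the one-line argument $\phi(0) - \phi(1) = -\int_0^1 \phi'(t)\,dt \leq -\phi'(1)$, valid because $\phi'$ is nonincreasing. I would therefore invoke this endpoint-symmetry of concavity explicitly at that step. It is worth noting that the whole argument uses only concavity in positive directions together with the lattice identity; the diminishing-return inequality enters only insofar as it is what guarantees that concavity.
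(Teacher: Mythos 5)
The paper gives no proof of this lemma --- it is imported verbatim from \cite{HassaniSoltanolkotabi17:Gradient-methods} --- and your argument is correct and is essentially the standard proof from that reference: split $\vect{y}-\vect{x}$ into the nonnegative part $(\vect{x}\vee\vect{y})-\vect{x}$ and the nonpositive part $(\vect{x}\wedge\vect{y})-\vect{x}$, apply the first-order concavity bound along each monotone segment with the gradient anchored at $\vect{x}$, and sum using $(\vect{x}\vee\vect{y})+(\vect{x}\wedge\vect{y})=\vect{x}+\vect{y}$. Your explicit justification of the meet-side estimate (tangent inequality at the upper endpoint of an increasing segment, valid because $\phi'$ is nonincreasing by the DR property) is exactly the point that needs care, and you handle it correctly.
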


\begin{lemma}[\cite{FeldmanNaor11:A-unified-continuous,ChekuriJayram15:On-multiplicative-weight,BianLevy17:Non-monotone-Continuous}]		\label{lem:obs2}
For every $\vect{x}, \vect{y} \in \mathcal{K}$ and for any DR-submodular function $F: [0,1]^n \rightarrow \mathbb{R}^+ \cup \{0\}$, it holds that $F(\vect{x} \vee \vect{y}) \geq \bigl( 1 - \|\vect{x} \|_{\infty} \bigr) F(\vect{y})$.
\end{lemma}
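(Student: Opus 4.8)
The plan is to exhibit $\vect{x} \vee \vect{y}$ as a point on a line segment that starts at $\vect{y}$ and moves in a \emph{non-negative} direction, and then to invoke the concavity of $F$ along such directions together with the non-negativity of $F$. The guiding intuition is the one-dimensional case: if $F$ is concave and non-negative on $[0,1]$ and $0 \le y \le x \le 1$, then writing $x$ as the convex combination $x = \frac{1-x}{1-y}\, y + \frac{x-y}{1-y}\cdot 1$ and using concavity gives $F(x) \ge \frac{1-x}{1-y}F(y) + \frac{x-y}{1-y}F(1) \ge (1-x)F(y)$, where the last step drops the non-negative term $\frac{x-y}{1-y}F(1)$ and uses $\frac{1}{1-y} \ge 1$. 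I would lift exactly this computation to $n$ dimensions.

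Concretely, set $\lambda := \|\vect{x}\|_\infty$ and $\vect{w} := (\vect{x} \vee \vect{y}) - \vect{y}$, so that $w_i = \max\{x_i - y_i,\, 0\} \ge 0$. If $\vect{w} = \vect{0}$ then $\vect{x}\vee\vect{y} = \vect{y}$ and the claim is immediate from $F(\vect{y}) \ge (1-\lambda)F(\vect{y})$; so assume $S := \{i : x_i > y_i\} \ne \emptyset$. I would define
\[
\mu := \max_{i \in S} \frac{x_i - y_i}{1 - y_i} \in (0,1], \qquad \vect{u} := \vect{y} + \tfrac{1}{\mu}\, \vect{w},
\]
noting that $y_i < 1$ for $i \in S$ (since $x_i > y_i$ and $x_i \le 1$), so $\mu$ is well defined. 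The choice of $\mu$ is exactly the smallest scaling that keeps $\vect{u}$ inside the box: one checks $u_i = y_i \le 1$ for $i \notin S$ and $u_i = y_i + \frac1\mu(x_i-y_i) \le 1$ for $i \in S$, so $\vect{u} \in [0,1]^n$; moreover $\vect{u} \ge \vect{y}$ because $\vect{w} \ge \vect{0}$. A one-line expansion confirms $(1-\mu)\vect{y} + \mu\vect{u} = \vect{y} + \vect{w} = \vect{x}\vee\vect{y}$.

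The decisive step is the inequality $\mu \le \lambda$. For each $i \in S$, the bound $\frac{x_i - y_i}{1-y_i} \le x_i$ is equivalent (after clearing the positive denominator) to $x_i y_i \le y_i$, i.e.\ to $x_i \le 1$, which holds. Hence $\mu = \max_{i\in S}\frac{x_i-y_i}{1-y_i} \le \max_i x_i = \lambda$. To conclude, I would use that $F$ is concave along the non-negative direction $\vect{u} - \vect{y} \ge \vect{0}$: the univariate restriction $g(t) := F(\vect{y} + t(\vect{u}-\vect{y}))$ is concave on $[0,1]$, so Jensen's inequality at $t = \mu$ gives $F(\vect{x}\vee\vect{y}) = g(\mu) \ge (1-\mu)g(0) + \mu g(1) = (1-\mu)F(\vect{y}) + \mu F(\vect{u})$. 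Dropping the non-negative term $\mu F(\vect{u}) \ge 0$ and using $1-\mu \ge 1-\lambda$ yields $F(\vect{x}\vee\vect{y}) \ge (1-\lambda)F(\vect{y})$, as desired.

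I expect the only delicate points to be bookkeeping rather than conceptual: choosing $\mu$ as the \emph{tight} box constraint $\max_{i\in S}\frac{x_i-y_i}{1-y_i}$ (any larger $\mu$ weakens the bound, any smaller one pushes $\vect{u}$ outside $[0,1]^n$), correctly handling the degenerate cases $S = \emptyset$ and coordinates with $y_i = 1$, and justifying that the paper's concavity in positive directions indeed makes the segment restriction $g$ concave so that Jensen applies (this is immediate from the non-positive Hessian characterization of DR-submodularity). Notably, none of this uses the cross-coordinate submodular structure directly: concavity along the single positive direction $\vect{u}-\vect{y}$ is all that is invoked, which is precisely why a single factor $(1-\lambda)$, rather than a product over coordinates, comes out.
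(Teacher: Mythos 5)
Your proof is correct. Note that the paper itself gives no proof of this lemma --- it is imported from the cited references --- so there is no internal argument to compare against; your derivation is a valid self-contained substitute. The bookkeeping all checks out: $\vect{u}\in[0,1]^n$ by the tightness of $\mu$, the identity $(1-\mu)\vect{y}+\mu\vect{u}=\vect{x}\vee\vect{y}$ is immediate, the key bound $\mu\le\lambda$ reduces to $y_i(x_i-1)\le 0$ (your phrasing ``equivalent to $x_i\le 1$'' silently assumes $y_i>0$, but the case $y_i=0$ is trivially $0\le 0$), and the concavity of $g$ along the non-negative direction $\vect{u}-\vect{y}$ follows from the antitonicity of $\nabla F$, so the two-point concavity inequality applies. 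The standard proofs in the cited works take a slightly different parametrization --- for multilinear extensions a probabilistic argument (each element sampled with probability at most $\|\vect{x}\|_\infty$), and for general DR-submodular functions typically the curve $\xi\mapsto F(\vect{y}\vee\xi\vect{x})$ --- whereas you move along the straight segment from $\vect{y}$ to $\vect{x}\vee\vect{y}$ and pay only for the tight box constraint $\mu$ rather than for $\lambda$ directly; since $\mu\le\lambda$ this recovers the stated bound and in fact proves the marginally stronger inequality with $1-\mu$ in place of $1-\|\vect{x}\|_\infty$. The only caveat worth recording is that your Jensen step uses concavity of the full segment restriction, which is immediate for differentiable (or twice-differentiable) $F$ as the paper assumes throughout, but would need the coordinate-exchange version of the argument if one insisted on the purely discrete-increment definition of DR-submodularity.
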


%

\paragraph{Variance Reduction Technique.}
\label{var-reduce}

Our algorithm and its analysis relies on a recent variance reduction technique proposed by~\cite{MokhtariHassani18:Conditional-Gradient}. 
This theorem has also been used in the context of online monotone submodular maximization \cite{ChenHarshaw18:Projection-Free-Online,Zhang:2019}.

\begin{lemma}[\cite{MokhtariHassani18:Conditional-Gradient}, Lemma 2]  \label{lem:variance-reduction}
Let $\{\vect{a}^t\}_{t=0}^T$ be a sequence of points in $\mathbb{R}^n$. such that $\norm{\vect{a}^t - \vect{a}^{t-1}} \leq C/(t+s)$ for all $1 \leq t \leq T$ with fixed constants $C \geq 0$ and $s \geq 3$. Let $\{ \tilde{\vect{a}} \}_{t =1}^T$ be a sequence of random variables such that $\mathbb{E} [\tilde{\vect{a}}|\mathcal{H}^{t-1}] = \vect{a}^t$ and $\mathbb{E} \left[ \norm{\tilde{\vect{a}}^t - \vect{a}^t}^2| \mathcal{H}^{t-1}\right] \leq \sigma^2$ for every $t \geq 0$, where $\mathcal{H}^{t-1}$ is the history up to $t-1$.
Let $\{ \vect{d}^t\}_{t=0}^T$ be a sequence of random variables where $\vect{d}^0$ is fixed and subsequent $\vect{d}^t$ are obtained by the recurrence 
$$ \vect{d}^t = (1 - \rho^t) \vect{d}^{t-1} + \rho^t \tilde{\vect{a}}^t$$ 
with $\rho^t = \frac{2}{(t+s)^{2/3}}$. Then we have 
$$ \mathbb{E} \left[ \norm{\vect{a}^t - \vect{d}^t}^2\right] \leq \frac{Q}{(t+s+1)^{2/3}},$$
where $Q = \max \{\norm{\vect{a}^0 - \vect{d}^0}^2 (s+1)^{2/3}, 4\sigma^2 + 3C^2/2 \}$. 
\end{lemma}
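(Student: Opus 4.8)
The plan is to track the squared error $y_t := \mathbb{E}\bigl[\|\vect{a}^t - \vect{d}^t\|^2\bigr]$, establish a one-step recursion of the contraction-plus-noise type, and then conclude by induction on $t$. First I would introduce the shorthands $\vect{e}^t := \vect{a}^t - \vect{d}^t$ and $\Delta^t := \vect{a}^t - \vect{a}^{t-1}$ and rewrite the recurrence for $\vect{d}^t$ as one for $\vect{e}^t$. Substituting $\vect{d}^t = (1-\rho^t)\vect{d}^{t-1} + \rho^t \tilde{\vect{a}}^t$ and adding and subtracting $\vect{a}^{t-1}$ yields the decomposition $\vect{e}^t = (1-\rho^t)(\vect{e}^{t-1} + \Delta^t) + \rho^t(\vect{a}^t - \tilde{\vect{a}}^t)$, which separates a part that is $\mathcal{H}^{t-1}$-measurable from the zero-mean noise part $\rho^t(\vect{a}^t - \tilde{\vect{a}}^t)$; here I use that $\vect{a}^t$ is $\mathcal{H}^{t-1}$-measurable and that $\mathbb{E}[\tilde{\vect{a}}^t \mid \mathcal{H}^{t-1}] = \vect{a}^t$.

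The key step is to take squared norms and condition on $\mathcal{H}^{t-1}$. The cross term between the measurable part and the noise part vanishes in conditional expectation, and the bound $\mathbb{E}[\|\vect{a}^t - \tilde{\vect{a}}^t\|^2 \mid \mathcal{H}^{t-1}] \le \sigma^2$ controls the noise, giving $\mathbb{E}[\|\vect{e}^t\|^2 \mid \mathcal{H}^{t-1}] \le (1-\rho^t)^2 \|\vect{e}^{t-1} + \Delta^t\|^2 + (\rho^t)^2 \sigma^2$. I would then expand $\|\vect{e}^{t-1} + \Delta^t\|^2$ and apply Young's inequality $2\langle \vect{e}^{t-1}, \Delta^t\rangle \le \rho^t\|\vect{e}^{t-1}\|^2 + \tfrac{1}{\rho^t}\|\Delta^t\|^2$. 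Using $0 \le \rho^t \le 1$ (guaranteed by $s \ge 3$, since then $\rho^t = 2/(t+s)^{2/3} < 1$), the coefficient $(1-\rho^t)^2(1+\rho^t)$ of $\|\vect{e}^{t-1}\|^2$ collapses to at most $1-\rho^t$. Plugging in $\rho^t = 2/(t+s)^{2/3}$ together with $\|\Delta^t\| \le C/(t+s)$, and taking total expectations, everything reduces to the scalar recursion
\[
y_t \le \Bigl(1 - \frac{2}{(t+s)^{2/3}}\Bigr)\, y_{t-1} + \frac{B}{(t+s)^{4/3}},
\]
where $B = 4\sigma^2 + 3C^2/2$ serves as a valid (if slightly loose) upper bound on the residual constant.

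Finally I would prove $y_t \le Q/(t+s+1)^{2/3}$ by induction. The base case $t = 0$ is immediate from the definition of $Q$, since $y_0 = \|\vect{a}^0 - \vect{d}^0\|^2 \le Q/(s+1)^{2/3}$. For the inductive step, writing $m = t+s$ and substituting the hypothesis $y_{t-1} \le Q/m^{2/3}$ into the recursion gives $y_t \le Q/m^{2/3} - (2Q - B)/m^{4/3}$, so it suffices to show $Q/m^{2/3} - Q/(m+1)^{2/3} \le (2Q - B)/m^{4/3}$. I would bound the left-hand side via the mean value theorem applied to $x \mapsto x^{-2/3}$, namely $m^{-2/3} - (m+1)^{-2/3} \le \tfrac{2}{3} m^{-5/3}$, reducing the goal to $\tfrac{2Q}{3 m^{1/3}} \le 2Q - B$. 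The main obstacle is exactly this last inequality: it is where the strength of the contraction matters, and it is essential that the induction retains the full factor $(1-\rho^t)$ (rather than a weaker $(1 - \rho^t/2)$), so that the surplus satisfies $2Q - B \ge Q$ whenever $Q \ge B$; since $Q \ge B$ is built into the definition of $Q$, and $Q \ge \tfrac{2Q}{3} \ge \tfrac{2Q}{3 m^{1/3}}$ for $m \ge 1$, the inequality closes and the induction goes through.
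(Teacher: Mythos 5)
Your proof is correct. Note that the paper does not prove this lemma at all---it is imported verbatim from \cite{MokhtariHassani18:Conditional-Gradient} (their Lemma~2)---so there is no in-paper argument to compare against; what you have written is essentially a faithful reconstruction of the standard proof from that reference: the error decomposition $\vect{e}^t = (1-\rho^t)(\vect{e}^{t-1}+\Delta^t) + \rho^t(\vect{a}^t - \tilde{\vect{a}}^t)$, conditioning on $\mathcal{H}^{t-1}$ to kill the cross term, Young's inequality, and induction via the mean value theorem. The only cosmetic difference is your choice of Young parameter $\rho^t$ rather than $\rho^t/2$: this gives a residual constant $4\sigma^2 + C^2$ instead of $4\sigma^2 + 3C^2/2$, so your bound is in fact slightly tighter than needed, and taking $B = 4\sigma^2 + 3C^2/2 \geq 4\sigma^2 + C^2$ as you do keeps the stated constant $Q$ valid. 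All the pivotal inequalities check out: $(1-\rho)^2(1+\rho) = (1-\rho)(1-\rho^2) \leq 1-\rho$, the condition $\rho^t < 1$ holds since $t+s \geq 4$, and the closing step $\tfrac{2Q}{3m^{1/3}} \leq Q \leq 2Q - B$ follows from $Q \geq B$ built into the definition of $Q$.
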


%


\section{Online DR-Submodular Maximization over Down-Closed Convex Sets}
\label{sec:down-closed}

\subsection{Online Vee Learning} 
\label{sec:vee}

In this section, we give an algorithm for an online problem that will be the main building block 
in the design of algorithms for online DR-submodular maximization over down-closed convex sets.
In the online vee learning problem, given a down-closed convex set $\mathcal{K}$, 
at every time $t$, the online algorithm receives $\vect{c}^t \in {\cal K}$ at the beginning of the step and needs to choose a vector 
$\vect{x}^t \in \mathcal{K}$. Subsequently, the adversary reveals a vector $\vect{a}^t \in \mathbb{R}^{n}$  
and the algorithm receives a reward $\langle \vect{a}^t, \vect{c}^t \vee \vect{x}^t \rangle$. 
The goal is to maximize the total reward $\sum_{t=1}^T \langle \vect{a}^t, \vect{c}^t \vee \vect{x}^t \rangle$ over a time horizon $T$.

The main issue in the online vee problem is that the reward functions are non-concave. In order to overcome this 
obstacle, we consider a novel approach that consists of discretizing and lifting the corresponding functons to a higher dimensional space. 

\paragraph{Discretization and lifting.}
Let $\mathcal{L}$ be a lattice such that $\mathcal{L} = \{0, \frac{1}{M}, \frac{2}{M}, \ldots,  \frac{\ell}{M}, \ldots, 1\}^{n}$ where $0 \leq \ell \leq M$ for some parameter $M$ (to be defined later). 
Note that each $x_{i} = \frac{\ell}{M}$ for $1 \leq \ell \leq M$ 
can be uniquely represented by a vector $\vect{y}_{i} \in \{0,1\}^{M}$ 
such that $y_{i1} = \ldots = y_{i\ell} = 1$ and $y_{i,\ell+1} = \ldots = y_{i,M} = 0$. 
Based on this observation, we lift the discrete set $\mathcal{K} \cap \mathcal{L}$ to the $(n \times M)$-dim space.
Specifically, define a \emph{lifting} map $m: \mathcal{K} \cap \mathcal{L} \rightarrow \{0,1\}^{n \times M}$
such that each point $(x_{1}, \ldots, x_{n}) \in \mathcal{K} \cap \mathcal{L}$
is mapped to an unique point $(y_{10}, \ldots, y_{1M}, \ldots, y_{n0}, \ldots, y_{nM}) \in \{0,1\}^{n \times M}$
where $y_{i1} = \ldots = y_{i\ell} = 1$ and $y_{i,\ell+1} = \ldots = y_{i,M} = 0$ iff $x_{i} = \frac{\ell}{M}$ for $1 \leq \ell \leq M$.
Define $\widetilde{\mathcal{K}}$ be the set 
$\{ \vect{1}_{X} \in  \{0,1\}^{n \times M}: \vect{1}_{X} = m(\vect{x}) \text{ for some } \vect{x} \in \mathcal{K} \cap \mathcal{L} \}$.
Observe that $\widetilde{\mathcal{K}}$ is a subset of discrete points in $\{0,1\}^{n \times M}$. 
Let $\mathcal{C} := \texttt{conv}(\widetilde{\mathcal{K}})$ be the convex hull of $\widetilde{\mathcal{K}}$.

\paragraph{Algorithm description.}
In our algorithm, at every time $t$, we will output $\vect{x}^{t} \in \mathcal{L} \cap \mathcal{K}$. 
In fact, we will reduce the online vee problem to 
a online linear optimization problem in the $(n \times M)$-dim space. 
Given $\vect{c}^{t} \in {\cal K}$, we round every coordinate $c^{t}_{i}$ 
for $1 \leq i \leq n$ to the largest multiple of $\frac{1}{M}$ which is smaller than $c^{t}_{i}$. In other words, the rounded vector 
$\overline{\vect{c}}^{t}$ has the $i^{th}$-coordinate $\bar{c}^{t}_{i} = \frac{\ell}{M}$ where 
$\frac{\ell}{M} \leq c^{t}_{i} <  \frac{\ell+1}{M}$. Vector $\overline{\vect{c}}^{t} \in \mathcal{L}$ and also $\overline{\vect{c}}^{t} \in \mathcal{K}$ (since $\overline{\vect{c}}^{t} \leq \vect{c}^{t}$ and $\mathcal{K}$ is down-closed). Denote 
$\vect{1}_{C^{t}} = m(\overline{\vect{c}}^{t})$. Besides, for each vector $\vect{a}^{t} \in \mathbb{R}^{n}$, 
define its correspondence $\widetilde{\vect{a}}^{t} \in \mathbb{R}^{n \times M}$ such that $\widetilde{a}^{t}_{i,j} = \frac{1}{M} a^t_i$ for all $1 \leq i \leq n$ and $1 \leq j \leq M$. 
Observe that $\langle \vect{a}^{t}, \overline{\vect{c}}^{t} \rangle = \langle \widetilde{\vect{a}}^{t}, \vect{1}_{C^{t}} \rangle$
(where the second scalar product is taken in the space of dimension $n \times M$).

\begin{algorithm}[h]
\begin{flushleft}
\textbf{Input}:  A convex set $\mathcal{K}$, a time horizon $T$.
\end{flushleft}
\begin{algorithmic}[1]
\STATE Let $\mathcal{L}$ be a lattice in $[0,1]^{n}$ and denote the polytope $\mathcal{C} := \texttt{conv}(\widetilde{\mathcal{K}})$.
\STATE Initialize arbitrarily $\vect{x}^{1} \in \mathcal{K} \cap \mathcal{L}$ and 
	$\vect{y}^{1} = \vect{1}_{X^{1}} = m(\vect{x}^{1}) \in \widetilde{\mathcal{K}}$.
\FOR {$t = 1$ to $T$}	 		
		\STATE Observe $\vect{c}^{t}$, compute $\vect{1}_{C^{t}} = m(\overline{\vect{c}}^{t})$.
		\STATE Play $\vect{x}^{t}$.
		\STATE Observe $\vect{a}^{t}$ (so compute $\widetilde{\vect{a}}^{t}$) 
			and receive the reward $\langle \vect{a}^{t}, \vect{x}^{t} \rangle$.
		\STATE Set $\vect{y}^{t+1} \gets \texttt{update}_{\mathcal{C}}(\vect{y}^{t}; \widetilde{\vect{a}}^{t'} \circ (\vect{1} - \vect{1}_{C^{t'}}): t' \leq t)$ where $\circ$ denotes the element-wise multiplication.
		\STATE Round: $\vect{1}_{X^{t+1}} \gets \texttt{round}(\vect{y}^{t+1})$.
		\STATE Compute $\vect{x}^{t+1} = m^{-1}(\vect{1}_{X^{t+1}})$.
\ENDFOR
\end{algorithmic}
\caption{Online algorithm for vee learning}
\label{algo:vee}
\end{algorithm}

The formal description is given in Algorithm~\ref{algo:vee}.
In the algorithm, we use a procedure \texttt{update}. This procedure takes arguments as the polytope 
$\mathcal{C}$, the current vector $\vect{y}^{t}$, the gradients of previous time steps $\widetilde{\vect{a}}^{t} \circ (\vect{1} - \vect{1}_{C^{t}})$ and outputs the next vector $\vect{y}^{t+1}$. One can use different update strategies, in particular 
the gradient descent or the follow-the-perturbed-leader algorithm (if aiming for a projection-free algorithm).
\begin{align*}
	\vect{y}^{t+1} &\gets \texttt{Proj}_{\mathcal{C}}\left( \vect{y}^{t} - \eta \cdot \widetilde{\vect{a}}^{t} \circ (\vect{1} - \vect{1}_{C^{t}}) \right)
		& \qquad \text{(Gradient Descent)}  \\
	\vect{y}^{t+1} &\gets \arg \min_{\mathcal{C}} \biggl\{ \eta \sum_{t'=1}^{t} \langle \widetilde{\vect{a}}^{t'} \circ (\vect{1} - \vect{1}_{C^{t'}}), \vect{y} \rangle + \vect{n}^{T} \cdot \vect{y} \biggr\}
		& \qquad \text{(Follow-the-perturbed-leader)}
\end{align*}

Besides, in the algorithm, we use additionally a procedure \texttt{round} in order to transform a solution $\vect{y}^{t}$
in the polytope $\mathcal{C}$ of dimension $n \times M$ to an integer point in $\widetilde{\mathcal{K}}$. 
Specifically, given $\vect{y}^{t} \in \texttt{conv}(\widetilde{\mathcal{K}})$, we round $\vect{y}^{t}$ to $\vect{1}_{X^{t}} \in \widetilde{\mathcal{K}}$ using an efficient algorithm given in \cite{MirrokniLeme17:Tight-bounds}
for the approximate Carathéodory's theorem w.r.t the $\ell_{2}$-norm. 
Specifically, given $\vect{y}^{t} \in \mathcal{C}$ and an arbitrary $\epsilon > 0$, 
the algorithm in \cite{MirrokniLeme17:Tight-bounds} returns a set of $k = O(1/\epsilon^{2})$ integer points 
$\vect{1}_{X^{t}_{1}}, \ldots, \vect{1}_{X^{t}_{k}} \in \widetilde{\mathcal{K}}$ such that 
$\| \vect{y}^{t} - \frac{1}{k} \sum_{j=1}^{k}  \vect{1}_{X^{t}_{j}} \| \leq \epsilon$.
Hence, given $\vect{1}_{X^{t}_{1}}, \ldots, \vect{1}_{X^{t}_{k}}$ which have been computed,  
the procedure \texttt{round} simply consists of rounding $\vect{y}^{t}$ to 
$\vect{1}_{X^{t}_{j}}$ with probability $1/k$. Finally, the solution $\vect{x}^{t}$ in the original space is computed 
as $m^{-1}(\vect{1}_{X^{t+1}})$.


\begin{lemma}
\label{vee-learning}
Assume that $\|\vect{a}^{t}\| \leq G$ for all $1 \leq t \leq T$ and the update scheme is chosen either as the gradient ascent update or 
other procedure with regret of $O(\sqrt{T})$. Then, using the lattice $\mathcal{L}$ with parameter 
$M= (T/n)^{1/4}$ and $\epsilon = 1/\sqrt{T}$ (in the \texttt{round} procedure), 
Algorithm~\ref{algo:vee} achieves a regret bound of 
$O\bigl(G (n T)^{3/4}\bigr)$.
\end{lemma}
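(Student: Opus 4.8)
The plan is to reduce the non-concave online vee problem to online linear optimization over the lifted polytope $\mathcal{C} = \texttt{conv}(\widetilde{\mathcal{K}})$, then account for the several approximation errors introduced by discretizing $\vect{c}^t$, by restricting the benchmark to the lattice, by the \texttt{update} procedure's regret, and by the \texttt{round} step, and finally balance the granularity $M$ and the Carathéodory precision $\epsilon$ against $T$. The starting point is the observation that the algorithm's choice only enters through a linear functional in the lifted space.

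First I would establish the algebraic identity that makes the lifting work. For two lattice points, the coordinate-wise maximum $\overline{\vect{c}}^t \vee \vect{x}^t$ lifts to the coordinate-wise OR $\vect{1}_{C^t} \vee \vect{1}_{X^t}$ of the associated prefix vectors, and since these are $\{0,1\}$-prefix vectors one has $\vect{1}_{C^t}\vee\vect{1}_{X^t} = \vect{1}_{C^t} + (\vect{1}-\vect{1}_{C^t})\circ\vect{1}_{X^t}$. Combined with the correspondence $\langle\vect{a}^t,\vect{z}\rangle = \langle\widetilde{\vect{a}}^t, m(\vect{z})\rangle$, valid for every lattice point $\vect{z}$, this yields for every $\vect{x}^t\in\mathcal{K}\cap\mathcal{L}$
\begin{align*}
\langle \vect{a}^t, \overline{\vect{c}}^t\vee\vect{x}^t\rangle = \langle\vect{a}^t,\overline{\vect{c}}^t\rangle + \langle \widetilde{\vect{a}}^t\circ(\vect{1}-\vect{1}_{C^t}), \vect{1}_{X^t}\rangle.
\end{align*}
The first term is fixed by the adversary, while the second is exactly the linear reward $\langle g_t, \vect{1}_{X^t}\rangle$ with $g_t := \widetilde{\vect{a}}^t\circ(\vect{1}-\vect{1}_{C^t})$, which is precisely the vector the algorithm feeds to \texttt{update}. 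Thus the part of the reward the algorithm controls is an online linear optimization instance over $\mathcal{C}$.

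Next I would compare against the benchmark. Let $\vect{x}^*$ maximize $\sum_t\langle\vect{a}^t,\vect{c}^t\vee\vect{x}\rangle$ over $\mathcal{K}$ and let $\overline{\vect{x}}^*$ be its coordinate-wise rounding down to $\mathcal{L}$, which lies in $\mathcal{K}$ by down-closedness. Replacing $\vect{c}^t$ by $\overline{\vect{c}}^t$ and $\vect{x}^*$ by $\overline{\vect{x}}^*$ each perturbs a vee-value by at most $1/M$ per coordinate, so using $\|\vect{a}^t\|_1 \le \sqrt{n}\|\vect{a}^t\| \le \sqrt{n}G$ each substitution costs at most $\sqrt{n}G/M$ per step; with the identity above, and since maximizing the linear reward over $\widetilde{\mathcal{K}}$ equals maximizing it over $\mathcal{C}$, this gives $\text{OPT} \le \sum_t\langle\vect{a}^t,\overline{\vect{c}}^t\rangle + \max_{\vect{y}\in\mathcal{C}}\sum_t\langle g_t,\vect{y}\rangle + O(\sqrt{n}GT/M)$. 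On the algorithm's side, the \texttt{update} procedure (gradient ascent) guarantees $\sum_t\langle g_t,\vect{y}^t\rangle \ge \max_{\vect{y}\in\mathcal{C}}\sum_t\langle g_t,\vect{y}\rangle - \text{Reg}_{\text{OLO}}$, and the \texttt{round} step costs only an $\epsilon$-error in expectation per step, since $\mathbb{E}[\vect{1}_{X^t}]$ is $\ell_2$-close to $\vect{y}^t$ within $\epsilon$ by the approximate Carathéodory guarantee. Bounding $\|g_t\|\le\|\widetilde{\vect{a}}^t\| \le G/\sqrt{M}$ controls the rounding error ($\le G\epsilon/\sqrt{M}$ per step) and, with $\text{diam}(\mathcal{C}) \le \sqrt{nM}$, gives $\text{Reg}_{\text{OLO}} = O\bigl(\tfrac{G}{\sqrt{M}}\cdot\sqrt{nM}\cdot\sqrt{T}\bigr) = O(G\sqrt{nT})$.

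Assembling these, and subtracting one more $O(\sqrt{n}GT/M)$ term to pass from the $\overline{\vect{c}}^t$-reward back to the true reward $\langle\vect{a}^t,\vect{c}^t\vee\vect{x}^t\rangle$, the expected regret is at most $O(\sqrt{n}GT/M) + O(G\sqrt{nT}) + O(G\epsilon T/\sqrt{M})$. Choosing $M = (T/n)^{1/4}$ makes the discretization term $\sqrt{n}GT/M = G(nT)^{3/4}$, which dominates; the online-learning term $G\sqrt{nT} = G(nT)^{1/2}$ and, with $\epsilon = 1/\sqrt{T}$, the rounding term $O(GT^{3/8}n^{1/8})$ are both of strictly smaller order, yielding the claimed $O(G(nT)^{3/4})$. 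I expect the main obstacle to be organizing the error accounting correctly: keeping every comparison as a two-sided absolute bound because the entries of $\vect{a}^t$ may be negative, and verifying that the rounding randomness is independent of the deterministic trajectory $\{\vect{y}^t\}$ produced by \texttt{update}, so that taking expectations over \texttt{round} is legitimate.
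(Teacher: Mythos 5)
Your proposal is correct and follows essentially the same route as the paper's proof: the lifting identity $\vect{1}_{C}\vee\vect{z}=\vect{z}+\vect{1}_{C}\circ(\vect{1}-\vect{z})$, the reduction to online linear optimization over $\mathcal{C}$, and the same three-way error accounting (discretization of $\vect{c}^{t}$ and $\vect{x}^{*}$, the approximate-Carath\'eodory rounding, and the OLO regret) balanced by the same choice of $M$ and $\epsilon$. Your bound on the OLO term is in fact slightly sharper than the paper's ($O(G\sqrt{nT})$ versus $O(nMG\sqrt{T})$, via $\|\widetilde{\vect{a}}^{t}\|\leq G/\sqrt{M}$ and $\mathrm{diam}(\mathcal{C})\leq\sqrt{nM}$), but both are dominated by the discretization term $O(G(nT)^{3/4})$, so the conclusion is unchanged.
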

\begin{proof}
Observe that  $\vect{y}^{t}$ be the solution of the online linear optimization problem in dimension $(nM)$ at time $t$ where 
the reward function at time $t$ is $\langle \widetilde{\vect{a}}^{t} \circ (\vect{1} - \vect{1}_{C^{t}}), \cdot  \rangle$.  
Moreover, recall that $\vect{x}^{t} \in \mathcal{L} \cap \mathcal{K}$ for every time $t$.
First, by the construction, we have:
\begin{align*}
m(\overline{\vect{c}}^{t} \vee \vect{x}^{t}) &= m(\overline{\vect{c}}^{t}) \vee m(\vect{x}^{t}) = \vect{1}_{C^{t}} \vee \vect{1}_{X^{t}}, \\
\langle \vect{a}^{t}, \overline{\vect{c}}^{t}  \vee \vect{x}^{t} \rangle &= 
 \langle \widetilde{\vect{a}}^{t}, \vect{1}_{C^{t}} \vee \vect{1}_{X^{t}} \rangle.
\end{align*}
Therefore, 
$$
\sum_{t = 1}^{T} \E \bigl[ \langle \vect{a}^{t}, \overline{\vect{c}}^{t}  \vee \vect{x}^{t} \rangle \bigr ]
= \sum_{t = 1}^{T} \E \bigl[  \langle \widetilde{\vect{a}}^{t}, \vect{1}_{C^{t}} \vee \vect{1}_{X^{t}} \rangle  \bigr ]
$$ 
Besides, for every vector $\vect{1}_{C}$ and vector $\vect{z} \in [0,1]^{n \times M}$, the following identity always holds: 
$\vect{1}_{C} \vee \vect{z} = \vect{z} + \vect{1}_{C} \circ (\vect{1} - \vect{z})$.
Therefore, for every vector $\vect{z} \in [0,1]^{n \times M}$,
\begin{align}	\label{eq:vee1}
 \langle \widetilde{\vect{a}}^{t}, \vect{1}_{C^{t}} \vee\vect{y} \rangle
 &=  \langle \widetilde{\vect{a}}^{t}, \vect{z} + \vect{1}_{C^{t}} \circ (\vect{1} - \vect{z}) \rangle
 = \langle \widetilde{\vect{a}}^{t}, \vect{z} \circ (\vect{1} - \vect{1}_{C^{t}}) \rangle
 	+ \langle \widetilde{\vect{a}}^{t},  \vect{1}_{C^{t}} \rangle	\notag \\
&=  \langle \widetilde{\vect{a}}^{t} \circ (\vect{1}- \vect{1}_{C^{t}}), \vect{z}  \rangle
 	+ \langle \widetilde{\vect{a}}^{t},  \vect{1}_{C^{t}} \rangle
\end{align}
where note that the last term $\langle \widetilde{\vect{a}}^{t},  \vect{1}_{C^{t}} \rangle$ is independent of the decision at time $t$. 

By linearity of expectation, we have
\begin{align}	\label{eq:vee2}
\sum_{t = 1}^{T} \E &\bigl[ \langle \vect{a}^{t}, \overline{\vect{c}}^{t}  \vee \vect{x}^{t} \rangle \bigr ]
= \sum_{t = 1}^{T} \E \bigl[  \langle \widetilde{\vect{a}}^{t} \circ (\vect{1} - \vect{1}_{C^{t}}), \vect{1}_{X^{t}}  \rangle
 	+ \langle \widetilde{\vect{a}}^{t},  \vect{1}_{C^{t}} \rangle  \bigr ] 		\notag \\
&= \sum_{t = 1}^{T}  \langle \widetilde{\vect{a}}^{t} \circ (\vect{1} - \vect{1}_{C^{t}}), \vect{y}^{t}  \rangle
	+ \sum_{t = 1}^{T}  \langle \widetilde{\vect{a}}^{t} \circ (\vect{1} - \vect{1}_{C^{t}}), \E[\vect{1}_{X^{t}}] - \vect{y}^{t}  \rangle
 	+ \sum_{t = 1}^{T} \langle \widetilde{\vect{a}}^{t},  \vect{1}_{C^{t}} \rangle 	\notag \\
&\geq \sum_{t = 1}^{T}  \langle \widetilde{\vect{a}}^{t} \circ (\vect{1} - \vect{1}_{C^{t}}), \vect{y}^{t}  \rangle
 	+ \sum_{t = 1}^{T} \langle \widetilde{\vect{a}}^{t},  \vect{1}_{C^{t}} \rangle 
	- \sum_{t = 1}^{T}  \| \widetilde{\vect{a}}^{t} \circ (\vect{1} - \vect{1}_{C^{t}}) \| \cdot \| \E[\vect{1}_{X^{t}}] - \vect{y}^{t}  \|  \notag \\
&\geq \sum_{t = 1}^{T}  \langle \widetilde{\vect{a}}^{t} \circ (\vect{1} - \vect{1}_{C^{t}}), \vect{y}^{t}  \rangle
 	+ \sum_{t = 1}^{T} \langle \widetilde{\vect{a}}^{t},  \vect{1}_{C^{t}} \rangle 
	- T G \epsilon.
\end{align}
The first inequality follows Cauchy-Schwarz inequality. The last inequality is due to the property of \texttt{round} 
and $\| \widetilde{\vect{a}}^{t} \circ (\vect{1} - \vect{1}_{C^{t}}) \| \leq G$ since $\|\vect{a}^{t}\| \leq G$ and the construction 
of $\widetilde{\vect{a}}^{t}$.

Let $\vect{x}^{*}$ be the optimal solution in hindsight for the online vee learning problem. 
Let $\overline{\vect{x}}^{*}$ be the rounded solution of $\vect{x}^{*}$ onto the lattice $\mathcal{L}$. 
Denote $\vect{1}_{X^{*}} = m(\overline{\vect{x}}^{*})$.
By (\ref{eq:vee1}) and (\ref{eq:vee2}) and the choice of $\epsilon = O(1/\sqrt{T})$, we have 
\begin{align*}
\sum_{t = 1}^{T} \E \bigl[ \langle \vect{a}^{t}, \overline{\vect{c}}^{t}  \vee \vect{x}^{t} \rangle \bigr ]
- \sum_{t = 1}^{T} \langle \vect{a}^{t}, \overline{\vect{c}}^{t}  \vee \overline{\vect{x}}^{*} \rangle
\geq \sum_{t = 1}^{T} \E \bigl[ \langle \widetilde{\vect{a}}^{t} \circ (\vect{1} - \vect{1}_{C^{t}}), \vect{y}^{t} - \vect{1}_{X^{*}} \rangle \bigr ]
\geq - O(nMG\sqrt{T})
\end{align*}
where the last inequality is due to the regret bound of the gradient descent or the follow-the-perturbed-leader algorithms
(in dimesion $nM$). As $\|\vect{a}^{t}\| \leq G$, function $\langle \vect{a}^{t}, \cdot \rangle$ is $G$-Lipschitz. Hence,
\begin{align*}
\sum_{t = 1}^{T} &\E \bigl[ \langle \vect{a}^{t}, \vect{c}^{t}  \vee \vect{x}^{t} \rangle \bigr ]
- \sum_{t = 1}^{T} \langle \vect{a}^{t}, \vect{c}^{t}  \vee \vect{x}^{*} \rangle \\
&=
\sum_{t = 1}^{T} \E \bigl[ \langle \vect{a}^{t}, \vect{c}^{t}  \vee \vect{x}^{t} \rangle \bigr ]
- \sum_{t = 1}^{T} \langle \vect{a}^{t}, \overline{\vect{c}}^{t}  \vee \vect{x}^{t} \rangle
+\sum_{t = 1}^{T} \E \bigl[ \langle \vect{a}^{t}, \overline{\vect{c}}^{t}  \vee \vect{x}^{t} \rangle \bigr ]
- \sum_{t = 1}^{T} \langle \vect{a}^{t}, \overline{\vect{c}}^{t}  \vee \overline{\vect{x}}^{*} \rangle \\
	&\qquad +\sum_{t = 1}^{T} \E \bigl[ \langle \vect{a}^{t}, \overline{\vect{c}}^{t}  \vee \overline{\vect{x}}^{*} \rangle \bigr ]
	- \sum_{t = 1}^{T} \langle \vect{a}^{t}, \vect{c}^{t}  \vee \overline{\vect{x}}^{*} \rangle 
	+\sum_{t = 1}^{T} \E \bigl[ \langle \vect{a}^{t}, \vect{c}^{t}  \vee \overline{\vect{x}}^{*} \rangle \bigr ]
	- \sum_{t = 1}^{T} \langle \vect{a}^{t}, \vect{c}^{t}  \vee \vect{x}^{*} \rangle \\
&\geq - O(nMG\sqrt{T}) - O(T G\frac{\sqrt{n}}{M}).
\end{align*}
where 
$\|\vect{c}^{t}  \vee \vect{x}^{t} - \overline{\vect{c}}^{t}  \vee \vect{x}^{t}\| 
	\leq \|\vect{c}^{t}  - \overline{\vect{c}}^{t} \| \leq \sqrt{n}/M$
and similarly $\|\overline{\vect{c}}^{t}  \vee \overline{\vect{x}}^{*} - \vect{c}^{t}  \vee \overline{\vect{x}}^{*}\|$
and $\|\vect{c}^{t}  \vee \overline{\vect{x}}^{*} - \overline{\vect{c}}^{t}  \vee \vect{x}^{*}\|$ 
are bounded by $\sqrt{n}/M$.
Choosing $M = (T/n)^{1/4}$, one gets the regret bound of $O\bigl(G(nT)^{3/4}\bigr)$.
\end{proof}


\subsection{Algorithm for online non-monotone DR-submodular maximization}

In this section, we will provide an algorithm for the problem of online non-monotone DR-submodular maximization.
The algorithm maintains $L$ online vee learning oracles $\mathcal{E}_{1}, \ldots, \mathcal{E}_{L}$. 
At each time step $t$, the algorithm starts from $\vect{0}$ (origin)  and executes $L$ steps of the Frank-Wolfe algorithm  where the update vector $\vect{v}^{t}_{\ell}$ is constructed by combining the output $\vect{u}^{t}_{\ell}$ of the online vee optimization oracle ${\cal E}_\ell$ with the current vector $\vect{x}^{t}_{\ell}$. In particular, $\vect{v}^t_\ell$ is set to be $\vect{x}^t_\ell  \vee \vect{u}^t_\ell - \vect{x}^t_\ell$. 
Then, the solution $\vect{x}^{t}$ is produced at the end of the $L$-th step.
Subsequently, after observing the DR-submodular function $F^{t}$, the algorithm defines a vector $\vect{d}^{t}_{\ell}$ and feedbacks  
$\langle \vect{d}^t_{\ell}, \vect{x}^t_\ell  \vee \vect{u}^t_\ell \rangle$ as the reward function at time $t$ to the oracle $\mathcal{E}_{\ell}$ for $1 \leq \ell \leq L$.
The pseudocode in presented in Algorithm~\ref{algo:online-FW}.

\begin{algorithm}[h]
\begin{flushleft}
\textbf{Input}:  A convex set $\mathcal{K}$, 
	a time horizon $T$, online vee optimization oracles $\mathcal{E}_1, \ldots, \mathcal{E}_L$, 
	step sizes $\rho_\ell \in (0, 1)$, and $\eta_\ell = 1/L$ for all $\ell \in [L]$
\end{flushleft}
\begin{algorithmic}[1]
\STATE Initialize vee optimizing oracle ${\cal E}_\ell$ for all $\ell \in \{1, \ldots L\}$.
\STATE Initialize $\vect{x}^t_{1} \gets \vect{0}$ and $\vect{d}^{t}_{0} \gets \vect{0}$
	for every $1 \leq t \leq T$. 
\FOR {$t = 1$ to $T$}	 		
		\STATE Compute $\vect{u}^{t}_{\ell} \gets$ output of oracle $\mathcal{E}_\ell$ in round $t$ for all $\ell \in \{1, \ldots, L\}$. 
		\FOR{$1 \leq \ell \leq L$}
			\STATE Set $\vect{v}^t_\ell \gets \vect{x}^{t}_{\ell} \vee \vect{u}^t_\ell - \vect{x}^{t}_{\ell}$ 
			\STATE Set $\vect{x}^{t}_{\ell+1} \gets \vect{x}^{t}_{\ell} + \eta_\ell \vect{v}^{t}_{\ell}$.  \label{update-x} 
		\ENDFOR
		\STATE Set $\vect{x}^t \gets \vect{x}^{t}_{L+1}$.
		\STATE Play $\vect{x}^t$, observe the function $F^{t}$ and get the reward of $F^{t}(\vect{x}^t)$. 
		\STATE Compute a gradient estimate $\vect{g}^t_\ell$ such that 
			$\mathbb{E}\big[ \vect{g}^t_\ell |  \vect{x}^t_\ell \big] = \nabla F^t\vect{(x}^t_\ell)$  for all $\ell \in \{1, 2, \ldots, L \}$.	
		\STATE Compute $\vect{d}^t_{\ell} = (1 - \rho_\ell) \vect{d}^t_{\ell-1} + \rho_\ell \vect{g}^t_\ell$, for every $\ell \in \{1, \ldots, L\}$ 
		\STATE Feedback $\langle \vect{d}^t_\ell, \vect{x}^{t}_{\ell} \vee \vect{u}^t_\ell  \rangle$ 
				to oracle $\mathcal{E}_\ell$ 
		for all $\ell \in \{1, 2, \ldots, L\}$, 
\ENDFOR
\end{algorithmic}
\caption{Online algorithm for down-closed convex sets}
\label{algo:online-FW}
\end{algorithm}

We first prove a technical lemma.
\begin{lemma} \label{bound-x-l}
Let $\vect{x}^{t}_{\ell}$ be the vectors constructed in Algorithm~\ref{algo:online-FW}.
Then, it holds that $1 - \| \vect{x}^t_{\ell+1}\|_{\infty} \geq \prod_{\ell' = 1}^\ell (1-\eta_{\ell'})$ for every $t \in \{1, 2, \ldots, T\}$ and for every $\ell \in \{1, 2, \ldots, L\}$.
\end{lemma}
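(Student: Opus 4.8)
The plan is to argue coordinate-by-coordinate and then lift the resulting bound to the $\ell_\infty$-norm. The starting point is to combine the two update lines in Algorithm~\ref{algo:online-FW} (Step~\ref{update-x} together with the preceding assignment $\vect{v}^t_\ell \gets \vect{x}^t_\ell \vee \vect{u}^t_\ell - \vect{x}^t_\ell$) into a single convex-combination update. Substituting $\vect{v}^t_\ell$ into $\vect{x}^t_{\ell+1} = \vect{x}^t_\ell + \eta_\ell \vect{v}^t_\ell$ gives
$$
\vect{x}^t_{\ell+1} = (1 - \eta_\ell)\, \vect{x}^t_\ell + \eta_\ell\, \bigl(\vect{x}^t_\ell \vee \vect{u}^t_\ell\bigr).
$$
Since $\vect{x}^t_\ell, \vect{u}^t_\ell \in \mathcal{K} \subseteq [0,1]^n$, every coordinate of $\vect{x}^t_\ell \vee \vect{u}^t_\ell$ lies in $[0,1]$; in particular $(\vect{x}^t_\ell \vee \vect{u}^t_\ell)_i \leq 1$ for all $i$, which is the only property of the $\vee$ operation that I will use.

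Fixing a coordinate $i$ and using $(\vect{x}^t_\ell \vee \vect{u}^t_\ell)_i \leq 1$, the update yields $(x^t_{\ell+1})_i \leq (1 - \eta_\ell)(x^t_\ell)_i + \eta_\ell$, which after subtracting both sides from $1$ rearranges into the one-step contraction
$$
1 - (x^t_{\ell+1})_i \geq (1 - \eta_\ell)\bigl(1 - (x^t_\ell)_i\bigr).
$$
I would then unroll this by induction on $\ell$. The base case uses the initialization $\vect{x}^t_1 = \vect{0}$, so $1 - (x^t_1)_i = 1$ and hence $1 - (x^t_2)_i \geq (1 - \eta_1) = \prod_{\ell'=1}^1(1-\eta_{\ell'})$. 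Assuming $1 - (x^t_\ell)_i \geq \prod_{\ell'=1}^{\ell-1}(1-\eta_{\ell'})$, the contraction gives $1 - (x^t_{\ell+1})_i \geq (1-\eta_\ell)\prod_{\ell'=1}^{\ell-1}(1-\eta_{\ell'}) = \prod_{\ell'=1}^{\ell}(1-\eta_{\ell'})$; here I use that each $\eta_{\ell'} \in (0,1)$, so the factors $1-\eta_{\ell'}$ are nonnegative and the inequality is preserved under multiplication.

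Finally I would transfer this uniform coordinate bound to the norm. Because $\vect{x}^t_{\ell+1}$ is a convex combination of vectors in $[0,1]^n$ starting from $\vect{0}$, all its coordinates are nonnegative, so $\|\vect{x}^t_{\ell+1}\|_\infty = \max_i (x^t_{\ell+1})_i$. The previous step gives $(x^t_{\ell+1})_i \leq 1 - \prod_{\ell'=1}^\ell(1-\eta_{\ell'})$ for every $i$, hence the same bound holds for the maximum, which is exactly $1 - \|\vect{x}^t_{\ell+1}\|_\infty \geq \prod_{\ell'=1}^\ell(1-\eta_{\ell'})$. The whole argument is a routine induction; the only point requiring any care is to run it per-coordinate so that the $\vee$ operation is handled cleanly, and then to observe that a bound holding uniformly over coordinates passes directly to the $\ell_\infty$-norm — so I do not anticipate a genuine obstacle here.
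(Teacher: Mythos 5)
Your proof is correct and follows essentially the same route as the paper's: the same one-step coordinate-wise contraction $1 - (x^t_{\ell+1})_i \geq (1-\eta_\ell)(1-(x^t_\ell)_i)$ derived from $(\vect{x}^t_\ell \vee \vect{u}^t_\ell)_i \leq 1$, unrolled from the initialization $\vect{x}^t_1 = \vect{0}$ and then passed to the $\ell_\infty$-norm by taking the maximum over coordinates. The only cosmetic difference is that you first rewrite the update as a convex combination, which the paper leaves implicit.
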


\begin{proof}
We show that inequality holds for every $t$. Fix an arbitrary $t \in \{1, 2, \ldots, T\}$. For the ease of exposition, we drop the index $t$ in the following proof. Let $(x_{\ell})_{i}$, $(u_{\ell})_{i}$ and $(v_{\ell})_{i}$ be the $i^{\text{th}}$ coordinates of 
vectors $\vect{x}_{\ell}$, $\vect{u}_{\ell}$ and $\vect{v}_{\ell}$ for $1 \leq i \leq n$ and for $1 \leq \ell \leq L$, respectively.
We first obtain the following recursion on fixed $(x_\ell)_{i}$. 
\begin{align*}
	1 - (x_{\ell+1})_{i} &= 1 - \left( (x_\ell)_{i} + \eta_\ell (v_\ell)_{i} \right) 
	\geq 1 -  \left( (x_\ell)_{i} + \eta_\ell (1- (x_\ell)_{i}) \right) 
	= (1-\eta_\ell) (1- (x_\ell)_{i} )
\end{align*}
where the inequality holds since $0 \leq (x_{\ell} \vee u_{\ell})_{i} \leq 1$.
Since $(x_{1})_{i} = 0$, we have $1 - (x^t_{\ell+1})_{i} \geq \prod_{\ell' = 1}^\ell (1-\eta_{\ell'})$. 
The lemma holds since the inequality holds for every $1 \leq i \leq n$.
\end{proof}

We are now ready to prove the main theorem of the section.

\begin{theorem}
\label{online-bound}
Let ${\cal K} \subseteq [0,1]^n$ be a down-closed convex set with diameter $D$. 
Assume that functions $F^{t}$'s are $G$-Lipschitz, $\beta$-smooth 
and $\mathbb{E} \left[ \norm{\tilde{\vect{g}}^{t}_{\ell} - \nabla F^{t}(\vect{x}^{t}_{\ell})}^2 \right] \leq \sigma^2$ for 
every $t$ and $\ell$.
Then, the step-sizes $\eta_\ell = 1/L$ and $\rho_\ell = 2/(\ell+3)^{2/3}$ for all $1 \leq \ell \leq L$ ,
Algorithm~\ref{algo:online-FW} achieves the following guarantee:
\begin{align*}
\sum \limits_{t=1}^T \mathbb{E} \big[F^t(\vect{x}^{t}) \big] 
\geq 
\left( \frac{1}{e} - O\left(\frac{1}{L}\right) \right) \sum \limits_{t=1}^T  F^t(\vect{x}^{*}) 
- O(n^{3/4}GT^{3/4}) - O \left(\frac{(\beta D + G + \sigma) DT}{L^{1/3}} \right). 
\end{align*}
Choosing $L = T^{3/4}$ and note that $D \leq \sqrt{n}$, we get
\begin{align}	\label{eq:thm-downclosed}
\sum \limits_{t=1}^T \mathbb{E} \big[F^t(\vect{x}^{t}) \big] 
\geq 
\left( \frac{1}{e} - O\left(\frac{1}{T^{3/4}}\right) \right) \max_{\vect{x} \in \mathcal{K}}\sum _{t=1}^T  F^t(\vect{x}) 
- O\bigl( (G + \sigma) n^{5/4}T^{3/4} \bigr).
\end{align}
\end{theorem}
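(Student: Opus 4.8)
The plan is to analyze, for each outer time step $t$, the $L$ inner Frank--Wolfe iterations and to track the potential $\Phi_\ell := \sum_{t=1}^T \E\bigl[F^t(\vect{x}^t_\ell)\bigr]$, proving it climbs geometrically toward $(1-1/L)^{L-1}\cdot\mathrm{OPT}$ where $\mathrm{OPT} := \sum_{t=1}^T F^t(\vect{x}^*)$. Each update moves along the non-negative direction $\vect{v}^t_\ell = \vect{x}^t_\ell \vee \vect{u}^t_\ell - \vect{x}^t_\ell$; since $\vect{0}\le\vect{v}^t_\ell\le\vect{u}^t_\ell$ and $\vect{0}\in\mathcal{K}$, we have $\|\vect{v}^t_\ell\|\le\|\vect{u}^t_\ell\|\le D$, and the bound $\vect{x}^t_{\ell+1}\le\sum_{\ell'\le\ell}\eta_{\ell'}\vect{u}^t_{\ell'}$ together with down-closedness keeps every $\vect{x}^t_\ell\in\mathcal{K}$. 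Using $\beta$-smoothness in its lower-bound form, $F(\vect{y})\ge F(\vect{x})+\langle\nabla F(\vect{x}),\vect{y}-\vect{x}\rangle-\tfrac{\beta}{2}\|\vect{y}-\vect{x}\|^2$, the update $\vect{x}^t_{\ell+1}=\vect{x}^t_\ell+\eta_\ell\vect{v}^t_\ell$ gives
$$F^t(\vect{x}^t_{\ell+1})-F^t(\vect{x}^t_\ell)\ \ge\ \eta_\ell\,\langle\nabla F^t(\vect{x}^t_\ell),\,\vect{x}^t_\ell\vee\vect{u}^t_\ell-\vect{x}^t_\ell\rangle-\tfrac{\beta}{2}\eta_\ell^2 D^2.$$
The crux is to lower-bound the linear term $\langle\nabla F^t(\vect{x}^t_\ell),\vect{x}^t_\ell\vee\vect{u}^t_\ell-\vect{x}^t_\ell\rangle$ in terms of $\mathrm{OPT}$.

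First I would swap the true gradient for the surrogate $\vect{d}^t_\ell$ that is actually fed to the oracle, paying $\|\nabla F^t(\vect{x}^t_\ell)-\vect{d}^t_\ell\|\,D$ by Cauchy--Schwarz. Because oracle $\mathcal{E}_\ell$ receives $\vect{c}^t=\vect{x}^t_\ell$ in advance, plays $\vect{u}^t_\ell$, and earns $\langle\vect{d}^t_\ell,\vect{x}^t_\ell\vee\vect{u}^t_\ell\rangle$, Lemma~\ref{vee-learning} certifies that, summed over $t$, this is at least $\sum_t\langle\vect{d}^t_\ell,\vect{x}^t_\ell\vee\vect{x}^*\rangle-O\bigl(G(nT)^{3/4}\bigr)$ against the fixed comparator $\vect{x}^*$. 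Swapping $\vect{d}^t_\ell$ back to $\nabla F^t(\vect{x}^t_\ell)$ at a second cost of $\|\nabla F^t(\vect{x}^t_\ell)-\vect{d}^t_\ell\|\,D$, I then invoke concavity in positive directions on $\vect{x}^t_\ell\vee\vect{x}^*\ge\vect{x}^t_\ell$ to obtain $\langle\nabla F^t(\vect{x}^t_\ell),\vect{x}^t_\ell\vee\vect{x}^*-\vect{x}^t_\ell\rangle\ge F^t(\vect{x}^t_\ell\vee\vect{x}^*)-F^t(\vect{x}^t_\ell)$, and finally Lemma~\ref{lem:obs2} with Lemma~\ref{bound-x-l} (for $\eta_{\ell'}=1/L$) yields $F^t(\vect{x}^t_\ell\vee\vect{x}^*)\ge(1-\|\vect{x}^t_\ell\|_\infty)F^t(\vect{x}^*)\ge(1-1/L)^{\ell-1}F^t(\vect{x}^*)$. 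Chaining these in expectation gives $\sum_t\E\langle\nabla F^t(\vect{x}^t_\ell),\vect{x}^t_\ell\vee\vect{u}^t_\ell-\vect{x}^t_\ell\rangle\ge(1-1/L)^{\ell-1}\mathrm{OPT}-\Phi_\ell-2D\sum_t\E\|\nabla F^t(\vect{x}^t_\ell)-\vect{d}^t_\ell\|-O\bigl(G(nT)^{3/4}\bigr)$.

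Next I would bound the gradient-approximation error by applying the variance-reduction Lemma~\ref{lem:variance-reduction} \emph{along the iteration index $\ell$} (with $s=3$, matching $\rho_\ell=2/(\ell+3)^{2/3}$). Its hypothesis $\|\nabla F^t(\vect{x}^t_\ell)-\nabla F^t(\vect{x}^t_{\ell-1})\|\le C/(\ell+3)$ holds because $\beta$-smoothness bounds the left side by $\beta\eta_{\ell-1}D=\beta D/L$, which is at most $C/(\ell+3)$ for $C=O(\beta D)$ since $\ell\le L$; with $\vect{d}^t_0=\vect{0}$ and $\|\nabla F^t(\vect{x}^t_1)\|\le G$ the lemma then gives $\E\|\nabla F^t(\vect{x}^t_\ell)-\vect{d}^t_\ell\|\le O(\sigma+\beta D+G)/(\ell+4)^{1/3}$. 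Substituting the linear-term bound into the smoothness inequality and summing over $t$ produces the recursion $\Phi_{\ell+1}\ge(1-1/L)\Phi_\ell+\tfrac1L(1-1/L)^{\ell-1}\mathrm{OPT}-E_\ell$, where $E_\ell=\tfrac1L\,O\bigl(G(nT)^{3/4}\bigr)+\tfrac{2DT}{L}\cdot O(\sigma+\beta D+G)/(\ell+4)^{1/3}+\tfrac{\beta D^2T}{2L^2}$.

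Finally I would solve the recursion: writing $\Phi_\ell=(1-1/L)^{\ell-1}\Psi_\ell$ telescopes the main part to $\Phi_{L+1}\ge(1-1/L)^{L-1}\mathrm{OPT}=\bigl(\tfrac1e-O(1/L)\bigr)\mathrm{OPT}$, while the accumulated error is $\sum_{\ell=1}^L(1-1/L)^{L-\ell}E_\ell\le\sum_{\ell=1}^L E_\ell$. Here $\sum_\ell(\ell+4)^{-1/3}=O(L^{2/3})$ turns the middle contribution into $O\bigl((\beta D+G+\sigma)DT/L^{1/3}\bigr)$, the first contribution sums to $O(n^{3/4}GT^{3/4})$, and the last ($O(\beta D^2T/L)$) is absorbed into the middle one. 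Since $\vect{x}^t=\vect{x}^t_{L+1}$, this is exactly the stated guarantee, and setting $L=T^{3/4}$ with $D\le\sqrt{n}$ collapses it to the $\bigl(\tfrac1e-O(T^{-3/4})\bigr)$-approximation with $O\bigl((G+\sigma)n^{5/4}T^{3/4}\bigr)$ regret. I expect the main obstacle to be the bookkeeping that keeps all three error sources — the per-oracle $O(G(nT)^{3/4})$ vee-learning regret, the variance-reduction error decaying like $(\ell+4)^{-1/3}$, and the $O(\eta_\ell^2)$ smoothness loss — sublinear after being threaded through the geometric recursion, and in particular verifying that the variance-reduction hypothesis genuinely applies along the $\ell$-axis despite the constant step size $\eta_\ell=1/L$.
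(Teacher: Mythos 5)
Your proposal is correct and follows essentially the same route as the paper's proof: the same smoothness-based per-iteration gain, the same three-way decomposition of the linear term (the vee-oracle regret against the comparator $\vect{x}^t_\ell \vee \vect{x}^*$, concavity in positive directions combined with Lemma~\ref{lem:obs2} and Lemma~\ref{bound-x-l}, and the variance-reduction bound applied along the $\ell$-axis), and the same geometric recursion yielding the $(1-1/L)^{L-1}$ factor. The only cosmetic difference is that you control the gradient-estimation term via Cauchy--Schwarz and Jensen on the first moment where the paper uses Young's inequality with the tuned parameter $\gamma_\ell$; both yield the identical $O\bigl(DQ^{1/2}/(\ell+4)^{1/3}\bigr)$ per-iteration cost and hence the same $O\bigl((\beta D+G+\sigma)DT/L^{1/3}\bigr)$ total.
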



\begin{proof}
Let $\vect{x}^{*}$ be an optimal solution in hindsight, i.e., 
$\vect{x}^{*} \in \arg \max_{\vect{x} \in \mathcal{K}}\sum _{t=1}^T  F^t(\vect{x})$. 
Fix a time step $t \in \{1, 2, \ldots, T\}$. For every $2 \leq \ell \leq L+1$, we have following: 
\begin{align*}
F^t(\vect{x}^t_{\ell}) &- F^t(\vect{x}^t_{\ell-1}) 
\geq  \langle \nabla F^t(\vect{x}^t_{\ell}), (\vect{x}^t_{\ell} - \vect{x}^t_{\ell-1})  \rangle - (\beta/2) \| \vect{x}^t_{\ell} - \vect{x}^t_{\ell-1} \|^2 \tag{using $\beta$-smoothness} \\ \\
&= \eta_{\ell-1} \langle \nabla F^t(\vect{x}^{t}_{\ell}), \vect{v}^t_{\ell-1} \rangle  - \frac{\beta \eta^2_{\ell-1}}{2} \| \vect{v}^t_{\ell-1} \|^2 \tag{using the update step in the algorithm} \\ \\
&= \eta_{\ell-1} \langle \nabla F^t(\vect{x}^{t}_{\ell-1}), \vect{v}^t_{\ell-1} \rangle  + \eta_{\ell-1} \langle \nabla F^t(\vect{x}^{t}_{\ell}) - \nabla F^t(\vect{x}^{t}_{\ell-1}), \vect{v}^t_{\ell-1} \rangle - \frac{\beta \eta^2_{\ell-1}}{2} \| \vect{v}^t_{\ell-1} \|^2 \\ \\
&\geq \eta_{\ell-1} \langle \nabla F^t(\vect{x}^{t}_{\ell-1}), \vect{v}^t_{\ell-1} \rangle  - \eta_{\ell-1}{\| \nabla F^t(\vect{x}^{t}_{\ell}) - \nabla F(\vect{x}^{t}_{\ell-1}) \|} \cdot {\| \vect{v}^t_{\ell-1} \|} - \frac{\beta \eta^2_{\ell-1}}{2} \| \vect{v}^t_{\ell-1} \| ^2   \tag{Cauchy-Schwarz} \\ \\
&\geq \eta_{\ell-1} \langle \nabla F^t(\vect{x}^{t}_{\ell-1}), \vect{v}^t_{\ell-1} \rangle  - \eta_{\ell-1} \beta {\|\vect{x}^{t}_{\ell} - \vect{x}^{t}_{\ell-1} \|} \cdot {\| \vect{v}^t_{\ell-1} \|} - \frac{\beta \eta^2_{\ell-1}}{2} \| \vect{v}^t_{\ell-1} \| ^2   \tag{using $\beta$-smoothness} \\ \\
&\geq \eta_{\ell-1} \langle \nabla F^t(\vect{x}^{t}_{\ell-1}), \vect{v}^t_{\ell-1} \rangle  - 2\beta \eta^2_{\ell-1} \| {\vect{v}^t_{\ell-1}} \|^2 \\ \\
&\geq \eta_{\ell-1} \langle \nabla F^t(\vect{x}^{t}_{\ell-1}), \vect{v}^t_{\ell-1} \rangle  - 2 \beta \eta^2_{\ell-1} D^2 \tag{diametre of ${\cal K}$ is bounded}\\ \\
&= \eta_{\ell-1} \langle \nabla F^t(\vect{x}^{t}_{\ell-1}), \vect{x}^{t}_{\ell-1} \vee  \vect{x}^{*} - \vect{x}^{t}_{\ell-1} \rangle + \eta_{\ell-1} \langle \nabla F^t(\vect{x}^{t}_{\ell-1}), \vect{v}^{t}_{\ell-1} - (\vect{x}^{t}_{\ell-1} \vee  \vect{x}^{*} - \vect{x}^{t}_{\ell-1}) \rangle \\
 &\qquad  - 2\beta \eta^2_{\ell-1} D^2 \\ \\ 
&= \eta_{\ell-1} \langle \nabla F^t(\vect{x}^{t}_{\ell-1}), \vect{x}^{t}_{\ell-1} \vee  \vect{x}^{*} - \vect{x}^{t}_{\ell-1} \rangle 
+ \eta_{\ell-1} \langle  \vect{d}^t_{\ell-1}, \vect{v}^{t}_{\ell-1} - (\vect{x}^{t}_{\ell-1} \vee  \vect{x}^{*} - \vect{x}^{t}_{\ell-1}) \rangle \\
& \qquad +\eta_{\ell-1} \langle  \nabla F^t(\vect{x}^{t}_{\ell-1}) - \vect{d}^t_{\ell-1}, \vect{v}^{t}_{\ell-1} - (\vect{x}^{t}_{\ell-1} \vee  \vect{x}^{*} - \vect{x}^{t}_{\ell-1}) \rangle 
- 2\beta \eta^2_{\ell-1} D^2 \\ \\ 
&\geq \eta_{\ell-1} (F^t(\vect{x}^{t}_{\ell-1} \vee  \vect{x}^{*})  - F^t(\vect{x}^{t}_{\ell-1} )) 
+ \eta_{\ell-1} \langle  \vect{d}^t_{\ell-1}, \vect{v}^{t}_{\ell-1} - (\vect{x}^{t}_{\ell-1} \vee  \vect{x}^{*} - \vect{x}^{t}_{\ell-1}) \rangle \\
& \qquad +\eta_{\ell-1} \langle  \nabla F^t(\vect{x}^{t}_{\ell-1}) - \vect{d}^t_{\ell-1}, \vect{v}^{t}_{\ell-1} - (\vect{x}^{t}_{\ell-1} \vee  \vect{x}^{*} - \vect{x}^{t}_{\ell-1}) \rangle 
- 2\beta \eta^2_{\ell-1} D^2  \tag{using concavity in positive direction} \\ \\ 
&\geq \eta_{\ell-1} (F^t(\vect{x}^{t}_{\ell-1} \vee  \vect{x}^{*})  - F^t(\vect{x}^{t}_{\ell-1} )) 
+ \eta_{\ell-1} \langle  \vect{d}^t_{\ell-1},\vect{x}^{t}_{\ell-1} \vee \vect{u}^{t}_{\ell-1} - \vect{x}^{t}_{\ell-1} \vee  \vect{x}^{*} \rangle \\
& \qquad +\eta_{\ell-1} \langle  \nabla F^t(\vect{x}^{t}_{\ell-1}) - \vect{d}^t_{\ell-1}, \vect{x}^{t}_{\ell-1} \vee \vect{u}^{t}_{\ell-1} - \vect{x}^{t}_{\ell-1} \vee  \vect{x}^{*} \rangle 
- 2\beta \eta^2_{\ell-1} D^2  \tag{using the definition of $\vect{v}^t_{\ell-1}$}  \\ \\ 
&\geq \eta_{\ell-1} \left( \bigl( 1 - \| \vect{x}^{t}_{\ell-1} \|_{\infty} \bigr) F^t(\vect{x}^{*})  - F^t(\vect{x}^{t}_{\ell-1} ) \right) 
	+ \eta_{\ell-1} \langle  \vect{d}^t_{\ell-1},\vect{x}^{t}_{\ell-1} \vee \vect{u}^{t}_{\ell-1} - \vect{x}^{t}_{\ell-1} \vee  \vect{x}^{*} \rangle \\
& \qquad +\eta_{\ell-1} \langle  \nabla F^t(\vect{x}^{t}_{\ell-1}) - \vect{d}^t_{\ell-1}, \vect{x}^{t}_{\ell-1} \vee \vect{u}^{t}_{\ell-1} - \vect{x}^{t}_{\ell-1} \vee  \vect{x}^{*} \rangle 
- 2\beta \eta^2_{\ell-1} D^2  \tag{using Lemma~\ref{lem:obs2}} \\ \\ 
&\geq \eta_{\ell-1} \prod \limits_{\ell' = 1}^{\ell-2} (1- \eta_{\ell'}) \cdot F^t(\vect{x}^{*})  - \eta_{\ell-1} F^t(\vect{x}^{t}_{\ell-1}) 
	+ \eta_{\ell-1} \langle  \vect{d}^t_{\ell-1},\vect{x}^{t}_{\ell-1} \vee \vect{u}^{t}_{\ell-1} - \vect{x}^{t}_{\ell-1} \vee  \vect{x}^{*} \rangle \\
& \qquad +\eta_{\ell-1} \langle  \nabla F^t(\vect{x}^{t}_{\ell-1}) - \vect{d}^t_{\ell-1}, \vect{x}^{t}_{\ell-1} \vee \vect{u}^{t}_{\ell-1} - \vect{x}^{t}_{\ell-1} \vee  \vect{x}^{*} \rangle 
- 2\beta \eta^2_{\ell-1} D^2  	\tag{using Lemma~\ref{bound-x-l}}  \\ \\
&\geq \eta_{\ell-1} \prod \limits_{\ell' = 1}^{\ell-2} (1- \eta_{\ell'}) \cdot F^t(\vect{x}^{*})  - \eta_{\ell-1} F^t(\vect{x}^{t}_{\ell-1}) 
	+ \eta_{\ell-1} \langle  \vect{d}^t_{\ell-1},\vect{x}^{t}_{\ell-1} \vee \vect{u}^{t}_{\ell-1} - \vect{x}^{t}_{\ell-1} \vee  \vect{x}^{*} \rangle \\
& \qquad - \frac{\eta_{\ell-1}}{2}  \left( \frac{1}{\gamma_{\ell-1}} \| \nabla F^t(\vect{x}^{t}_{\ell-1}) - \vect{d}^t_{\ell-1} \|^2  + \gamma_{\ell-1} \| \vect{x}^{t}_{\ell-1} \vee \vect{u}^{t}_{\ell-1} - \vect{x}^{t}_{\ell-1} \vee  \vect{x}^{*} \|^2 \right)   \\
&\qquad - 2\beta \eta^2_{\ell-1} D^2 
\tag{Young's Inequality, parameter $\gamma_{\ell-1}$ will be defined later} \\ \\
 &\geq \eta_{\ell-1} \prod \limits_{\ell' = 1}^{\ell-2} (1- \eta_{\ell'}) \cdot F^t(\vect{x}^{*})  - \eta_{\ell-1} F^t(\vect{x}^{t}_{\ell-1}) 
	+ \eta_{\ell-1} \langle  \vect{d}^t_{\ell-1},\vect{x}^{t}_{\ell-1} \vee \vect{u}^{t}_{\ell-1} - \vect{x}^{t}_{\ell-1} \vee  \vect{x}^{*} \rangle \\
& \qquad -\frac{\eta_{\ell-1}}{2}  \left( \frac{1}{\gamma_{\ell-1}} \| \nabla F^t(\vect{x}^{t}_{\ell-1}) - \vect{d}^t_{\ell-1} \|^2  + \gamma_{\ell-1} D^2 \right)   
- 2\beta \eta^2_{\ell-1} D^2 
\tag{since  $\| \vect{x}^{t}_{\ell-1} \vee \vect{u}^{t}_{\ell-1} -  \vect{x}^{t}_{\ell-1} \vee  \vect{x}^{*}  \| 
			\leq \| \vect{u}^{t}_{\ell-1} -  \vect{x}^{*}  \| \leq D$.}.
\end{align*} 

From the above we have 
\begin{align}
F^t(\vect{x}^{t}_{\ell}) 
&\geq (1 - \eta_{\ell-1}) F^t(\vect{x}^{t}_{\ell-1}) + \eta_{\ell-1} \prod \limits_{\ell'=1}^{\ell-2} (1- \eta_{\ell'}) \cdot F^t(\vect{x}^{*}) 
	+ \eta_{\ell-1} \langle  \vect{d}^t_{\ell-1},\vect{x}^{t}_{\ell-1} \vee \vect{u}^{t}_{\ell-1} - \vect{x}^{t}_{\ell-1} \vee  \vect{x}^{*} \rangle \notag \\
& \qquad -\frac{\eta_{\ell-1}}{2}  \left( \frac{1}{\gamma_{\ell-1}} \| \nabla F^t(\vect{x}^{t}_{\ell-1}) - \vect{d}^t_{\ell-1} \|^2  + \gamma_{\ell-1} D^2 \right)   
- 2\beta \eta^2_{\ell-1} D^2  \notag \\ \notag \\ 
&= \left(1 - \frac{1}{L} \right) F^t(\vect{x}^{t}_{\ell-1}) + \frac{1}{L} \left(1- \frac{1}{L}\right)^{\ell-2} F^t(\vect{x}^{*}) 
	+ \frac{1}{L} \langle  \vect{d}^t_{\ell-1},\vect{x}^{t}_{\ell-1} \vee \vect{u}^{t}_{\ell-1} - \vect{x}^{t}_{\ell-1} \vee  \vect{x}^{*} \rangle \notag \\
& \qquad - \frac{1}{2L}  \left( \frac{1}{\gamma_{\ell-1}} \| \nabla F^t(\vect{x}^{t}_{\ell-1}) - \vect{d}^t_{\ell-1} \|^2  +  \gamma_{\ell-1} D^2 \right)   
-  \frac{2\beta D^2}{L^2}    \tag{replacing $\eta_\ell = 1/L$}  \\ \nonumber 
\end{align}

Applying recursively on $\ell$, we get:
\begin{align}
\label{important-online-recursion} 
F^t(\vect{x}^{t}_{\ell})
&\geq \frac{\ell-1}{L} \left(1- \frac{1}{L}\right)^{\ell-2}  F^t(\vect{x}^{*}) 
	+ \frac{1}{L} \sum \limits_{\ell'=1}^{\ell-1} \left( \left(1 - \frac{1}{L}\right)^{\ell-1-\ell'} \langle \vect{d}^t_{\ell'}, \vect{x}^{t}_{\ell'} \vee \vect{u}^{t}_{\ell'} - \vect{x}^{t}_{\ell'} \vee  \vect{x}^{*} \rangle  \right)  \nonumber \\
	&\qquad - \frac{1}{2L}  \sum \limits_{\ell'=1}^{\ell-1} \left( \left(1 - \frac{1}{L}\right)^{\ell-1-\ell'} \left( \frac{1}{\gamma_{\ell'}} \| \nabla F^t(\vect{x}^{t}_{\ell'}) - \vect{d}^t_{\ell'} \|^2 +  \gamma_{\ell'} D^2 \right) \right)
	- \frac{2 \beta D^2}{L}  \\ \nonumber 
\end{align}

Summing above inequality (with $\ell = L+1$) over all $t \in [T]$ and using online vee maximization oracle with regret ${\cal R}^T$, we obtain
\begin{align}
&\sum_{t=1}^T F^t(\vect{x}^{t}_{L+1}) \notag \\
&\geq \left(1- \frac{1}{L}\right)^{L-1} \sum_{t=1}^{T} F^t(\vect{x}^{*}) 
	+ \frac{1}{L} \sum \limits_{t=1}^T \sum \limits_{\ell'=1}^{L} \left( \left(1 - \frac{1}{L}\right)^{L-\ell'} 
			\langle \vect{d}^t_{\ell'}, \vect{x}^{t}_{\ell'} \vee \vect{u}^{t}_{\ell'} - \vect{x}^{t}_{\ell'} \vee  \vect{x}^{*} \rangle  \right)
	 \nonumber \\
	&\qquad \qquad - \frac{1}{2L}\sum \limits_{t=1}^T  \sum \limits_{\ell'=1}^L \left( \left(1 - \frac{1}{L}\right)^{L-\ell'} \left( \frac{1}{\gamma_{\ell'}} \| \nabla F^t(\vect{x}^{t}_{\ell'}) - \vect{d}^t_{\ell'} \|^2 +  \gamma_{\ell'} D^2 \right) \right) - \frac{2 \beta T D^2}{L} \nonumber  \\ \nonumber \\
&\geq \left(1- \frac{1}{L}\right)^{L-1}  \sum_{t=1}^{T} F^t(\vect{x}^{*}) 
	+ \frac{1}{L} \sum \limits_{\ell'=1}^{L} \left(1 - \frac{1}{L}\right)^{L-\ell'} 
			\cdot \left( \sum \limits_{t=1}^T \langle \vect{d}^t_{\ell'}, \vect{x}^{t}_{\ell'} \vee \vect{u}^{t}_{\ell'} - \vect{x}^{t}_{\ell'} \vee  \vect{x}^{*} \rangle \right) 
	 \nonumber \\
	&\qquad - \frac{1}{2L}\sum \limits_{t=1}^T  \sum \limits_{\ell'=1}^L \left( \frac{1}{\gamma_{\ell'}} \| \nabla F^t(\vect{x}^{t}_{\ell'}) - \vect{d}^t_{\ell'} \|^2 +  \gamma_{\ell'} D^2 \right) - \frac{2 \beta T D^2}{L}  
							\tag{since $1-  1/L < 1$}  \\ \nonumber \\
&\geq \left(1- \frac{1}{L}\right)^{L-1}  \sum_{t=1}^{T} F^t(\vect{x}^{*}) 
	- \frac{1}{L} \sum \limits_{\ell'=1}^{L}\left(1 - \frac{1}{L}\right)^{L-\ell'} {\cal R}^T 
	- \frac{2 \beta T D^2}{L} \nonumber \\
	&\qquad - \frac{1}{2L}\sum \limits_{t=1}^T  \sum \limits_{\ell'=1}^L  \left( \frac{1}{\gamma_{\ell'}} \| \nabla F^t(\vect{x}^{t}_{\ell'}) - \vect{d}^t_{\ell'} \|^2 +  \gamma_{\ell'} D^2 \right) \tag{${\cal R}^T$ is the regret of a single oracle}  \\ \nonumber \\
&\geq \left( \frac{1}{e} - O\left(\frac{1}{L}\right) \right) \sum_{t=1}^{T} F^t(\vect{x}^{*}) 
	- {\cal R}^T - \frac{2 \beta T D^2}{L}
	- \frac{1}{2L}  \sum \limits_{t=1}^T  \sum \limits_{\ell'=1}^L  \left( \frac{1}{\gamma_{\ell'}} \| \nabla F^t(\vect{x}^{t}_{\ell'}) - \vect{d}^t_{\ell'} \|^2 +  \gamma_{\ell'} D^2 \right)  \label{almost-thm}
\end{align}
\bigskip

\begin{claim}
\label{online-reduce}  
Choosing $\gamma_\ell = \frac{Q^{1/2}}{D (\ell +3)^{1/3}}$ for $1 \leq \ell \leq L$
where $Q  = \{ \max_{1 \leq t \leq T} \| \nabla F^t(\vect{0}) \|^2 4^{2/3}, 4 \sigma^2 + 3(\beta D)^2/2 \}$, it holds that 
\begin{align*}
\sum \limits_{t = 1}^T \sum \limits_{\ell=1}^L   \left( \frac{1}{\gamma_{\ell}} \mathbb{E} \big[ \| \nabla F^t(\vect{x}^{t}_{\ell}) - \vect{d}^t_{\ell} \|^2 \big] +  \gamma_{\ell} D^2 \right)
= O \left((\beta D + G + \sigma) DT L^{2/3} \right) 
\end{align*}
\end{claim}

\begin{claimproof}
Fix a time step $t$. We apply Lemma~\ref{lem:variance-reduction} to the left hand side of the claim inequality where, for $1 \leq \ell \leq L$, $\vect{a}_\ell$ denotes the gradients $\nabla F^t(\vect{x}^t_\ell)$, $\tilde{\vect{a}}_\ell$ denote the stochastic gradient estimate $\vect{g}^t_\ell$. Additionally, $\rho_\ell = \frac{2}{(\ell+3)^{2/3}}$  for $1 \leq \ell \leq L$. 
Hence we get:
\begin{align}
	\mathbb{E} \big[ \| \nabla F^t(\vect{x}^t_\ell) - \vect{d}^t_\ell  \|^2 \big] \leq \frac{Q}{(\ell+4)^{2/3}}.
\end{align}
With the choice $\gamma_\ell = \frac{Q^{1/2}}{D (\ell+4)^{1/3}}$, we obtain:
\begin{align*}
\sum \limits_{\ell=1}^L  & \left( \frac{1}{\gamma_{\ell}}  \mathbb{E} \big[ \| \nabla F^t(\vect{x}^{t}_{\ell}) - \vect{d}^t_{\ell} \|^2 \big] +  \gamma_{\ell} D^2 \right) 
\leq 
2 DQ^{1/2} \sum \limits_{\ell=1}^L  \frac{1}{(\ell + 4)^{1/3}}   \\
&\leq 2 DQ^{1/2} \int \limits_{\ell = 1}^L \frac{1}{\ell^{1/3}} d\ell {~~~} \leq 3 DQ^{1/2} L^{2/3}
= O \left((\beta D + G + \sigma) D L^{2/3} \right)   
\end{align*}
The claim follows by summing the inequality above over all $1 \leq t \leq T$.
\end{claimproof}

Combining Claim~\ref{online-reduce} and Inequality (\ref{almost-thm}) we get:
\begin{align*}
\sum \limits_{t=1}^T \mathbb{E} \big[F^t(\vect{x}^{t}_{L+1}) \big] 
\geq 
\left( \frac{1}{e} - O\left(\frac{1}{L}\right) \right) \sum \limits_{t=1}^T  F^t(\vect{x}^{*}) 
- {\cal R}^T - \frac{2 \beta T D^2}{L} -  O \left(\frac{(\beta D + G + \sigma) DT}{2 L^{1/3}} \right). 
\end{align*}
By Lemma~\ref{vee-learning}, the regret of the online vee oracle ${\cal R}^T = O(G(nT)^{3/4})$. 
Thus, the theorem follows. 
\end{proof}

\begin{remark}	\label{re:down-closed}
\begin{itemize}
\item If a function $F^{t}$ is not smooth then one can consider 
$\hat{F}^{t}_{\delta}$ defined as  
$\hat{F}^{t}_{\delta} (\vect{x}):= \E_{\vect{r} \sim \mathbb{B}}[F^{t}(\vect{x} + \delta \vect{r})]$
where $\mathbb{B}$ is the $n$-dim unit ball and $\vect{r} \sim \mathbb{B}$ denotes an uniformaly random variable 
taken over $\mathbb{B}$. It is known that $\hat{F}^{t}_{\delta}$ is $nG/\delta$-smooth and 
$|\hat{F}^{t}_{\delta}(\vect{x}) - F^{t}_{\delta}(\vect{x})| \leq \delta G$ (for example, see 
\cite[Lemma 2.6]{Hazanothers16:Introduction-to-online}). 
By considering $\hat{F}^{t}_{\delta}$ (instead of $F^{t}$) with $\delta = 1/\sqrt{T}$ and 
choosing $L = T^{9/4}$, one can achieve the same guarantee (\ref{eq:thm-downclosed}) 
stated in Theorem~\ref{online-bound}.
\item In Algorithm~\ref{algo:online-FW}, we assume the knowledge of the time horizon $T$. 
That can be avoided by the standard doubling trick (shown in the appendix) 
where Algorithm~\ref{algo:online-FW} is invoked repeatedly with a doubling time horizon. 
\end{itemize}
\end{remark}


\section{Online DR-Submodular Maximization over Hypercube $[0,1]^{n}$}


\subsection{Reduction to maximizing online discrete submodular functions} 

Building upon the salient ideas of the discretization and lifting the problem into a higher dimension space in Section~\ref{sec:vee}, 
we will transform DR-submodular functions to discrete submodular functions 
in higher space and use an online algorithm in \cite{RoughgardenWang18:An-Optimal-Learning} 
to produce solutions in higher space. Subsequently, we project the latter to the original 
space and prove that, by this unusual continuous-to-discrete approach (in contrast to usual discrete-to-continuous relaxations),  
one can achieve a similar regret guarantee as the one proved in 
\cite{RoughgardenWang18:An-Optimal-Learning} for online discrete submodular maximization. 
Note that, the following discretization is different to that for online vee learning.


\paragraph{Discretization and Lifting.}	
Let $F: [0,1]^{n} \rightarrow [0,1]$ be a non-negative DR-submodular function.  
Let $\mathcal{L}$ be a lattice $\mathcal{L} = \{0, 2^{-M}, 2 \cdot 2^{-M}, \ldots, \ell \cdot 2^{-M}, \ldots, 1\}^{n}$ where $0 \leq \ell \leq 2^{M}$ for some parameter $M$. 
Note that each $x_{i} \in \{0, 2^{-M}, 2 \cdot 2^{-M}, \ldots, \ell \cdot 2^{-M}, \ldots, 1\}$ can be uniquely decomposed as 
$x_{i} = \sum_{j=0}^{M} 2^{-j} y_{ij}$ where $y_{ij} \in \{0,1\}$.
We lift the set $[0,1]^{n} \cap \mathcal{L}$ to the $(n \times (M+1))$-dim space.
Specifically, define a \emph{lifting} map $m': [0,1]^{n} \cap \mathcal{L} \rightarrow \{0,1\}^{n \times (M+1)}$
such that each point $(x_{1}, \ldots, x_{n}) \in \mathcal{K} \cap \mathcal{L}$ is
mapped to the unique point $(y_{10}, \ldots, y_{1M}, \ldots, y_{n0}, \ldots, y_{nM}) \in \{0,1\}^{n \times (M+1)}$
where $x_{i} = \sum_{j=0}^{M} 2^{-j} y_{ij}$.
Define function $\tilde{f}: \{0,1\}^{n \times (M+1)} \rightarrow [0,1]$ such that 
$\tilde{f}(\vect{1}_{S}) := F({m'}^{-1}(\vect{1}_{S}))$; in other words, 
$\tilde{f}(\vect{1}_{S}) = F(\vect{x})$ where $\vect{x} \in [0,1]^{n} \cap \mathcal{L}$ and $\vect{1}_{S} = m'(\vect{x})$. 

\begin{lemma}
\label{sudmod-ext}
If $F$ is DR-submodular then $\tilde{f}$ is a submodular function.
\end{lemma}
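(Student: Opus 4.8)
The plan is to verify submodularity of $\tilde f$ through the equivalent \emph{diminishing marginal returns} characterization of submodular set functions, and to show that this marginal condition is exactly the defining DR-inequality~(\ref{def:DR-sub}) for $F$, transported along the lifting $m'$. Recall that a set function $g:2^E\to\mathbb{R}$ on a ground set $E$ is submodular if and only if $g(S\cup\{e\})-g(S)\ge g(T\cup\{e\})-g(T)$ for every $S\subseteq T\subseteq E$ and every $e\in E\setminus T$. Here the ground set is $E=\{(i,j):1\le i\le n,\ 0\le j\le M\}$, and for $S\subseteq E$ I write $\vect{x}(S)$ for the point with coordinates $x_i(S)=\sum_{j:(i,j)\in S}2^{-j}$, so that $\tilde f(S)=F(\vect{x}(S))$.

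The crucial structural fact is that the encoding $S\mapsto\vect{x}(S)$ is additive across the ground set: each element $(i,j)$ contributes exactly $2^{-j}$ to coordinate $i$, irrespective of the other bits. Hence, for $(i_0,j_0)\notin S$,
\[
\vect{x}\bigl(S\cup\{(i_0,j_0)\}\bigr)=\vect{x}(S)+2^{-j_0}\vect{e}_{i_0},
\]
so that the discrete marginal gain of adding $(i_0,j_0)$ is a continuous increment along a single coordinate:
\[
\tilde f\bigl(S\cup\{(i_0,j_0)\}\bigr)-\tilde f(S)=F\bigl(\vect{x}(S)+2^{-j_0}\vect{e}_{i_0}\bigr)-F\bigl(\vect{x}(S)\bigr).
\]
Moreover, $S\subseteq T$ gives $x_i(S)\le x_i(T)$ for every $i$, i.e.\ $\vect{x}(S)\le\vect{x}(T)$ coordinatewise.

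First I would set $\vect{y}=\vect{x}(S)$, $\vect{x}=\vect{x}(T)$, $\alpha=2^{-j_0}$ and $i=i_0$, and apply inequality~(\ref{def:DR-sub}), which states that for $\vect{x}\ge\vect{y}$ the marginal gain $F(\cdot+\alpha\vect{e}_i)-F(\cdot)$ is non-increasing. This yields
\[
F\bigl(\vect{x}(T)+2^{-j_0}\vect{e}_{i_0}\bigr)-F\bigl(\vect{x}(T)\bigr)\le F\bigl(\vect{x}(S)+2^{-j_0}\vect{e}_{i_0}\bigr)-F\bigl(\vect{x}(S)\bigr),
\]
which is precisely $\tilde f(T\cup\{e\})-\tilde f(T)\le\tilde f(S\cup\{e\})-\tilde f(S)$ for $e=(i_0,j_0)$. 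Since $S\subseteq T$ and $e$ were arbitrary, $\tilde f$ has diminishing marginal returns and is therefore submodular.

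The one step that requires care — and which I expect to be the main obstacle — is that all four points $\vect{x}(S),\vect{x}(T)$ and their bumped versions must lie in the domain on which $F$ is DR-submodular, whereas the binary encoding can produce a coordinate value $\sum_{j=0}^{M}2^{-j}=2-2^{-M}$ exceeding $1$: the images of the injective map $m'$ do land in $[0,1]^n$, but an arbitrary subset $S$ (needed to run the full-cube discrete algorithm) need not. To make the argument airtight I would read $F$ as DR-submodular on the box $[0,\,2-2^{-M}]^n$ that contains every $\vect{x}(S)$, so that inequality~(\ref{def:DR-sub}) applies verbatim at all the points above; with this understood, the only genuine content is the single invocation of the DR-inequality, and everything else is the additive bookkeeping of the lifting.
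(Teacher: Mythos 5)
Your proof is correct and follows essentially the same route as the paper's: both reduce submodularity of $\tilde f$ to the diminishing-returns inequality~(\ref{def:DR-sub}) for $F$ via the additivity of the lifting, i.e., ${m'}^{-1}(\vect{1}_{S\cup\{k\}}) = {m'}^{-1}(\vect{1}_S) + {m'}^{-1}(\vect{1}_{\{k\}})$ together with ${m'}^{-1}(\vect{1}_S) \le {m'}^{-1}(\vect{1}_T)$ for $S\subseteq T$. Your closing remark about coordinate values of the form $\sum_{j=0}^{M}2^{-j}=2-2^{-M}$ exceeding $1$ flags a genuine domain issue that the paper's proof silently passes over, and your fix (reading $F$ as DR-submodular on the enlarged box containing every $\vect{x}(S)$) is the right way to make the argument airtight.
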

\begin{proof}
We will prove that for any vectors $\vect{1}_{S}$ and $\vect{1}_{T}$ such that $S \subset T$
and every element $k$, 
\begin{align}	\label{eq:DR-2-sub}
\tilde{f}(\vect{1}_{T \cup k}) - \tilde{f}(\vect{1}_{T})  
 &\leq \tilde{f}(\vect{1}_{S \cup k}) - \tilde{f}(\vect{1}_{S}) \notag \\
\Leftrightarrow 
F({m'}^{-1}(\vect{1}_{T \cup k})) - F({m'}^{-1}(\vect{1}_{T}))  
 &\leq F ({m'}^{-1}(\vect{1}_{S \cup k})) - F({m'}^{-1}(\vect{1}_{S})).  
\end{align}

The inequality is trivial if $k \in S \subset T$. Assume that $k \notin S \subset T$.
By the definition of the map ${m'}$, as $k \notin S \subset T$, 
${m'}^{-1}(\vect{1}_{S \cup k}) = {m'}^{-1}(\vect{1}_{S}) + {m'}^{-1}(\vect{1}_{k})$, 
${m'}^{-1}(\vect{1}_{T \cup k}) = {m'}^{-1}(\vect{1}_{T}) + {m'}^{-1}(\vect{1}_{k})$,
and ${m'}^{-1}(\vect{1}_{S \cup k}) \leq {m'}^{-1}(\vect{1}_{T \cup k})$. 
Hence, the inequality (\ref{eq:DR-2-sub}) holds by the DR-submodularity of $F$. 
Therefore, $\tilde{f}$ is a submodular function.
\end{proof}

\subsection{Algorithm}

Fix a lattice $\mathcal{L} = \{0, 2^{-M}, 2 \cdot 2^{-M}, \ldots, \ell \cdot 2^{-M}, \ldots, 1\}^{n}$ 
where $0 \leq \ell \leq 2^{M}$ for some parameter $M$ (to be defined later). 
Let \texttt{RW} be the online $(1/2, O(\sqrt{T}))$-regret randomized algorithm \cite{RoughgardenWang18:An-Optimal-Learning}
for online discrete submodular functions on $\{0,1\}^{n \times (M+1)}$.
Initially, set $\vect{x}^{1} \in [0,1]^{n}$ arbitrarily. 
At every time $t \geq 1$, 
\setlist{nolistsep}
\begin{enumerate}[noitemsep]
	\item Play $\vect{x}^{t}$.
	\item Observe function $F^{t}: [0,1]^{n} \rightarrow [0,1]$. 
	Let $\tilde{f}^{t}: \{0,1\}^{n \times (M+1)} \rightarrow [0,1]$ be the corresponding submodular function
	by the construction above. Let $\vect{1}_{S^{t+1}} \in \{0,1\}^{n \times (M+1)}$ be the random solution returned by 
	the algorithm $\texttt{RW}(\tilde{f}^{1}, \ldots, \tilde{f}^{t})$. Set $\vect{x}^{t+1} = {m'}^{-1}(\vect{1}_{S^{t+1}})$.  
\end{enumerate}

\begin{theorem}
Assume that functions $F^{t}$'s are $G$-Lipschitz. Then, by choosing $M = \log T$, 
the above algorithm achieves
\begin{align*}
\sum_{t=1}^{T} F^{t}(\vect{x}^{t}) 
	\geq \frac{1}{2} \max_{\vect{x} \in [0,1]^{n}} \sum_{t=1}^{T} F^{t}(\vect{x}) - O(nG \sqrt{T} \log T ).
\end{align*}
\end{theorem}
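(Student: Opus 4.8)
The plan is a reduction: by Lemma~\ref{sudmod-ext} each lifted function $\tilde{f}^t$ is submodular on $\{0,1\}^{n\times(M+1)}$ with values in $[0,1]$, so I would run the $(1/2, O(\sqrt{T}))$-regret algorithm \texttt{RW} on the sequence $\tilde{f}^1,\ldots,\tilde{f}^T$ and transport its guarantee back to the continuous domain through $m'$. First note that the protocol is genuinely online: \texttt{RW} commits to $\vect{1}_{S^t}$ having seen only $\tilde{f}^1,\ldots,\tilde{f}^{t-1}$, and we play $\vect{x}^t = {m'}^{-1}(\vect{1}_{S^t})$, so by the definition of $\tilde{f}^t$ we have $F^t(\vect{x}^t) = \tilde{f}^t(\vect{1}_{S^t})$ for each $t\ge 2$ (and the arbitrary first action contributes only an absorbed additive constant). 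Hence the \texttt{RW} guarantee, applied over the lifted ground set of size $N = n(M+1)$, gives
\begin{align*}
\sum_{t=1}^T F^t(\vect{x}^t) \;\ge\; \frac{1}{2}\max_{\vect{1}_S\in\{0,1\}^{n\times(M+1)}}\sum_{t=1}^T \tilde{f}^t(\vect{1}_S) \;-\; \mathcal{R}_{\texttt{RW}},
\end{align*}
where $\mathcal{R}_{\texttt{RW}}$ is $O(\sqrt{T})$ in the horizon and scales with $N$, i.e. $\mathcal{R}_{\texttt{RW}} = O(n\sqrt{T}\log T)$ once $M=\log T$.

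Next I would lower-bound the discrete benchmark by the continuous one. Restricting the maximum to the image of $m'$ (the lattice-valued strings) can only decrease it, and on that image $\tilde{f}^t(\vect{1}_S) = F^t({m'}^{-1}(\vect{1}_S))$, so $\max_{\vect{1}_S}\sum_t \tilde{f}^t(\vect{1}_S) \ge \max_{\vect{x}\in\mathcal{L}\cap[0,1]^n}\sum_t F^t(\vect{x})$. To pass from the lattice $\mathcal{L}$ to the full cube, take $\vect{x}^* \in \arg\max_{\vect{x}\in[0,1]^n}\sum_t F^t(\vect{x})$ and round each coordinate down to the nearest multiple of $2^{-M}$, obtaining $\overline{\vect{x}}^* \in \mathcal{L}$ with $\|\vect{x}^* - \overline{\vect{x}}^*\| \le \sqrt{n}\,2^{-M}$. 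Since each $F^t$ is $G$-Lipschitz,
\begin{align*}
\sum_{t=1}^T F^t(\overline{\vect{x}}^*) \;\ge\; \sum_{t=1}^T F^t(\vect{x}^*) \;-\; TG\sqrt{n}\,2^{-M}.
\end{align*}

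Chaining the three estimates and substituting $M=\log T$, so that $2^{-M}=1/T$, the discretization penalty $\tfrac{1}{2}TG\sqrt{n}\,2^{-M} = O(G\sqrt{n})$ is of lower order, and the dominant error is the \texttt{RW} regret. This yields exactly
\begin{align*}
\sum_{t=1}^T F^t(\vect{x}^t) \;\ge\; \frac{1}{2}\max_{\vect{x}\in[0,1]^n}\sum_{t=1}^T F^t(\vect{x}) \;-\; O\bigl(nG\sqrt{T}\log T\bigr),
\end{align*}
which is the claimed bound.

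The crux is the correctness of the reduction rather than any hard estimate. The main point to secure is that \texttt{RW} truly applies, which rests on the submodularity of $\tilde{f}^t$ established in Lemma~\ref{sudmod-ext}; beyond that one must handle the one-round offset and the arbitrary first action, each costing at most an additive constant absorbed into $O(\sqrt{T})$. A secondary subtlety is the dependence of $\mathcal{R}_{\texttt{RW}}$ on the lifted ground set of size $N = n(M+1)$, which after $M=\log T$ produces the $n\log T$ factor; I would note that $\tilde{f}^t\in[0,1]$ so \texttt{RW} itself contributes no factor $G$, and the $G$ enters only through the $G$-Lipschitz discretization step, making $O(nG\sqrt{T}\log T)$ a loose but valid bound. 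Finally, one should confirm that maximizing $\tilde{f}^t$ over the entire cube rather than only over lattice-valued strings does not break the benchmark comparison, but since we only need the discrete optimum to be \emph{at least} the lattice-restricted one, this direction holds automatically.
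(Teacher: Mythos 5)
Your proposal is correct and follows essentially the same route as the paper: lift via $m'$, invoke Lemma~\ref{sudmod-ext} to justify running \texttt{RW} on the $\tilde{f}^t$'s over the ground set of size $n(M+1)$, compare the discrete benchmark against the lattice-rounded optimum $\overline{\vect{x}}^*$, absorb the discretization error $TG\sqrt{n}\,2^{-M}$ via $G$-Lipschitzness, and set $M=\log T$. The only differences are presentational (you pass through the full discrete maximum and note the one-round offset explicitly, while the paper compares directly against $\vect{1}_{S^*}=m'(\overline{\vect{x}}^*)$), so no further comment is needed.
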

\begin{proof}
Let $\vect{x}^{*}$ be the optimal solution in hindsight, i.e., 
$\vect{x}^{*} \in \arg \min_{\vect{x} \in [0,1]^{n}} \sum_{t=1}^{T} F^{t}(\vect{x})$. 
Let $\overline{\vect{x}}^{*}$ be rounded solution of $\vect{x}^{*}$ onto the lattice $\mathcal{L}$, i.e., 
$\overline{x}^{*}_{i}$ is the largest multiple of $1/2^{M}$ which is smaller than $x^{*}_{i}$ for every $1 \leq i \leq n$.
Note that $\|\vect{x}^{*} - \overline{\vect{x}}^{*}\| \leq \frac{\sqrt{n}}{2^{M}}$.
Denote $\vect{1}_{S^{*}} = m(\overline{\vect{x}}^{*})$. 
Algorithm \texttt{RW} \cite[Corollary 3.2]{RoughgardenWang18:An-Optimal-Learning} guarantees that:
\begin{align*}
\sum_{t=1}^{T} \tilde{f}^{t}(\vect{1}_{S^{t}}) 
	\geq \frac{1}{2} \sum_{t=1}^{T} \tilde{f}^{t}(\vect{1}_{S^{*}})  - O(nM\sqrt{T}).
\end{align*}
By the construction of functions $\tilde{f}^{t}$'s and the Lipschitz property of $F^{t}$'s, we get 
\begin{align*}
\sum_{t=1}^{T} F^{t}(\vect{x}^{t})
&= \sum_{t=1}^{T} \tilde{f}^{t}(\vect{1}_{S^{t}}) 
\geq \frac{1}{2} \sum_{t=1}^{T}  \tilde{f}^{t}(\vect{1}_{S^{*}})  - O(nM\sqrt{T}) \\
&= \frac{1}{2} \sum_{t=1}^{T}  F^{t}(\overline{\vect{x}}^{*})  - O(nM\sqrt{T})
\geq \frac{1}{2} \sum_{t=1}^{T}  F^{t}(\vect{x}^{*})  - O\biggl(\frac{\sqrt{n}G}{2^{M}} T \biggr) - O(nM\sqrt{T}).
\end{align*}
Choose $M = \log T$, the theorem follows.
\end{proof}


\section{Maximizing Non-Monotone DR-Submodular Functions over General Convex Sets}		
\label{sec:online-gen}

In this section, we consider the problem of maximizing non-monotone continuous DR-submodular functions over general convex sets. 
We show that beyond down-closed convex sets, the Meta-Frank-Wolfe algorithm, studied by \cite{ChenHassani18:Online-continuous} for monotone DR-submodular functions, with appropriately chosen step sizes provides indeed meaningful guarantees for the problem of online non-monotone DR-submodular maximization over general convex sets. 
Note that no algorithm with performance guarantee was known in the online setting. 


%

\paragraph{Online linear optimization oracles.}
In our algorithms, we use multiple online linear optimization oracles to estimate the gradient of online arriving functions. This idea was originally developed for maximizing monotone DR-submodular functions~\cite{ChenHarshaw18:Projection-Free-Online}. Before presenting algorithms, we recall the online linear optimization problems and corresponding oracles.  In the online linear optimization problem, at every time $1 \leq t \leq T$, the oracle selects $\vect{u}^{t} \in \mathcal{K}$. Subsequently, the adversary reveals a vector $\vect{d}^{t}$ and feedbacks the function $\langle \cdot , \vect{d}^t \rangle$ (and a reward $\langle \vect{u}^t , \vect{d}^t \rangle$) to the oracle. The objective is to minimize the regret. There exists several oracles that guarantee sublinear regret, for example the gradient descent algorithm has the regret of $O(n\sqrt{T})$ (see for example \cite{Hazanothers16:Introduction-to-online}).  

\paragraph{Algorithm description.} At a high level, at every time $t$, our algorithm produces a solution $\vect{x}^t$ by running $L$ steps of the Frank-Wolfe algorithm that uses the outputs of $L$ linear optimization oracles as update vectors. After the algorithm plays $\vect{x}^t$, it observes stochastic gradient estimates at $L$ points in the convex domain. Subsequently, these estimates are averaged with the estimates from the previous round and are fed to the reward functions of $L$ online linear oracles.  The pseudocode is presented in the Algorithm~\ref{algo:online-pf}.

\begin{algorithm}[ht]
\begin{flushleft}
\textbf{Input}:  A convex set $\mathcal{K}$, a time horizon $T$, online linear optimization oracles $\mathcal{E}_1, \ldots, \mathcal{E}_L$, step sizes $\rho_\ell \in (0, 1)$, and $\eta_\ell \in (0,1)$
\end{flushleft}
\begin{algorithmic}[1]
\STATE Initialize online linear optimization oracle ${\cal E}_\ell$ for all $\ell \in \{1, \ldots L\}$.
\STATE Initialize $\vect{x}^t_{1} \gets \vect{x}_{0}$ for some $\vect{x}_{0} \in \mathcal{K}$ and $\vect{d}^{t}_{0} \gets 0$
	for every $1 \leq t \leq T$. Let 
	$\vect{x}_{0} \gets \arg \min_{\vect{x} \in \mathcal{K}} \|\vect{x}\|_{\infty}$.
\FOR{$t = 1$ to $T$}			
		\STATE Set $\vect{v}^{t}_{\ell} \gets$ output of oracle $\mathcal{E}_\ell$ in round $t-1$ for all $\ell \in \{1, \ldots, L\}$.
		\STATE Set $\vect{x}^{t}_{\ell+1} \gets (1 - \eta_\ell) \vect{x}^{t}_{\ell} + \eta_{\ell} \vect{v}^{t}_{\ell}$.  \label{gen-update-x}
		\STATE Play $\vect{x}^{t}=\vect{x}^{t}_{L+1}$.
		\STATE Observe $\vect{g}^t_\ell$ such that $\mathbb{E}\left[ \vect{g}^t_\ell | \vect{x}^t_\ell \right] = \nabla F^t(\vect{x}^t_\ell)$. 
		\STATE Set $\vect{d}^{t}_{\ell}\gets (1-\rho_{\ell}) \cdot \vect{d}^{t}_{\ell-1} + \rho_{\ell} \cdot \vect{g}^t_\ell$ for $\ell =  \{1,\ldots, L\}$.
		\STATE Feedback the reward $\langle \vect{v}^{t}_\ell,\vect{d}^{t}_{\ell} \rangle$ to $\mathcal{E}_{\ell}$ for $\ell =  \{1, \ldots, L\}$.
\ENDFOR
\end{algorithmic}
\caption{Meta-Frank-Wolfe for general convex domains}
\label{algo:online-pf}
\end{algorithm}



We prove first a technical lemma which is in the same line of Lemma~\ref{bound-x-l}.

\begin{lemma}		
\label{lem:bound-x}
Let $(x^{t}_{\ell})_{i}$ be the $i^{\text{th}}$ coordinate of vector $\vect{x}^{t}_{\ell}$ in Algorithm~\ref{algo:online-pf}.
Setting $\kappa = \left(\frac{\ln 3}{2}\right)$ and $\eta_\ell = \left(\frac{\kappa}{\ell H_L}\right), \forall \ell \in \{1, 2, \ldots, L\}$ where $H_L$ is the $L^{th}$ Harmonic number, the following invariant holds true for every $t \in \{1, \ldots, T\}$ and for every coordinate $i \in \{1, \ldots,n\}$
\begin{align*}
1 - (x^{t}_{L+1})_{i} \geq e^{-\kappa(1 + O(1/\ln^{2}L))} \cdot (1 - (x^{t}_{1})_{i})
\end{align*}
\end{lemma}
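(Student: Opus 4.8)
The plan is to reduce the statement to a product bound via a coordinate-wise recursion, exactly parallel to Lemma~\ref{bound-x-l} but adapted to the convex-combination update of Algorithm~\ref{algo:online-pf}. First I would fix an arbitrary $t$ and coordinate $i$, suppress the superscript $t$, and examine the update $(x_{\ell+1})_i = (1-\eta_\ell)(x_\ell)_i + \eta_\ell (v_\ell)_i$. Since $\vect{v}_\ell \in \mathcal{K} \subseteq [0,1]^n$ gives $(v_\ell)_i \leq 1$, I would write
$$1 - (x_{\ell+1})_i = (1-\eta_\ell)\bigl(1 - (x_\ell)_i\bigr) + \eta_\ell\bigl(1 - (v_\ell)_i\bigr) \geq (1-\eta_\ell)\bigl(1 - (x_\ell)_i\bigr).$$
Iterating this recursion from $\ell = 1$ to $L$ yields $1 - (x_{L+1})_i \geq \bigl(\prod_{\ell=1}^L(1-\eta_\ell)\bigr)\bigl(1 - (x_1)_i\bigr)$, so it only remains to prove the uniform lower bound $\prod_{\ell=1}^L(1-\eta_\ell) \geq e^{-\kappa(1 + O(1/\ln^2 L))}$.

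For the product bound I would take logarithms and control each term with the elementary inequality $\ln(1-x) \geq -x - 2x^2$, valid for $0 \leq x \leq 1/2$. Here I first verify the hypothesis: since $H_L \geq 1$ and $\kappa = (\ln 3)/2 < 1/2$, we have $\eta_\ell = \kappa/(\ell H_L) \leq \kappa < 1/2$ for every $\ell$, so the inequality applies termwise, giving
$$\sum_{\ell=1}^L \ln(1-\eta_\ell) \geq -\sum_{\ell=1}^L \eta_\ell - 2\sum_{\ell=1}^L \eta_\ell^2.$$
The crucial observation is that the normalization by $H_L$ makes the first sum collapse to a constant: $\sum_{\ell=1}^L \eta_\ell = \frac{\kappa}{H_L}\sum_{\ell=1}^L \frac{1}{\ell} = \kappa$ exactly. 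For the second sum I would invoke $\sum_{\ell=1}^L 1/\ell^2 \leq \pi^2/6 = O(1)$ together with $H_L = \Theta(\ln L)$ to obtain $\sum_{\ell=1}^L \eta_\ell^2 = \frac{\kappa^2}{H_L^2}\sum_{\ell=1}^L \frac{1}{\ell^2} = O(1/\ln^2 L)$. Combining these, $\sum_\ell \ln(1-\eta_\ell) \geq -\kappa - O(1/\ln^2 L) = -\kappa\bigl(1 + O(1/\ln^2 L)\bigr)$, and exponentiating gives the claimed bound on the product. Since every inequality holds for each $i$ and each $t$, the lemma follows.

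There is essentially no hard part here; the only things requiring care are verifying $\eta_\ell \leq 1/2$ so that the logarithmic inequality is legitimate, and checking that the implicit constant in $H_L = \Theta(\ln L)$ is absorbed cleanly into the $O(1/\ln^2 L)$ error term. The conceptual content is that the step sizes $\eta_\ell = \kappa/(\ell H_L)$ are engineered precisely so that they sum to the constant $\kappa$ (controlling the exponent, hence the factor $e^{-\kappa} = 1/\sqrt{3}$ that later drives the approximation ratio) while their squares decay fast enough to be negligible.
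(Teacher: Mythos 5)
Your proof is correct and follows essentially the same route as the paper's: the coordinate-wise recursion $1-(x_{\ell+1})_i \geq (1-\eta_\ell)(1-(x_\ell)_i)$ followed by the product bound via $\sum_\ell \eta_\ell = \kappa$ and $\sum_\ell \eta_\ell^2 = O(1/\ln^2 L)$ (the paper applies $1-u \geq e^{-u-u^2}$ directly instead of taking logarithms, which is the same estimate). One small slip: $\kappa = (\ln 3)/2 \approx 0.549 > 1/2$, so your justification ``$\eta_\ell \leq \kappa < 1/2$'' is not quite right; the hypothesis of the logarithmic inequality is still satisfied for all $L \geq 2$ because $H_L \geq 3/2$ gives $\eta_\ell \leq \kappa/H_L < 1/2$.
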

\begin{proof}
Fix an arbitrary time step $t \in \{1, \cdots, T\}$. 
For the ease of exposition, we drop the index $t$ in the following proof.  
Using the update step \ref{gen-update-x} from Algorithm~\ref{algo:online-pf}, for every $1 \leq \ell \leq L$ and for every
coordinate $1 \leq i \leq n$, we have  
\begin{align*}
1 - (x_{\ell+1})_{i} &= 1 - (1 - \eta_\ell) (x_{\ell})_{i} - \eta_{\ell} (v_{\ell})_{i} \geq  1 - (1 - \eta_\ell)  (x_{\ell})_{i} - \eta_{\ell} = (1- \eta_{\ell}) (1 - (x_{\ell})_{i}) \\
&\geq e^{-\eta_{\ell} - \eta_{\ell}^{2}} \cdot (1 - (x_{\ell})_{i})
\end{align*}
where we use inequalities $(v_{\ell})_{i} \leq 1$ and $1 - u \geq e^{-u - u^{2}}$ for $0 \leq u < 1/2$; and $\eta_\ell = \frac{\kappa}{\ell H_L}$. Therefore, we get
\begin{align*}
1 - (x_{L+1})_{i} &\geq e^{-\sum_{\ell'=1}^{L} \eta_{\ell'} -\sum_{\ell'=1}^{L} \eta_{\ell'}^{2}} \cdot (1 - (x_1)_{i}) 
	\geq e^{-\kappa(1 + O(1/\ln^{2}L))} \cdot (1 - (x_1)_{i})
\end{align*}
where the last inequality is due to the fact that 
$\sum_{\ell'=1}^{L} \eta_{\ell'} = \kappa$ 
and
$\sum_{\ell'=1}^{L} \eta_{\ell'}^{2} = \sum_{\ell'=1}^{\ell} \frac{\kappa^{2}}{{\ell'}^{2} H_{L}^{2}} = O(1/\ln^{2}L)$. 
\end{proof}
\medskip

%
%
%
%

%

\begin{theorem}	\label{thm:online-general}
Let $\mathcal{K} \subseteq [0,1]^n$ be a convex set with the diameter $D$. Assume that for every $1 \leq t \leq T$, 
\setlist{nolistsep}
\begin{enumerate}[noitemsep]
\item $F^{t}$ is $\beta$-smooth DR-submodular function and the norm of the gradient $\nabla F^{t}$ 
	is bounded by $G$, i.e., $\norm{\nabla F(\vect{x}) } \leq G, \forall \vect{x} \in {\cal K}$, 
\item the variance of the unbiased stochastic gradients is bounded by $\sigma^2$, 
	i.e., $\mathbb{E} \left[ \norm{\vect{g}^{t}_{\ell} - \nabla F^{t}(\vect{x}^{t}_{\ell})}^2\right] \leq \sigma^2$ for 
	every  $1 \leq \ell \leq L$; and 
\item the online linear optimization oracles used in Algorithm~\ref{algo:online-pf} have regret at most $\mathcal{R}^{\mathcal{E}}_T$.
\end{enumerate} 
Then for every $1 \leq \ell \leq L$ setting $\kappa = \frac{\ln 3}{2}$, $\rho_\ell = \frac{2}{{(\ell + 3)}^{2/3}}$ 
and $\eta_\ell = \frac{\kappa}{\ell H_L}$ where $H_L$ is the $L^{th}$ harmonic number, 
the following inequality holds true for Algorithm \ref{algo:online-pf}:
\begin{align*}
\sum \limits_{t = 1}^{T} \mathbb{E} \left[F^t(\vect{x}^t)\right] 
\geq & \left(\frac{1}{3\sqrt{3}} \right) (1 -\| \vect{x}_{0} \|_{\infty}) \max_{\vect{x}^* \in {\cal K}} \sum \limits_{t = 1}^{T} F^t(\vect{x}^*) - O \biggl ( \frac{(\beta D + G + \sigma) D T}{\ln L} \biggr)- O(\mathcal{R}^{\mathcal{E}}_T). 
\end{align*} 
In particular, if $\vect{0} \in \mathcal{K}$ (for e.g. conic convex sets)  then choosing $\vect{x}_0 = \vect{0}$, $L = O(T)$ and online linear algorithm with guarantee 
regret guarantee $R^{\mathcal{E}}_T = O(\sqrt{T})$ (for eg., Online Gradient Descent), 
Algorithm \ref{algo:online-pf} achieves
\begin{align*}
\sum \nolimits_{t = 1}^{T} \mathbb{E} \left[F^t(\vect{x}^t)\right] 
\geq & \frac{1}{3\sqrt{3}}  \max_{\vect{x}^* \in {\cal K}}  \sum \nolimits_{t = 1}^{T} F^t(\vect{x}^*) - O \biggl ( \frac{(\beta D + G + \sigma) D T}{\ln T} \biggr) - O(\sqrt{T}). 
\end{align*} 
\end{theorem}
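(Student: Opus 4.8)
The plan is to run the same iteration-level analysis as in Theorem~\ref{online-bound}, replacing the two ingredients specific to down-closed sets, namely the positive-direction update and concavity in positive directions. For the generic Frank--Wolfe step $\vect{x}^t_{\ell+1}=(1-\eta_\ell)\vect{x}^t_\ell+\eta_\ell\vect{v}^t_\ell$ I would first prove, for each fixed $t$, a one-step inequality of the form
\begin{align*}
F^t(\vect{x}^t_{\ell+1}) \geq (1-2\eta_\ell)F^t(\vect{x}^t_\ell) + \eta_\ell\bigl(1-\|\vect{x}^t_\ell\|_\infty\bigr)F^t(\vect{x}^*) + \eta_\ell\langle\vect{d}^t_\ell,\vect{v}^t_\ell-\vect{x}^*\rangle - \xi^t_\ell,
\end{align*}
where $\xi^t_\ell$ gathers the smoothness and gradient-estimation errors; then unroll it over $\ell$, sum over $t$, invoke the no-regret property of the oracles on the $\langle\vect{d}^t_\ell,\vect{v}^t_\ell-\vect{x}^*\rangle$ terms, and finally optimize the constant $\kappa$ hidden in $\eta_\ell=\kappa/(\ell H_L)$.

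To obtain the one-step inequality I would begin with $\beta$-smoothness, which gives $F^t(\vect{x}^t_{\ell+1})-F^t(\vect{x}^t_\ell)\geq\eta_\ell\langle\nabla F^t(\vect{x}^t_\ell),\vect{v}^t_\ell-\vect{x}^t_\ell\rangle-O(\beta\eta_\ell^2D^2)$ after a second application of smoothness to move the gradient from $\vect{x}^t_{\ell+1}$ to $\vect{x}^t_\ell$. Writing $\vect{v}^t_\ell-\vect{x}^t_\ell=(\vect{v}^t_\ell-\vect{x}^*)+(\vect{x}^*-\vect{x}^t_\ell)$ and replacing $\nabla F^t(\vect{x}^t_\ell)$ by the surrogate $\vect{d}^t_\ell$ (absorbing $\langle\nabla F^t(\vect{x}^t_\ell)-\vect{d}^t_\ell,\cdot\rangle$ into $\xi^t_\ell$ through Young's inequality) leaves the oracle term $\eta_\ell\langle\vect{d}^t_\ell,\vect{v}^t_\ell-\vect{x}^*\rangle$ together with $\eta_\ell\langle\nabla F^t(\vect{x}^t_\ell),\vect{x}^*-\vect{x}^t_\ell\rangle$. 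The decisive departure from Theorem~\ref{online-bound} appears here: since $\vect{x}^*-\vect{x}^t_\ell$ is no longer a positive direction I cannot use concavity in positive directions, and instead apply Lemma~\ref{lem:prop2} to get $\langle\nabla F^t(\vect{x}^t_\ell),\vect{x}^*-\vect{x}^t_\ell\rangle\geq F^t(\vect{x}^t_\ell\vee\vect{x}^*)-2F^t(\vect{x}^t_\ell)$ (discarding the nonnegative $F^t(\vect{x}^t_\ell\wedge\vect{x}^*)$), followed by Lemma~\ref{lem:obs2} to bound $F^t(\vect{x}^t_\ell\vee\vect{x}^*)\geq(1-\|\vect{x}^t_\ell\|_\infty)F^t(\vect{x}^*)$. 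This is exactly what changes the coefficient of $F^t(\vect{x}^t_\ell)$ from $-1$ to $-2$ and, as seen below, degrades the ratio from $1/e$ to $1/(3\sqrt3)$.

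Unrolling over $\ell$, summing over $t$ so that $\sum_t\langle\vect{d}^t_\ell,\vect{v}^t_\ell-\vect{x}^*\rangle\geq-\mathcal{R}^{\mathcal{E}}_T$ at each level, and bounding every $1-\|\vect{x}^t_\ell\|_\infty$ below by the uniform estimate $e^{-\kappa(1+o(1))}(1-\|\vect{x}_0\|_\infty)$ from Lemma~\ref{lem:bound-x}, the coefficient multiplying $\sum_tF^t(\vect{x}^*)$ becomes $e^{-\kappa}(1-\|\vect{x}_0\|_\infty)\sum_\ell\eta_\ell\prod_{\ell'>\ell}(1-2\eta_{\ell'})$. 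With $s_\ell=\sum_{\ell'\leq\ell}\eta_{\ell'}=\kappa H_\ell/H_L$ and $\prod_{\ell'>\ell}(1-2\eta_{\ell'})\geq e^{-2(\kappa-s_\ell)(1+o(1))}$, the sum is a mesh-$O(1/\ln L)$ Riemann approximation of $\int_0^\kappa e^{-2(\kappa-s)}\,ds=(1-e^{-2\kappa})/2$, so the overall factor approaches $e^{-\kappa}(1-e^{-2\kappa})/2=\tfrac12(e^{-\kappa}-e^{-3\kappa})$. Differentiating shows this is maximized at $e^{2\kappa}=3$, i.e. $\kappa=\tfrac{\ln3}{2}$, where $e^{-\kappa}=1/\sqrt3$ and the factor equals $\tfrac{1}{\sqrt3}\cdot\tfrac{1}{2}\bigl(1-\tfrac13\bigr)=\tfrac{1}{3\sqrt3}$; this accounts for both the stated constant and the prescribed step size.

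The remaining and most delicate task is to keep $\sum_{t,\ell}\xi^t_\ell$ within $O((\beta D+G+\sigma)DT/\ln L)$. After Young's inequality with parameter $\gamma_\ell$, the estimation part of $\xi^t_\ell$ is $\eta_\ell\bigl(\tfrac{1}{2\gamma_\ell}\|\nabla F^t(\vect{x}^t_\ell)-\vect{d}^t_\ell\|^2+\tfrac{\gamma_\ell}{2}D^2\bigr)$. I would run Lemma~\ref{lem:variance-reduction} along the inner index $\ell$ (treating $\ell$ as the ``time'' of the lemma): the increment bound holds since $\|\nabla F^t(\vect{x}^t_\ell)-\nabla F^t(\vect{x}^t_{\ell-1})\|\leq\beta\eta_{\ell-1}D=O(\beta D/(\ell H_L))$ decays like $1/\ell$, matching the lemma's $C/(\ell+s)$ hypothesis, and with $\rho_\ell=2/(\ell+3)^{2/3}$ it yields $\mathbb{E}\|\nabla F^t(\vect{x}^t_\ell)-\vect{d}^t_\ell\|^2\leq Q/(\ell+4)^{2/3}$ with $Q^{1/2}=O(\beta D+G+\sigma)$. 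Choosing $\gamma_\ell\propto Q^{1/2}/\bigl(D(\ell+4)^{1/3}\bigr)$ balances the two pieces to $O\bigl(DQ^{1/2}\eta_\ell/(\ell+4)^{1/3}\bigr)=O\bigl(DQ^{1/2}/(H_L\,\ell^{4/3})\bigr)$; since $\sum_\ell\ell^{-4/3}=O(1)$ and $H_L=\Theta(\ln L)$, the per-round estimation error is $O(DQ^{1/2}/\ln L)$, while the smoothness part $O(\beta D^2\sum_\ell\eta_\ell^2)=O(\beta D^2/\ln^2L)$ is smaller still. Summing over $t$ and adding the collapsed oracle regret $\sum_\ell\eta_\ell\mathcal{R}^{\mathcal{E}}_T\leq\kappa\mathcal{R}^{\mathcal{E}}_T=O(\mathcal{R}^{\mathcal{E}}_T)$ then produces the claimed inequality, after which the specialization $\vect{x}_0=\vect{0}$, $L=O(T)$, $\mathcal{R}^{\mathcal{E}}_T=O(\sqrt T)$ is immediate. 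I expect this error accounting to be the principal obstacle: the $\ell$-dependent step size $\eta_\ell=\kappa/(\ell H_L)$ must simultaneously telescope to the clean geometric factor, keep the variance-reduction hypotheses valid with the right $1/\ell$ increment decay, and drive the error down to $O(1/\ln L)$ rather than the $O(1/L^{1/3})$ obtained in the down-closed case.
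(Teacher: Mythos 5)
Your proposal is correct and follows essentially the same route as the paper's proof: the same one-step bound via $\beta$-smoothness, the replacement of concavity in positive directions by Lemma~\ref{lem:prop2} (dropping the $F^t(\vect{x}^t_\ell\wedge\vect{x}^*)$ term) combined with Lemma~\ref{lem:obs2} and Lemma~\ref{lem:bound-x}, the same $(1-2\eta_\ell)$ recursion, the same Young's-inequality/variance-reduction accounting with $\gamma_\ell\propto Q^{1/2}/(D(\ell+4)^{1/3})$, and the same optimization of $\kappa$ yielding $1/(3\sqrt{3})$. The only cosmetic difference is that the paper unrolls the potential $2F^t(\vect{x}^t_\ell)-rF^t(\vect{x}^*)$ so the coefficient $\tfrac{1}{2}(1-\prod_\ell(1-2\eta_\ell))$ appears by exact telescoping, whereas you evaluate $\sum_\ell\eta_\ell\prod_{\ell'>\ell}(1-2\eta_{\ell'})$ as a Riemann sum of $\int_0^\kappa e^{-2(\kappa-s)}\,ds$ — these are the same quantity, so the argument goes through identically.
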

\begin{proof}
Let $\vect{x}^{*} \in \mathcal{K}$ be a solution that maximizes $\sum \limits_{t = 1}^T F^t(\vect{x})$.
Let 
$$
r =  e^{-\kappa(1 + O(1/\ln^{2}L))} \cdot (1 - \max_{i} (x^{t}_{1})_{i}) = e^{-\kappa(1 + O(1/\ln^{2}L))} \cdot (1 - \max_{i} (x_{0})_{i}).
$$ 
Note that from Lemma~\ref{lem:bound-x}, we have that $(1 - \|\vect{x}^{t}_\ell\|_{\infty}) \geq r$ for every $1 \leq t \leq T$ 
and every $1 \leq \ell \leq L+1$.

We prove the following claim  which is crucial in our analysis.

\begin{claim} 
\label{c-1-1}
For every $1 \leq t \leq T$, it holds that 
\begin{align*}
2F^t&(\vect{x}^t_{L+1}) - r F^t( \vect{x}^{*} ) \\
&\geq \biggl(\prod \limits_{\ell = 1}^{L} (1-2\eta_{\ell}) \biggr) \left( 2F^t(\vect{x}^t_{1})  - r F^t( \vect{x}^{*} ) \right) 
		- \sum \limits_{\ell = 1}^{L} 	 \eta_{\ell} \biggl( \frac{1}{\gamma_{\ell}} \norm{ \nabla F^t\bigl(\vect{x}^t_{\ell}\bigr)-\vect{d}^t_\ell}^2 + \gamma_{\ell} D^2 \biggr) \\ 	 
		& + \sum \limits_{\ell = 1}^{L} 2\eta_{\ell} \biggl(\prod \limits_{\ell' = \ell+1}^{L} ( 1- 2 \eta_{\ell'})\biggr) \langle \vect{d}^t_\ell, ( \vect{x}^{*} - \vect{v}^t_{\ell}) \rangle - 3\beta D^2 \sum \limits_{\ell = 1}^{L}\eta_{\ell}^{2}
\end{align*}
where $D$ is the diameter of $\mathcal{K}$ and $\gamma_\ell$ is any constant greater than $0$. 
\end{claim}

\begin{claimproof}
Fix a time step $t \in \{1, \cdots, T\}$. For the ease of exposition, we drop the time index $t$ in equations. 
For every $1 \leq \ell \leq L$, we have
\begin{align*}
&2F (\vect{x}_{\ell+1}) - r F( \vect{x}^{*} ) \\
&= 2F \bigl((1- \eta_{\ell}) \vect{x}_{\ell} + \eta_{\ell} \vect{v}_{\ell}\bigr) - r F( \vect{x}^{*} ) \tag{the step~\ref{gen-update-x} of Algorithm~\ref{algo:online-pf}}& &  \\
&\geq 2\left(F (\vect{x}_{\ell}) - \eta_{\ell} \bigl \langle \nabla F \bigl((1 - \eta_{\ell}) \vect{x}_{\ell} + \eta_{\ell} \vect{v}_{\ell}\bigr),  \vect{x}_{\ell} - \vect{v}_{\ell} \bigr \rangle - \frac{\beta}{2} (\eta_{\ell})^{2} \| \vect{x}_{\ell} - \vect{v}_{\ell} \|^{2}\right) - r F( \vect{x}^{*} )  
\tag{using $\beta$-smoothness} \\
\\
&= 2F (\vect{x}_{\ell}) - 2\eta_{\ell} \langle \nabla F \bigl((1 - \eta_{\ell}) \vect{x}_{\ell} + \eta_{\ell} \vect{v}_{\ell}\bigr), \vect{x}_{\ell} - \vect{v}_{\ell} \rangle - \beta (\eta_{\ell})^{2} \| \vect{x}_{\ell} - \vect{v}_{\ell} \|^{2} - r F\bigl( \vect{x}^{*} \bigr)  \\
\\
&= \left(2F (\vect{x}_{\ell})- r F( \vect{x}^{*}) \right)   
		- 2\eta_{\ell} \bigl \langle \nabla F\bigl((1 - \eta_{\ell}) \vect{x}_{\ell} + \eta_{\ell} \vect{v}_{\ell}\bigr) 
				-  \nabla F(\vect{x}_{\ell}), \vect{x}_{\ell} - \vect{v}_{\ell} \bigr \rangle \\
				& \qquad  \qquad - 2\eta_{\ell} \bigl\langle \nabla F (\vect{x}_{\ell}), \vect{x}_{\ell} - \vect{v}_{\ell} \bigr\rangle
		  - \beta (\eta_{\ell})^{2} \| \vect{x}_{\ell} - \vect{v}_{\ell}\|^{2}
		\\
\\
&\geq \left( 2F(\vect{x}_{\ell}) - r F(\vect{x}^{*} ) \right) 
		- 2\eta_{\ell} \norm{\nabla F\bigl((1 - \eta_{\ell}) \vect{x}_{\ell} + \eta_{\ell} \vect{v}_{\ell}\bigr) 
				-  \nabla F (\vect{x}_{\ell})} \norm{\vect{x}_{\ell} - \vect{v}_{\ell}} \\
		& \qquad  \qquad  - 2\eta_{\ell} \bigl \langle \nabla F(\vect{x}_{\ell}), \vect{x}_{\ell} - \vect{v}_{\ell} \bigr \rangle - \beta (\eta_{\ell})^{2} \| \vect{x}_{\ell} - \vect{v}_{\ell}\|^{2} \tag{Applying Cauchy-Schwarz} \\
\\
&\geq \left( 2F(\vect{x}_{\ell}) - r F(\vect{x}^{*} ) \right) 
		- 2\eta_{\ell} \beta \norm{(1 - \eta_{\ell}) \vect{x}_{\ell} + \eta_{\ell} \vect{v}_{\ell}) 
				-  \vect{x}_{\ell}} \norm{(\vect{x}_{\ell} - \vect{v}_{\ell})} \\
		& \qquad  \qquad  - 2\eta_{\ell} \bigl \langle \nabla F(\vect{x}_{\ell}), (\vect{x}_{\ell} - \vect{v}_{\ell}) \bigr \rangle - \beta (\eta_{\ell})^{2} \| \vect{x}_{\ell} - \vect{v}_{\ell}\|^{2} \tag{$\beta$-smoothness as gradient Lipschitz} \\
\\
&\geq \left( 2F(\vect{x}_{\ell}) - r F(\vect{x}^{*} ) \right) - 2\eta_{\ell} \bigl \langle \nabla F(\vect{x}_{\ell}), (\vect{x}_{\ell} - \vect{v}_{\ell}) \bigr \rangle - 3 \beta (\eta_{\ell})^{2} \| \vect{x}_{\ell} - \vect{v}_{\ell}\|^{2}  \\
&= \left( 2F(\vect{x}_{\ell}) - r F(\vect{x}^{*} ) \right) + 2\eta_{\ell} \bigl \langle \nabla F(\vect{x}_{\ell}), (\vect{v}_{\ell} - \vect{x}_{\ell}) \bigr\rangle - 3 \beta (\eta_{\ell})^{2} \norm{ \vect{x}_{\ell} - \vect{v}_{\ell} }^{2} \\
\\
&= \left( 2F(\vect{x}_{\ell}) - r F( \vect{x}^{*}) \right) 
		+  2\eta_{\ell} \bigl\langle \nabla F(\vect{x}_{\ell}), (\vect{x}^{*} - \vect{x}_{\ell}) \bigr \rangle + 2\eta_{\ell} \langle \nabla F(\vect{x}_{\ell}), (\vect{v}_{\ell} - \vect{x}^{*}) \rangle - 3\beta (\eta_{\ell})^{2} \norm{ \vect{v}_{\ell} - \vect{x}_{\ell} }^{2} \\
\\
&= \left( 2F(\vect{x}_{\ell}) - r F( \vect{x}^{*}) \right) 
		+  2\eta_{\ell}\bigl  \langle \nabla F(\vect{x}_{\ell}), (\vect{x}^{*} - \vect{x}_{\ell}) \bigr \rangle -  2\eta_{\ell} \bigl \langle \nabla F(\vect{x}_{\ell})-\vect{d}_\ell, (\vect{x}^{*} - \vect{v}_{\ell}) \bigr \rangle & & \\
		& \qquad \qquad - 2\eta_{\ell} \langle \vect{d}_\ell, (\vect{x}^{*} - \vect{v}_{\ell}) \rangle - 3\beta (\eta_{\ell})^{2} \norm{ \vect{v}_{\ell} - \vect{x}_{\ell} }^{2} \\
\\
&\geq \left(2F(\vect{x}_{\ell}) - r F( \vect{x}^{*} ) \right)
		+  2\eta_{\ell} \left( F(\vect{x}^{*} \vee \vect{x}_{\ell}) - 2F(\vect{x}_{\ell}) \right) - 2\eta_{\ell} \bigl \langle \nabla F(\vect{x}_{\ell})-\vect{d}_\ell, ( \vect{x}^{*} - \vect{v}_{\ell}) \bigr \rangle & & \\
		& \qquad \qquad - 2\eta_{\ell} \bigl \langle \vect{d}_\ell, ( \vect{x}^{*} - \vect{v}_{\ell}) \bigr \rangle - 3\beta (\eta_{\ell})^{2} \norm{ \vect{v}_{\ell} - \vect{x}_{\ell} }^{2} 
		 \tag{Lemma \ref{lem:prop2}}  \\
\\
&\geq \left(2F(\vect{x}_{\ell}) - r F( \vect{x}^{*} ) \right)
		+  2\eta_{\ell} \left( (1 - \| \vect{x}_{\ell}\|_{\infty}) F(\vect{x}^{*}) - 2F(\vect{x}_{\ell}) \right) -  2\eta_{\ell} \bigl \langle \nabla F(\vect{x}_{\ell})-\vect{d}_\ell, (\vect{x}^{*} - \vect{v}_{\ell} ) \bigr \rangle & & \\
		& \qquad \qquad - 2\eta_{\ell} \bigl \langle \vect{d}_\ell, ( \vect{x}^{*} - \vect{v}_{\ell}) \bigr \rangle - 3\beta (\eta_{\ell})^{2} \norm{ \vect{v}_{\ell} - \vect{x}_{\ell} }^{2} 
		  \tag{Lemma \ref{lem:obs2}}  \\
\\
&\geq \left(2F(\vect{x}_{\ell}) - r F( \vect{x}^{*} ) \right)
		+  2\eta_{\ell} \left( r F(\vect{x}^{*}) - 2F(\vect{x}_{\ell}) \right) -  2\eta_{\ell} \bigl \langle \nabla F(\vect{x}_{\ell})-\vect{d}_\ell, (\vect{x}^{*} - \vect{v}_{\ell} ) \bigr \rangle & & \\
		& \qquad \qquad - 2\eta_{\ell} \bigl \langle \vect{d}_\ell, ( \vect{x}^{*} - \vect{v}_{\ell}) \bigr \rangle - 3\beta (\eta_{\ell})^{2} \norm{ \vect{v}_{\ell} - \vect{x}_{\ell} }^{2} 
		 \tag{Lemma \ref{lem:bound-x}} \\
\\
&= (1-2\eta_{\ell}) \left(2F(\vect{x}_{\ell})  - r F( \vect{x}^{*} ) \right)
			 -  2\eta_{\ell} \bigl \langle \nabla F\bigl(\vect{x}_{\ell}\bigr)-\vect{d}_\ell, ( \vect{x}^{*} - \vect{v}_{\ell} ) \bigr \rangle & & \\
		& \qquad \qquad - 2\eta_{\ell} \bigl \langle \vect{d}_\ell, (\vect{x}^{*} - \vect{v}_{\ell} ) \bigr \rangle - 3\beta (\eta_{\ell})^{2} \norm{ \vect{v}_{\ell} - \vect{x}_{\ell}}^{2}    \\
\\
%
%
&\geq (1-2\eta_{\ell}) \left( 2F(\vect{x}_{\ell})  - r F( \vect{x}^{*} ) \right)
		 -  \eta_{\ell} \left( \frac{1}{\gamma_{\ell}} \norm{ \nabla F\bigl(\vect{x}_{\ell}\bigr)-\vect{d}_\ell}^2 + \gamma_{\ell} \norm{ \vect{v}_{\ell} - \vect{x}^{*} }^2 \right) & \\
		& \qquad \qquad - 2\eta_{\ell} \langle \vect{d}_\ell, (\vect{x}^{*} - \vect{v}_{\ell}) \rangle - 3\beta (\eta_{\ell})^{2} \norm{ \vect{v}_{\ell} - \vect{x}_{\ell} }^{2} 
		 \tag{using Young's Inequality}  
\end{align*}
where in the last inequality, $\eta_{\ell}$'s are parameters to be defined later (specifically in Claim~\ref{c-1-3}). 
Applying the above inequality for $1 \leq \ell \leq L$ and note that the diameter of $\mathcal{K}$ is bounded by $D$, 
we get
\begin{align*}
2F(\vect{x}_{L+1}) - r F( \vect{x}^{*} )
&\geq 
\prod \limits_{\ell = 1}^{L} (1-2\eta_{\ell})  \left( 2F(\vect{x}_{1})  - r F( \vect{x}^{*}) \right) \\
		& \qquad \qquad - \sum \limits_{\ell = 1}^{L} \biggl(\prod \limits_{\ell' = \ell+1}^{L} ( 1- 2 \eta_{\ell'})\biggr) \eta_{\ell} \left( \frac{1}{\gamma_{\ell}} \norm{ \nabla F(\vect{x}_{\ell})-\vect{d}_\ell}^2 + \gamma_{\ell} D^2 \right)  \\ 	 
		&\qquad \qquad 
		- \sum \limits_{\ell = 1}^{L} 2\eta_{\ell} \biggl(\prod \limits_{\ell' = \ell+1}^{L} ( 1- 2 \eta_{\ell'})\biggr) 
					 \langle \vect{d}_\ell, (\vect{x}^{*} - \vect{v}_{\ell}) \rangle 
		- 3\beta D^2  \sum \limits_{\ell = 1}^{L}\eta^2_{\ell} \\
&\geq 
\prod \limits_{\ell = 1}^{L} (1-2\eta_{\ell})  \left( 2F(\vect{x}_{1})  - r F( \vect{x}^{*}) \right) \\
		& \qquad \qquad - \sum \limits_{\ell = 1}^{L} 
		\eta_{\ell} \left( \frac{1}{\gamma_{\ell}} \norm{ \nabla F(\vect{x}_{\ell})-\vect{d}_\ell}^2 + \gamma_{\ell} D^2 \right)  \\ 	 
		&\qquad \qquad 
		- \sum \limits_{\ell = 1}^{L} 2\eta_{\ell} \biggl(\prod \limits_{\ell' = \ell+1}^{L} ( 1- 2 \eta_{\ell'})\biggr) 
					 \langle \vect{d}_\ell, (\vect{x}^{*} - \vect{v}_{\ell} ) \rangle 
		- 3\beta D^2  \sum \limits_{\ell = 1}^{L}\eta^2_{\ell}
\end{align*}
where the second inequality holds since $ \biggl(\prod \limits_{\ell' = \ell + 1}^{L} ( 1- 2 \eta_{\ell'})\biggr) \leq 1$.
Hence, the claim follows 
\end{claimproof} 

Summing the inequality in Claim~\ref{c-1-1} over all $t = \{1, 2, \ldots, T\}$, we get
\begin{align}	\label{break-eq}
&\sum \limits_{t =1}^{T} \left(2F^t (\vect{x}^t_{L+1}) - r F^t( \vect{x}^{*} ) \right) \nonumber \\
&\qquad \geq \sum \limits_{t = 1}^{T} \prod \limits_{\ell = 1}^{L} (1-2\eta_{\ell})  \left( 2F^t(\vect{x}^t_{1})  - r F^t( \vect{x}^{*}) \right) 
		- \sum \limits_{t = 1}^{T} \sum \limits_{\ell = 1}^{L} \eta_{\ell} \left( \frac{1}{\gamma_{\ell}} \norm{ \nabla F^t(\vect{x}^t_{\ell})-\vect{d}^t_\ell}^2 + \gamma_{\ell} D^2 \right) \nonumber \\ 	 
		&\qquad \qquad 
		- \sum \limits_{\ell = 1}^{L} 2\eta_{\ell} \biggl(\prod \limits_{\ell' = \ell + 1}^{L} ( 1- 2 \eta_{\ell'})\biggr) 
					\sum \limits_{t=1}^{T} \langle \vect{d}^t_\ell, (\vect{x}^{*} - \vect{v}^t_{\ell}) \rangle 
		- 3\beta D^2 T \sum \limits_{\ell = 1}^{L}\eta^2_{\ell} 
\end{align}
Next, we bound the terms on the right hand side of Inequation (\ref{break-eq}) separately
by the following claims. 

\bigskip

\begin{claim} 	\label{c-1-2}
It holds that 
\begin{align*}
	\sum \limits_{t = 1}^{T} \prod \limits_{\ell = 1}^{L} (1-2\eta_{\ell})  \left( 2F^t(\vect{x}^t_{1})  - r F^t( \vect{x}^{*}) \right)   \geq \sum \limits_{t = 1}^{T} e^{-2 \kappa (1 + O(1/\ln^2 L))} \left( 2F^t(\vect{x}^t_{1})  - r F^t( \vect{x}^{*}) \right).
\end{align*}
\end{claim}
\begin{claimproof}
Using the inequality $1 - u \geq e^{-(u + u^2)}$, we have that 
\begin{align*}
	&\sum \limits_{t = 1}^{T} \prod \limits_{\ell = 1}^{L} (1-2\eta_{\ell})  \left( 2F^t(\vect{x}^t_{1})  - r F^t( \vect{x}^{*}) \right) \geq \sum \limits_{t = 1}^{T} e^{-2 \sum \limits_{\ell = 1}^{L} \eta_\ell - 4 \sum \limits_{\ell = 1}^{L} \eta^2_\ell} \left( 2F^t(\vect{x}^t_{1})  - r F^t( \vect{x}^{*}) \right) \\
	&\qquad \qquad  \geq \sum \limits_{t = 1}^{T} e^{-2 \kappa (1 + O(1/\ln^2 L))} \left( 2F^t(\vect{x}^t_{1})  - r F^t( \vect{x}^{*}) \right)
\end{align*}
where the last inequality is due to the facts that 
$\sum_{\ell=1}^{L} \eta_{\ell} = \sum_{\ell=1}^{L} \frac{\kappa}{\ell H_{L}} = \kappa$ and
$\sum_{\ell=1}^{L} \eta_{\ell}^{2} = \sum_{\ell=1}^{L} \frac{\kappa^{2}}{\ell^{2} H_{L}^{2}} = O(1/\ln^{2}L)$
for a constant $\kappa$.
\end{claimproof}
\bigskip 

\begin{claim} \label{c-1-3}
Choose $\gamma_\ell = \frac{Q^{1/2}}{D(\ell + 3)^{1/3}}$ where 
$Q = \max \{ \max_{1 \leq t \leq T} \norm{ \nabla F^t(\vect{x}_{0})}^2 4^{2/3}, 4 \sigma^2 + (3/2) \beta^2 D^2 \}
= O(G^{2} + \sigma^{2} + \beta^{2}D^{2})$
for $1 \leq \ell \leq L-1$, it holds that 
\begin{align*}
	\sum \limits_{t = 1}^{T} \sum \limits_{\ell = 1}^{L} \eta_{\ell} \left( \frac{1}{\gamma_{\ell}} \mathbb{E} \left[ \norm{ \nabla F^t(\vect{x}^t_{\ell})-\vect{d}^t_\ell}^2 \right] + \gamma_{\ell} D^2 \right) \leq \frac{3TDQ^{1/2}}{H_L}.
\end{align*}
\end{claim}
\begin{claimproof}
Fix a time step $t$.
We apply Lemma~\ref{lem:variance-reduction} to left hand side of the inequality where, for $1 \leq \ell \leq L$, $\vect{a}_{\ell}$ 
denotes the gradients $\nabla F(\vect{x}^t_{\ell})$, $\tilde{\vect{a}_{\ell}}$ denote the stochastic gradient 
estimate $\vect{g}^t_{\ell}$, and $\vect{d}_{\ell}$ denotes the vector $\vect{d}^{t}_{\ell}$ in the algorithm.
Additionally, $\rho_\ell = \left(\frac{2}{{(\ell + 3)}^{2/3}}\right)$ for $1 \leq \ell \leq L$. 

We verify the conditions of Lemma~\ref{lem:variance-reduction}. By the algorithm,
\begin{align*}
\norm{\nabla F(\vect{x}^t_{\ell}) - \nabla F(\vect{x}^t_{\ell-1})}
&= \norm{\nabla F\bigl( (1 - \eta_\ell) \vect{x}^{t}_{\ell} + \eta_{\ell} \vect{v}^{t}_{\ell} \bigr) - \nabla F(\vect{x}^t_{\ell-1})} \\
&\leq \beta \eta_{\ell} \| \vect{v}^{t}_{\ell} - \vect{x}^{t}_{\ell} \|
\leq \beta D \left( \frac{\kappa}{\ell H_{L}} \right) \leq \beta D \left(\frac{\kappa}{\ell + 3}\right). 
\end{align*}
Moreover, by the theorem assumption, 
$\mathbb{E} \left[ \norm{\vect{g}^{t}_{\ell} - \nabla F(\vect{x}^{t}_{\ell})}^2\right] \leq \sigma^2$.
Hence, applying Lemma~\ref{lem:variance-reduction}, we get
\begin{align}
	\mathbb{E} \left[ \norm{ \nabla F^t(\vect{x}^t_{\ell})-\vect{d}^t_\ell}^2 \right]  \leq \frac{Q}{{(\ell + 4)}^{2/3}}.  \label{var}
\end{align}

Summing Equation (\ref{var}) for $1 \leq t \leq T$ and $1 \leq \ell \leq L$ and 
for $\gamma_\ell = \frac{Q^{1/2}}{D(\ell + 3)^{1/3}}$ and $\eta_\ell = \frac{\kappa}{\ell H_L}$, we obtain 

\begin{align*}
\sum \limits_{t = 1}^{T} \sum \limits_{\ell = 1}^{L} \eta_{\ell} & \left( \frac{1}{\gamma_{\ell}} 
	\mathbb{E} \left[ \norm{ \nabla F^t(\vect{x}^t_{\ell})-\vect{d}^t_\ell}^2 \right] + \gamma_{\ell} D^2 \right) 
	\leq \frac{T D Q^{1/2}}{H_L} \cdot \sum \limits_{\ell = 1}^L \frac{1}{{\ell(\ell + 4)}^{1/3}} \\
&\leq \frac{T D Q^{1/2}}{H_L} \cdot \int \limits_{1}^{L+1} \frac{1}{{\ell}^{4/3}} d\ell 
\leq \frac{T D Q^{1/2}}{H_L} \left( 3 - \frac{3}{(L+1)^{1/3}}\right)
= O \biggl ( \frac{(\beta D + G + \sigma) D T}{\log L} \biggr)
\end{align*}
\end{claimproof}

\begin{claim}	\label{c-1-4}
It holds that
$$
\sum \limits_{\ell = 1}^{L} 2\eta_{\ell} \biggl(\prod \limits_{\ell' = \ell + 1}^{L} ( 1- 2 \eta_{\ell'})\biggr) 
			\sum \limits_{t=1}^{T} \langle \vect{d}^t_\ell, (\vect{x}^{*} - \vect{v}^t_{\ell} ) \rangle 
			\leq 2 \kappa \mathcal{R}^{\mathcal{E}}_T.
$$
\end{claim}
\begin{claimproof}
The claim follows from  the definition of the regret for the online linear optimization oracles that is, $\sum \limits_{t = 1}^T \langle \vect{d}^t_\ell, (\vect{x}^* - \vect{v}^t_\ell) \rangle \leq \mathcal{R}^{\mathcal{E}}_T$,
and  $\sum \limits_{\ell = 1}^{L} \eta_\ell \leq  \kappa$ and $\prod \limits_{\ell' = \ell}^{L} ( 1- 2 \eta_{\ell'}) \leq 1$.
\end{claimproof}

Taking the expectation on Inequality (\ref{break-eq}) and
using Claims~\ref{c-1-2}, \ref{c-1-3}, \ref{c-1-4} , we have
\begin{align*}
\sum \limits_{t =1}^{T} \left(2 \mathbb{E} \left[F^t(\vect{x}^t_{L+1})\right] - r F^t( \vect{x}^{*} ) \right)
&\geq 
\sum \limits_{t = 1}^{T} e^{-2 \kappa (1 + O(1/\ln^2 L))} \left( 2 F^t(\vect{x}^t_1) - r F^t( \vect{x}^{*}) \right) \\
& \qquad \qquad - O \biggl ( \frac{(\beta D + G + \sigma) D T}{\ln L} \biggr)
- 2 \kappa \mathcal{R}^{\mathcal{E}}_T
- 3\beta D^2 T \sum \limits_{\ell = 1}^{L-1}\eta^2_{\ell} 
\end{align*} 

Rearranging terms and note that $\sum_{\ell=1}^{L-1} \eta_{\ell}^{2} = O(1/\ln^{2}L)$ and $F^t(\vect{x}^t_1) \geq 0$, we get
\begin{align*}
\sum \limits_{t = 1}^{T} \mathbb{E} \left[F^t(\vect{x}^t_{L+1})\right] 
\geq & \frac{r ( 1 - e^{-2 \kappa (1 + O(1/\ln^2 L)} )}{2} \sum \limits_{t = 1}^{T} F^t(\vect{x}^*) \\
	& \qquad \qquad	
	-O \biggl ( \frac{(\beta D + G + \sigma) D T}{\ln L} \biggr) - 2 \kappa \mathcal{R}^{\mathcal{E}}_T - O(\beta D^2 T/\ln^{2} L) 
\end{align*} 
For $L$ sufficiently large (so $O(1/\ln^2 L)$ becomes negligible compared to a constant), the factor 
$$
r ( 1 - e^{-2 \kappa (1 + O(1/\ln^2 L)}) = e^{-\kappa (1 + O(1/\ln^2 L)} ( 1 - e^{-2 \kappa (1 + O(1/\ln^2 L)}) ) (1 -\| \vect{x}_0 \|_{\infty})
$$
attains the maximum value of $\frac{1}{3\sqrt{3}} (1 -\| \vect{x}_0 \|_{\infty})$ at $\kappa = \frac{\ln 3}{2}$.
Hence, we deduce that
\begin{align*}
\sum \limits_{t = 1}^{T} \mathbb{E} \left[F^t(\vect{x}^t_{L+1})\right] 
\geq & \left(\frac{1}{3\sqrt{3}} \right)(1 -\| \vect{x}_0 \|_{\infty}) \sum \limits_{t = 1}^{T} F^t(\vect{x}^*) 
		- O \biggl ( \frac{(\beta D + G + \sigma) D T}{\ln L} \biggr) - O(\mathcal{R}^{\mathcal{E}}_T). 
\end{align*} 
The theorem follows.
\end{proof}

\begin{remark}
\begin{itemize}
\item The regret guarantee, in particular the approximation ratio, depends on the initial solution 
$\vect{x}_{0}$. This confirms the observation that initialization plays an important role in non-convex optimization.
For particular case where $\mathcal{K}$ is the intersection of a cone and the hypercube $[0,1]^{n}$ (so $\vect{0} \in \mathcal{K}$), 
Algorithm \ref{algo:online-pf} provides vanishing $\left(\frac{1}{3\sqrt{3}}\right)$-regret. Note that this is the first constant approximation 
for non-monotone DR-submodular maximization over a non-trivial convex domain beyond the class of down-closed convex domains. 
\item Assume that $\vect{0} \in \mathcal{K}$ and $F^{t}$'s are identical, i.e., $F^{t} = F \forall t$ , 
the algorithm guarantees a convergence rate of $O\left(1/\log T\right)$. 
It means that to be $\epsilon$-close to a solution which is $\frac{1}{3\sqrt{3}}$-approximation to the optimal solution of $F$, 
the algorithm requires $T = O(2^{1/\epsilon})$ iterations. 
Note that the exponential complexity is unavoidable as any contant approximation algorithm for the multilinear extension of a submodular function (so DR-submodular) over a general convex set requires necessarily an 
exponential number of value queries \cite{Vondrak13:Symmetry-and-approximability}.   
%
%
\item The assumptions on the smoothness of functions $F^{t}$'s and the knowledge of $T$ in Algorithm~\ref{algo:online-pf}
can be removed by the argument as Remark \ref{re:down-closed}.
\end{itemize}
\end{remark}

\section{Experiments} 	\label{sec:exp}

In this section, we validate our online algorithms for non-monotone DR submodular optimization on a set of real-world dataset. 
We show the performance of our algorithms for maximizing non-montone DR-submodular function over a down-closed polytope and over a general convex polytope. All experiments  are performed in MATLAB using CPLEX optimization tool on MAC OS version 10.15.

In the revenue maximization problem, the goal of a company is to offer for free or advertise a product to users so that the revenue increases through their ``word-of-mouth" effect on others.
Here, we are given an undirected social network graph $G = (V, W)$, where $w_{ij} \in W$ represents the weight of the edge between vertex $i$ and vertex $j$.  
If the company invests $x_{i}$ unit of cost on an user $i \in V$ then user $i$ becomes an advocate of the product with probability $1- (1-p)^{x_{i}}$ where $p \in (0,1)$ is a parameter. 
Intuitively, this signifies that for investing a unit cost to $i$, we have an extra chance that the user $i$ becomes an advocate with probability $p$~\cite{Soma:2017}.

Let $S \subset V$ be a random set of users who advocate for the product. Then the revenue with respect to $S$ is defined as 
$\sum_{i \in S} \sum_{j \in V \setminus S} w_{ij}$. Let $F : [0,1]^{|E|} \rightarrow \mathbb{R}$ be the expected revenue obtained in this model, that is
\begin{align*}
F(\vect{x}) &= \mathbb{E}_S \bigg[\sum_{i \in S} \sum_{j \in V \setminus S} w_{ij} \bigg] = \sum \limits_{i} \sum \limits_{j: i \not = j} w_{ij} (1-(1-p)^{x_i}) (1-p)^{x_j}
\end{align*}
It has been shown that $F$ is a non-monotone DR-submodular function~\cite{Soma:2017}.


\begin{figure*}[h]
\vspace{-2.5cm}
\centerline {
	\parbox{0.5\textwidth}{\includegraphics [width=0.5\textwidth] {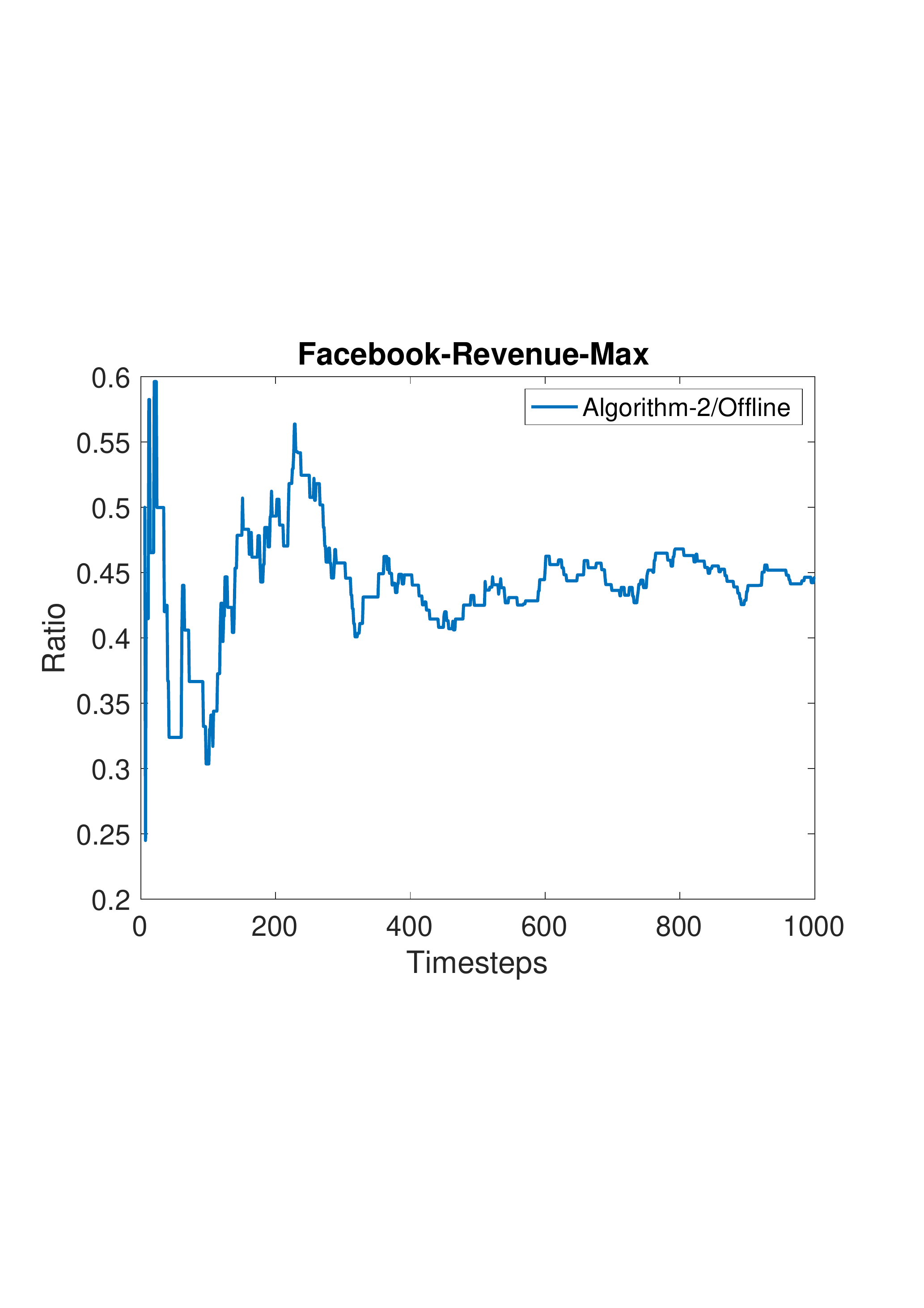}}
	\parbox{0.5\textwidth}{\includegraphics [width=0.5\textwidth] {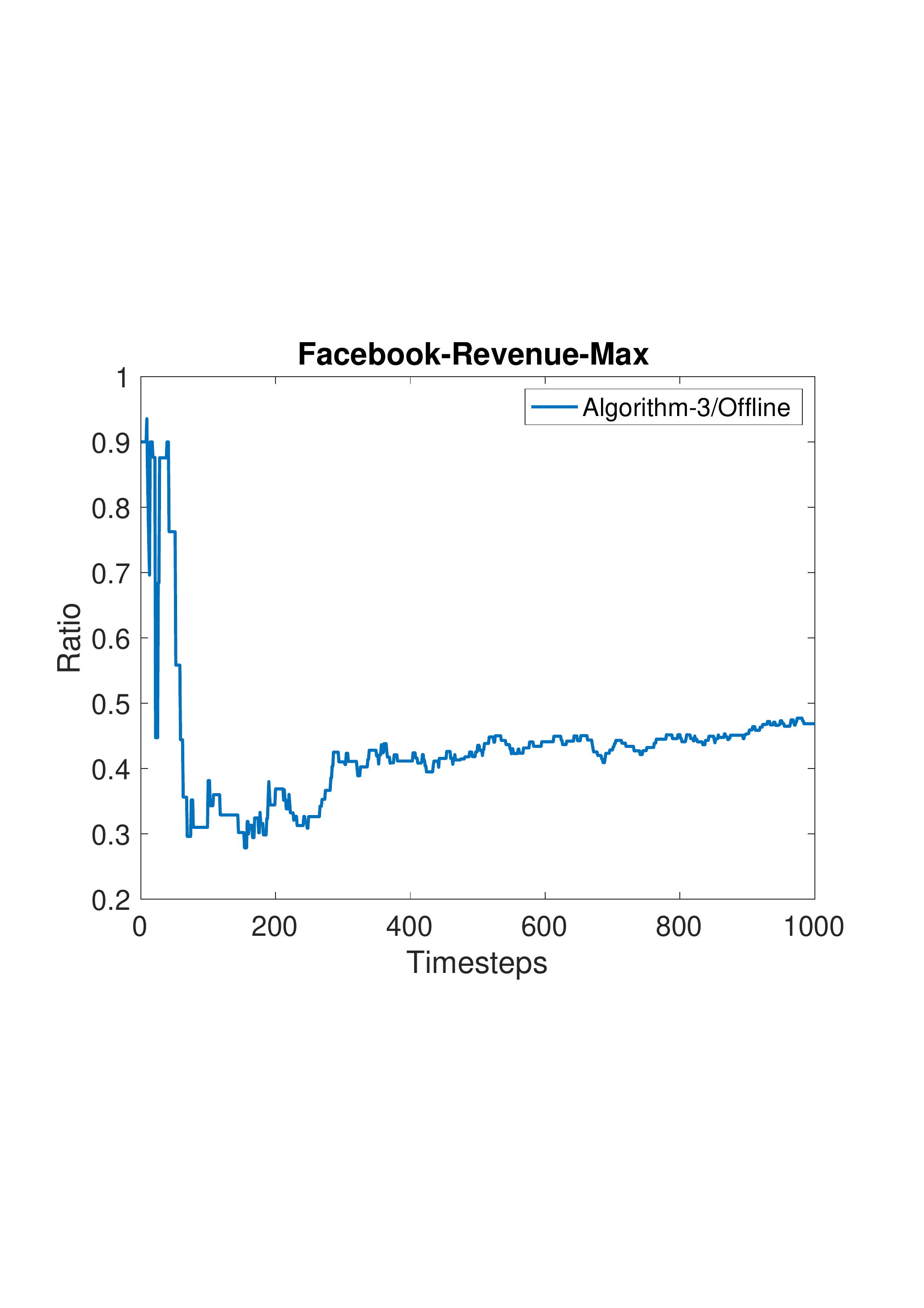}}
	}
\vspace{-2.3cm}
\centerline
{
\parbox{0.5\textwidth}{\centering{(a)}}
\parbox{0.5\textwidth}{\centering{(b)}}
}
\caption{Performance ratio for online revenue maximization over (a) a down-closed polytope; 
and (b) a general polytope.
} 
\label{rw-ratio}
\end{figure*}


%
In our setting, we consider the online variant of the revenue maximization on a social network where at time $t$ the weight of an edge is given $w^t_{ij} \in \{0,1\}$. The experiments are performed on the Facebook dataset that contains $20$K users (vertices) and $1$M relationships (edges). We choose the number of time steps to be $T=1000$. At each time $t \in {1, \ldots, T}$, we randomly uniformly select $2000$ vertices $V^t \subset V$, independently of $V^1,\ldots,V^{t-1}$, and construct a batch $B_t$ with edge-weights $w^t_{ij} = 1$ if and only if $i,j \in V^t$ and edge $(i,j)$ exists in the Facebook dataset. In case if $i$ or $j$ do not belong to $V^t$, $w^t_{ij} = 0$. We set $p = 0.0001$.  In the first experiment, 
we impose a maximum investment constraint on the problem such that $\sum_{i \in V} x_i \leq 1$. This, in addition to $x_i \geq 0, \forall i \in V$ constitutes a \emph{down-closed} feasible convex set. For the general convex set, we impose an additional minimum investment constraint on the problem such that $\sum_{i \in V} x_i \geq 0.1$. 

For comparison purposes, we chose  (offline) Frank-Wolfe algorithm as the benchmark. 
This algorithm is shown to be competitive both in theory and in practice for maximizing (offline) non-monotone DR-submodular functions over down-closed convex sets~\cite{BianLevy17:Non-monotone-Continuous} and over general convex sets \cite{DurrThang20:Non-monotone-DR-submodular}. 
In Figure~\ref{rw-ratio} we show the ratio of between the objective value achieved by the online algorithm and that of the benchmark over the down-closed convex set (Figure~\ref{rw-ratio}(a)) and the general convex set (Figure~\ref{rw-ratio}(b)). 
These experiments conform with the theoretical guarantees. 

\section{Conclusion}

In this paper, we considered the regret minimization problems while maximization online arriving non-monotone DR-submodular functions over convex sets. We presented online algorithms that achieve a $1/e$-approximation and $1/2$-approximation ratios with the regret of $O(T^{2/3})$ and $O(\sqrt{T})$ for maximizing DR-submodular functions over any down-closed convex set and over the hypercube, respectively.  Both of these algorithms are built upon on our novel idea of lifting and solving the optimization problem (the inherent oracles) in higher dimension.  Moreover, we presented an online algorithm that achieves an approximation guarantee (depending on the search space) for the problem of maximizing non-monotone continuous DR-submodular functions over a \emph{general} convex set (not necessarily down-closed). 
Finally,  we run experiments to verify the performance of our algorithms on a social revenue maximization problem on a Facebook user-relationship dataset. Interesting direction for future work is to consider the non-monotone DR-submodular maximization 
in the bandit setting by combining our approach with algorithms for monotone DR-submodular functions in \cite{Zhang:2019}.

%

\bibliographystyle{plain}
\bibliography{submodular}




\newpage
\appendix

\section{Removing Knowledge of $T$}
\label{know-T}
In this section, we show how to remove the assumption on the knowledge of $T$ for Algorithm \ref{algo:online-pf}. 
The procedure for Algorithms \ref{algo:online-FW} is similar. 
In particular, we use the standard doubling trick (for example \cite{ChenZhang19:Projection-Free-Bandit}) where Algorithm~\ref{algo:online-pf} is invoked repeatedly with a doubling time horizon. 

\begin{algorithm}[ht]
\begin{flushleft}
\textbf{Input}:  A convex set $\mathcal{K}$ and Algorithm~\ref{algo:online-pf}
\end{flushleft}
\vspace{-0.25cm}
\begin{algorithmic}[1]
\FOR{$m = 0,1, 2, \ldots$}
\STATE  $L := 2^{m+1}$
\STATE  Run Algorithm~\ref{algo:online-pf} with horizon $2^{m}$, from the $(2^{m}+1)$-th iteration to 
	the $2^{m+1}$-th iteration.
\STATE Let $\vect{x}^{t}$ for $2^{m}+1 \leq t \leq 2^{m+1}$ be the solution fo Algorithm~\ref{algo:online-pf}.
\ENDFOR
\end{algorithmic}
\caption{Meta-Frank-Wolfe with Doubling Trick}
\label{algo:double-trick}
\end{algorithm}

\begin{theorem}
Given Theorem~\ref{thm:online-general}, the following inequality holds true for Algorithm \ref{algo:double-trick}:
\begin{align*}
\sum \limits_{t = 1}^{T} \mathbb{E} \left[F^t(\vect{x}^t)\right] 
\geq & \left( \frac{1}{3\sqrt{3}} \right) (1 -\| \vect{x}_{0} \|_{\infty}) \max_{\vect{x}^* \in {\cal K}} \sum \limits_{t = 1}^{T} F^t(\vect{x}^*) 
		- O \biggl ( \frac{(\beta D + G + \sigma) D T}{\ln T} \biggr)
\end{align*} 
\end{theorem}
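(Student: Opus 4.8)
The plan is to run a standard doubling-trick analysis that reduces the unknown-horizon guarantee to the fixed-horizon guarantee of Theorem~\ref{thm:online-general} applied epoch by epoch. First I would partition the time steps into epochs, where epoch $m$ consists of the steps $\mathcal{T}_m = \{2^m+1, \ldots, 2^{m+1}\}$ and hence has length $T_m = 2^m$; the single step $t=1$ contributes only a bounded additive constant (as $F^t$ is bounded over $\mathcal{K}$) and can be folded into the error term. The total number of epochs is $M = O(\log T)$, with $2^M = \Theta(T)$. Within epoch $m$ the algorithm invokes Algorithm~\ref{algo:online-pf} afresh with horizon $T_m$ and $L = 2^{m+1} = O(T_m)$, reinitializing the online linear oracles, so that the hypotheses of the ``in particular'' clause of Theorem~\ref{thm:online-general} (namely $L = O(T_m)$ and the online-gradient-descent regret $\mathcal{R}^{\mathcal{E}}_{T_m} = O(\sqrt{T_m})$) hold on each epoch separately.

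Next I would apply Theorem~\ref{thm:online-general} inside each epoch. Writing $\vect{x}^*_m \in \arg\max_{\vect{x} \in \mathcal{K}} \sum_{t \in \mathcal{T}_m} F^t(\vect{x})$ for the best fixed point on epoch $m$, the theorem gives, for each $m$,
\begin{align*}
\sum_{t \in \mathcal{T}_m} \mathbb{E}\left[F^t(\vect{x}^t)\right]
\geq \frac{1 - \|\vect{x}_0\|_\infty}{3\sqrt{3}} \sum_{t \in \mathcal{T}_m} F^t(\vect{x}^*_m)
- O\!\left(\frac{(\beta D + G + \sigma)D\, T_m}{\ln L}\right) - O(\sqrt{T_m}).
\end{align*}
The crucial observation is that each per-epoch optimum dominates the global fixed optimum restricted to that epoch, i.e. $\sum_{t \in \mathcal{T}_m} F^t(\vect{x}^*_m) \geq \sum_{t \in \mathcal{T}_m} F^t(\vect{x}^*)$ for the global maximizer $\vect{x}^*$. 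Summing the approximation terms over all epochs then reassembles $\frac{1-\|\vect{x}_0\|_\infty}{3\sqrt{3}} \sum_{t=1}^T F^t(\vect{x}^*)$, exactly the leading term of the claim.

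It then remains to sum the error terms across epochs. Using $\ln L = (m+1)\ln 2 \geq \ln T_m$ and $T_m = 2^m$, the dominant error contributes $\sum_{m=0}^{M} O\!\big(\frac{2^m}{m+1}\big)$ up to the constant factor $(\beta D + G + \sigma)D/\ln 2$. The hard part will be checking that this geometric-over-linear sum is $O(2^M/M) = O(T/\ln T)$ rather than merely $O(T)$: I would split the sum at $m = M/2$, bounding the upper half by its last term using that the consecutive ratio $\frac{2^m/m}{2^{m-1}/(m-1)} = 2(1-1/m)$ exceeds a constant strictly larger than $1$, and bounding the lower half crudely by $O(M\, 2^{M/2}) = o(2^M/M)$. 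The regret terms sum to $\sum_{m=0}^{M} O(\sqrt{T_m}) = O(\sqrt{T})$ (a geometric sum dominated by its last term), which is $o(T/\ln T)$ and is therefore absorbed. Combining the two pieces yields the stated bound, namely $\frac{1-\|\vect{x}_0\|_\infty}{3\sqrt{3}} \max_{\vect{x}^* \in \mathcal{K}} \sum_{t=1}^T F^t(\vect{x}^*) - O\!\big(\frac{(\beta D + G + \sigma)DT}{\ln T}\big)$.
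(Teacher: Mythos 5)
Your proposal is correct and follows essentially the same route as the paper: partition time into doubling epochs, apply Theorem~\ref{thm:online-general} on each epoch, use that the per-epoch optimum dominates the global fixed optimum restricted to that epoch, and bound $\sum_{m} 2^{m}/(m+1) = O(2^{M}/M) = O(T/\ln T)$. Your split-at-$m=M/2$ argument actually justifies the summation bound that the paper merely asserts, and your handling of the uncovered step $t=1$ and of the $O(\sqrt{T_m})$ regret terms is a harmless extra bit of care.
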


\begin{proof}
From Theorem~\ref{thm:online-general}, for each $m$, it follows that 
\begin{align*}
\sum \limits_{t = 2^m+1}^{2^{m+1}} \mathbb{E} \left[F^t(\vect{x}^t)\right] 
\geq & \left( \frac{1}{3\sqrt{3}}  \right) (1 -\| \vect{x}_{0} \|_{\infty}) \max_{\vect{x}^* \in {\cal K}} \sum \limits_{t = 2^m+1}^{2^{m+1}} F^t(\vect{x}^*) 
		- O \biggl ( \frac{(\beta D + G + \sigma) D 2^{m+1}}{(m+1) \ln 2} \biggr). 
\end{align*} 

Summing this quantity of $m = 0, 1, 2, \ldots, \lceil \log_2(T+1) \rceil - 1$, we have that 
\begin{align*}
\sum\limits_{t = 1}^{T} & \mathbb{E} \left[F^t(\vect{x}^t)\right] \\
&\geq  \left(\frac{1}{3\sqrt{3}} \right) (1 -\| \vect{x}_{0} \|_{\infty}) \max_{\vect{x}^* \in {\cal K}} 
\sum \limits_{t = 1}^{T} F^t(\vect{x}^*) 
		- \sum_{m = 0}^{ \lceil \log_2(T+1) \rceil -1}  O \biggl ( \frac{(\beta D + G + \sigma) D 2^{m+1}}{(m+1) \ln 2} \biggr) \\
&\geq  \left( \frac{1}{3\sqrt{3}} \right) (1 -\| \vect{x}_{0} \|_{\infty}) \max_{\vect{x}^* \in {\cal K}} 
\sum \limits_{t = 1}^{T} F^t(\vect{x}^*) 
		- \sum_{m = 0}^{ \lceil \log_2(T+1) \rceil -1}  O \biggl ( \frac{(\beta D + G + \sigma) D 2^{m+1}}{\ln 2} \biggr) \\
&\geq \left( \frac{1}{3\sqrt{3}} \right) (1 -\| \vect{x}_{0} \|_{\infty}) \max_{\vect{x}^* \in {\cal K}} 
\sum \limits_{t = 1}^{T} F^t(\vect{x}^*) 
		-  O \biggl ( \frac{(\beta D + G + \sigma) D}{\ln 2} \biggr) \sum_{m = 1}^{ \lceil \log_2(T+1) \rceil} \left( \frac{2^m}{m} \right)\\
&\geq \left( \frac{1}{3\sqrt{3}} \right)  (1 -\| \vect{x}_{0} \|_{\infty}) \max_{\vect{x}^* \in {\cal K}} 
\sum \limits_{t = 1}^{T} F^t(\vect{x}^*) 
		-  O \biggl ( \frac{(\beta D + G + \sigma) D}{\ln 2} \biggr) \frac{2. 2^{\lceil \log_2(T+1) \rceil}}{\lceil \log_2(T+1) \rceil} \\
&\geq \left( \frac{1}{3\sqrt{3}} \right) (1 -\| \vect{x}_{0} \|_{\infty}) \max_{\vect{x}^* \in {\cal K}} 
\sum \limits_{t = 1}^{T} F^t(\vect{x}^*) 
		-  O \biggl ( \frac{(\beta D + G + \sigma) D T}{\ln T} \biggr).	
\end{align*} 
\end{proof}

\end{document}